
\documentclass{article}

\usepackage{microtype}
\usepackage{graphicx}
\usepackage{subfigure}
\usepackage{booktabs} 

\usepackage{hyperref}


\usepackage[accepted]{icml2024}

\makeatletter

\makeatother

\usepackage{algorithmicx}
\usepackage{algpseudocode}

\usepackage{float}
\floatstyle{ruled}
\newfloat{algorithm}{tbp}{loa}
\floatname{algorithm}{Algorithm}


\usepackage{amsmath}
\usepackage{amssymb}
\usepackage{mathtools}
\usepackage{amsthm}
\usepackage{comment}
\usepackage{multicol}
\usepackage[capitalize,noabbrev]{cleveref}

\theoremstyle{plain}
\newtheorem{theorem}{Theorem}

\newtheorem{lemma}{Lemma}
\newtheorem{cond}{Assumption}
\newtheorem{corollary}{Corollary}
\theoremstyle{definition}
\newtheorem{definition}{Definition}

\theoremstyle{remark}
\newtheorem{remark}{Remark}
\newtheorem{example}{Example}[section]

\newcommand{\Mean}{{\mathbb{E}}}
\newcommand{\Var}{{\mbox{Var}}}
\newcommand{\Cov}{{\mbox{Cov}}}

\newcommand{\prob}{{\mathbb{P}}}
\DeclareMathOperator*{\argmin}{arg\,min}

\newcommand{\widebar}[1]{{\overline{#1}}}

\usepackage[textsize=tiny]{todonotes}

\icmltitlerunning{Combining Experimental and Historical Data for Policy Evaluation}

\begin{document}

\twocolumn[
\icmltitle{Combining Experimental and Historical Data for Policy Evaluation}



\icmlsetsymbol{equal}{*}

\begin{icmlauthorlist}
\icmlauthor{Ting Li}{equal,sufe}
\icmlauthor{Chengchun Shi}{equal,Lse}
\icmlauthor{Qianglin Wen}{yunnanU}
\icmlauthor{Yang Sui}{sufe}
\icmlauthor{Yongli Qin}{didi}
\icmlauthor{Chunbo Lai}{didi}
\icmlauthor{Hongtu Zhu}{unc}
\end{icmlauthorlist}

\icmlaffiliation{sufe}{School of Statistics and Management, Shanghai University of Finance and Economics}
\icmlaffiliation{Lse}{Department of Statistics, London School of Economics and Political Science}
\icmlaffiliation{yunnanU}{Yunnan Key Laboratory of Statistical Modeling and Data Analysis, Yunnan University}
\icmlaffiliation{didi}{Didi Chuxing}
\icmlaffiliation{unc}{Department of Biostatistics, The University of North Carolina at Chapel Hill}

\icmlcorrespondingauthor{Hongtu Zhu}{htzhu@email.unc.edu}

\icmlkeywords{Off-policy Evaluation, Data Integration, Causal Inference, Reinforcement Learning}

\vskip 0.3in
]



\printAffiliationsAndNotice{\icmlEqualContribution} 

\begin{abstract}
 This paper studies policy evaluation with multiple data sources, especially in scenarios that involve one experimental dataset with two arms, complemented by a historical dataset generated under a single control arm.
We propose novel data integration methods
that linearly integrate base policy value estimators constructed based on the experimental and historical data, with weights optimized to minimize the mean square error (MSE) of the resulting combined estimator. We further apply the pessimistic principle to obtain more robust estimators, and extend these developments to sequential decision making. 
Theoretically, we establish non-asymptotic error bounds for the MSEs of our proposed estimators, and derive their oracle, efficiency and robustness properties across a broad spectrum of reward shift scenarios. Numerical experiments and real-data-based analyses from a ridesharing company demonstrate the superior performance of the proposed estimators.
\end{abstract}

\section{Introduction}
{\bf Motivation}. This paper seeks to establish data-driven approaches for evaluating the effectiveness of a newly target policy against a conventional control. A basic approach relies solely on experimental data to formulate the treatment effect estimator, which we refer to as the experimental-data-only (EDO) estimator. However, the often limited sample size of experimental data prompts the need to 
incorporate 
auxiliary external datasets to enhance the precision of the treatment or policy effect estimator. We provide three illustrative examples to demonstrate this concept.

\textbf{Example 1: A/B testing with historical data}. A/B testing is frequently used in modern technology companies such as Amazon, eBay, Facebook, Google, LinkedIn, Microsoft, Netflix, Uber and Didi
for comparing new products/strategies against existing ones \citep[see][for a recent review]{larsen2023statistical}.
A common challenge in A/B testing is the limited experiment duration coupled with weak treatment effects. For example, in the ridesharing industry, experiments usually last no more than two weeks, and the effect sizes often range from 0.5\% to 5\% \citep{xu2018large,tang2019deep,zhou2021graph,qin2022reinforcement}. Yet, prior to these experiments, companies often have access to a substantial volume of historical data under the current policy. Leveraging such historical data can significantly improve the efficiency of A/B testing.


\textbf{Example 2: Meta analysis}.  
In medicine, data often spans across multiple healthcare institutions. A notable example 
is in the schizophrenia study that examined the efficacy of cognitive-behavioral therapy in early-stage schizophrenia patients \citep{tarrier2004cognitive}. This study was a multicenter randomized controlled trial (RCT) executed in three treatment centers (Manchester, Liverpool, and North Nottinghamshire). Given that each center had a relatively small cohort, with fewer than 100 participants, pooling data from all sources becomes crucial to enhance causal learning.

\textbf{Example 3: Combining observational data}. RCTs are widely regarded as the benchmark for learning the causal impacts of interventions or treatments on specific outcomes \citep{imbens2015causal}. However, RCTs often face practical challenges such as high costs and time constraints, resulting in limited participant numbers. Conversely, observational data, derived from sources such as biobanks or electronic health records, boast larger sample sizes.  Integrating both data presents a unique opportunity to improve the statistical learning efficiency \citep{colnet2020causal}.

\textbf{Challenge}.   The challenge of merging multiple data sources often lies in the distributional shifts that occur between them. In the domain of ridesharing and healthcare, datasets from different time frames frequently display temporal non-stationarity \citep{wan2021pattern,li2022testing,wang2023robust}. 
In the 
schizophrenia 
study, 
variations across various treatment centers and ethnic groups introduce heterogeneity, leading to distributional shifts \citep{dunn2007modelling,shi2018maximin}. Furthermore, the integration of RCTs with observational data introduces the potential for unmeasured confounding within the observational data 
\citep{10.1214/09-SS057}. Neglecting these distributional shifts would produce biased estimations of treatment effects.

\textbf{Contributions}. This paper focuses on the application of A/B testing with historical data. However, the methodologies and theoretical frameworks we develop are equally applicable to the other two examples discussed earlier. Our contributions are summarized as follows:

{\bf Methodologically,} we propose several weighted estimators for data integration, including both pessimistic and non-pessimistic estimators, covering both non-dynamic settings (also referred to as contextual bandits in the OPE literature) and sequential decision making. 
We demonstrate the superior empirical performance of these estimators through simulations and real-data-based analyses 
\footnote{R code implementing the proposed weighted estimators is available at
\url{https://github.com/tingstat/Data_Combination}.}.



{\bf Theoretically,} we derive various statistical properties (e.g., efficiency, robustness and oracle property) of the proposed estimators across a wide range of scenarios, accommodating varying degrees of reward shift between the experimental data and the historical data in mean -- from zero to small (the shift's order is much smaller than $n^{-1/2}$ ), moderate (the 
shift's order falls between $n^{-1/2}$ and $n^{-1/2} \sqrt{\log(n)}$), and large (the shift is substantially larger than $n^{-1/2}\sqrt{\log(n)}$), where $n$ represents the effective sample size; see Table \ref{table:summary_theoretical_results} for a summary. To the contrary, existing works impose more restrictive conditions. They either require the mean shift to be zero, or sufficiently large for clear detection \citep[see e.g.,][]{cheng2021adaptive,han2021federated,dahabreh2023efficient,li2023improving}. 
In summary, our findings suggest that the non-pessimistic estimator tends to be effective in scenarios where the reward shift is minimal or substantial. In contrast, the pessimistic estimator demonstrates greater robustness, particularly in situations with moderately large reward shifts.

\begin{table}
\begin{center}
\small 
\caption{Summary of the properties of MSEs of our estimators.}
\label{table:summary_theoretical_results}
\tabcolsep 2pt
\begin{tabular}{l|l|l}
	\hline
	\hline
Reward shift  &   Non-pessimistic estimator &  Pessimistic estimator \\
 \hline
 Zero    & Close to efficiency bound & Same order to oracle MSE \\ 
Small    &   Close to oracle MSE  & Same order to oracle MSE            \\
Moderate &   May suffer a large MSE         & Oracle property \\
Large    & Oracle property    &  Oracle property \\
	\hline \hline     
\end{tabular}
\end{center}
\footnotesize{The {\bf oracle} MSE denotes MSE of the oracle estimator that use the {\bf best} weight to combine historical and experimental data whereas the {\bf efficiency bound} is the smallest achievable MSE among a broad class of regular estimators \citep{tsiatis2006semiparametric}.} 
\vspace{-0.25in}
\end{table}

\section{Related Work}
{\bf Data integration in causal inference.}
There is a growing literature on combining randomized data with other sources of datasets; see \citet{degtiar2023review} and \citet{shi2023data} for reviews. These methods can be broadly classified into three categories, as outlined in the introduction:\vspace{-0.5em}
\begin{enumerate}
    \item 
    The first category leverages historical datasets collected under the control \citep[see e.g.,][]{pocock1976combination,cuffe2011inclusion,viele2014use,van2018including,schmidli2020beyond,cheng2023enhancing,liu2023causal,scott2024borrowing}. In particular, assuming no reward shift, \citet{li2023improving} developed a semi-parametric efficient estimator whose MSE achieves the efficiency bound. In contrast, our methods are more flexible, allowing the reward shift to exist.\vspace{-0.5em}
    \item The second category is meta analysis where the external data is collected from different trials \citep{schmidli2014robust,dersimonian2015meta, hasegawa2017myth, zhang2019bayesian,steele2020importance,lian2023accounting,rott2024causally}.
    This category includes a notable subset of methods that apply $\ell_1$-type penalty functions for selecting external data \citep{dahabreh2020towards,han2021federated,han2023multiply}. However, their performance is sensitive to the choice of the tuning parameter, as shown in our numerical study (see  Figure \ref{fig:real_data_based_Lasso_all}).
    \vspace{-0.5em}
\item The last category incorporates observational data to enhance causal learning \citep[see e.g.,][]{hartman2015sate,peysakhovich2016combining,kallus2018removing, athey2020combining,gui2020combining,yang2020combining,yang2020improved,wu2022integrative,lee2023improving}.
\end{enumerate}
\begin{remark}
The first category of research is closely related to our work, while the focus of the last two categories differs from ours. Additionally, all aforementioned studies concentrate on the non-dynamic setting framework. Our research, however, broadens this perspective by investigating sequential decision making where treatments are assigned sequentially over time -- a typical scenario studied in reinforcement learning \citep[RL,][]{sutton2018reinforcement}.
\end{remark}


{\bf Offline policy learning.}  
The proposed pessimistic estimator is inspired by recent advancements in offline policy learning, which aims to learn an optimal policy from a pre-collected offline dataset without active exploration of the environment. 
Existing methods typically adopt the ``pessimistic principle" to mitigate the discrepancy between the behavior policy that generates the offline data and the optimal policy. In contrast to the optimistic principle widely used in contextual bandits and online RL, the pessimistic principle favors actions whose values are less uncertain.

In non-dynamic settings, pessimistic algorithms can generally be categorized into value-based and policy-based methods. Value-based methods learn a conservative reward function to prevent overestimation and compute the greedy policy with respect to this estimated reward function \citep{buckman2020importance,jin2021pessimism, rashidinejad2021bridging,zhou2023optimizing}. Conversely, policy-based methods directly search the optimal policy by either restricting the policy class to stay close to the behavior policy or optimizing the policy that maximizes an estimated lower bound of the reward function \citep{swaminathan2015batch,swaminathan2015self,wu2018variance,kennedy2019nonparametric,aminian2022semi,jin2022policy,zhao2023positivity}.

Furthermore, the pessimistic principle has been extensively adopted in offline RL, accommodating more complex sequential settings \citep{kumar2019stabilizing,kumar2020conservative,yu2020mopo,uehara2021pessimistic,xie2021bellman,bai2022pessimistic,rigter2022rambo,shi2022pessimistic,yin2022near,zhou2023bi}.

{\bf Off-policy evaluation (OPE).}
Finally, our work is closely related to OPE in contextual bandits and RL, which aims to estimate the mean outcome of a new target policy using data collected by a different policy \citep[see][for reviews]{dudik2014doubly,uehara2022review}. It has been recently employed to conduct A/B testing in sequential decision making \citep{bojinov2019time,farias2022markovian,tang2022reinforcement,shi2023multiagent,shi2023dynamic,li2024evaluating,wen2024analysis}. 
Existing approaches in this field can generally be classified into three main groups:\vspace{-0.5em}
\begin{enumerate}
    \item \textbf{Direct methods}: these methods learn a reward or value function from offline data to estimate the policy value \citep{bradtke1996linear,le2019batch,feng2020accountable,luckett2020estimating,hao2021bootstrapping,liao2021off,chen2022well,shi2022statistical,bian2023off,uehara2024future}.\vspace{-0.5em}
    \item \textbf{Importance sampling (IS) methods}: this group employs the IS ratio to adjust the observed rewards, accounting for the discrepancy between the target policy and the behavior policy \citep{heckman1998matching,hirano2003efficient,thomas2015high,liu2018breaking,dai2020coindice,wang2023projected,hu2023off}.\vspace{-0.5em}
    \item \textbf{Doubly robust (DR) methods}: these strategies integrate the principles of direct methods and IS  \citep{tan2010bounded,dudik2011doubly,van2011cross,zhang2012robust,jiang2016doubly,thomas2016data,chernozhukov2018double,farajtabar2018more,oprescu2019orthogonal,shi2020breaking,uehara2020minimax,kallus2022efficiently,liao2022batch}. Their validity relies on the consistency of either the direct method or IS, but not necessarily both. We refer to such a property as the double robustness property. \vspace{-0.5em}
\end{enumerate}
We note that none of the aforementioned work studied data integration, which is the central theme of this paper.

\section{Estimators in Non-dynamic Setting}\label{sec:contextagnosticest}
{\bf Summary.}  In this section, we present our newly developed non-pessimistic and pessimistic estimators, tailored for non-dynamic settings. These estimators differ in their approach to weighting historical and experimental data. The non-pessimistic estimator determines its weight by minimizing an estimated MSE, whereas the pessimistic estimator optimizes its weight to minimize a ``pessimistic'' version of the estimated MSE.  We will discuss adaptations of these estimators for sequential decision making later.

{\bf Data.} The offline data comprises an experimental dataset $\mathcal{D}_e$ and a historical dataset $\mathcal{D}_h$. In the experimental setting, the decision maker observes certain contextual information at each time point, denoted by $S_e$, and makes a choice, represented by $A_e$,  between a baseline control policy $A_e=0$ and a target policy $A_e=1$, resulting in an immediate reward, $R_e$. Thus, the experimental data contains a set of i.i.d. context-action-reward triplets. In contrast, the historical data consists of i.i.d. context-reward pairs $O_h=(S_h, R_h)$ generated solely under the control policy.

{\bf Objective.} Our objective is to estimate the difference between the mean outcome under the target policy and that under the control in the experimental data. This estimand is commonly referred to as the average treatment effect (ATE). Since no unmeasured confounders exists during the experiment, the ATE can be represented by $\tau_e=\Mean [r_e^*(1,S_e) - r_e^*(0,S_e)]$,  where $r_e^*(a,s)=\Mean (R_e|A_e=a,S_e=s)$.

{\bf Two base estimators}. We next introduce two base estimators for ATE. The first estimator is the EDO estimator $\widehat{\tau}_e$, which exclusively uses $\mathcal{D}_e$ to learn ATE. The second estimator $\widehat{\tau}_h$, on the other hand, incorporates $\mathcal{D}_h$ into the ATE estimation. Specifically, it uses $\mathcal{D}_e$ to estimate the target policy's value and $\mathcal{D}_h$ to estimate the control policy's value. 

Mathematically, let $O_e$ be a shorthand for the triplet $(S_e,A_e,R_e)$ in the experimental data. We define an estimation function $\psi_e(\bullet)$ for $O_e$ as follows
\begin{eqnarray*}
    \sum_{a=0}^1 (-1)^{a-1} \Big\{r_e(a,S_e)+\nu^a(A_e|S_e) [R_e-r_e(A_e,S_e)]\Big\},
\end{eqnarray*}
where $r_e$ represents our posit model for the reward function $r_e^*$. Moreover, $\nu^a$ denotes the model for the IS ratio $\mathbb{I}(A_e=a)/\mathbb{P}(A_e=a|S_e)$, where $\mathbb{I}(\bullet)$ is the indicator function and the denominator is the behavior policy (or the propensity score) that generates $\mathcal{D}_e$. 
The following definition gives the EDO estimator.
\begin{definition}[{\bf Experimental-data-only Estimator}]
The doubly robust estimator based on the experimental data alone is defined as 
\begin{eqnarray}\label{eqn:taue}
    \widehat{\tau}_e=\frac{1}{|\mathcal{D}_e|}\sum_{O_e\in \mathcal{D}_e}\psi_e(O_e).
\end{eqnarray}
\end{definition}


\begin{remark}
    $\widehat{\tau}_e$ covers a range of estimators including the direct method estimator, the IS estimator, and the DR estimator. In its general form, without any specific restrictions, $\widehat{\tau}_e$ functions as the doubly robust estimator. It can be easily verified that $\widehat{\tau}_e$ is consistent to ATE when either the reward function or the IS ratio is correctly specified. Meanwhile, 
    $\widehat{\tau}_e$ can be simplified to the direct method estimator by setting $\nu^0=\nu^1=0$, and to the IS estimator when  $r_e$ is set to $0$.
\end{remark}
Define $r_h^*(\bullet)=\Mean (R_h|S_h=\bullet)$ as the reward function in the historical data and let $\mu^*(\bullet)$ denote the density ratio of the probability mass/density function of $S_e$ over that of $S_h$. 
The historical data distribution might differ from the experimental data distribution in the following two aspects:\vspace{-0.5em}
\begin{enumerate}
    \item \textbf{Reward shift}: the reward function $r_h^*(\bullet)$ might differ from $r_e^*(0,\bullet)$ conditional on the control.\vspace{-0.5em}
    \item \textbf{Covariate shift}: the distribution of $S_e$ might differ from that of $S_h$, i.e., $\mu^*\neq 1$. \vspace{-0.5em}
\end{enumerate}
Let $r_h$ and $\mu$ denote the posit models for $r_h^*$ and $\mu^*$. 
We next give the definition of historical-data-based estimator  $\widehat{\tau}_h$
\begin{definition}[{\bf Historical-data-based Estimator}]
   The doubly robust estimator that uses the historical data to estimate the control policy's value is defined as
   \begin{eqnarray*}
   \widehat{\tau}_h=\frac{1}{|\mathcal{D}_e|}\sum_{O_e\in \mathcal{D}_e} \psi_{h,1}(O_e) 
    -\frac{1}{|\mathcal{D}_h|}\sum_{O_h\in \mathcal{D}_h}\psi_{h,2}(O_h),
\end{eqnarray*}
where 
$\psi_{h,1}(O_e) = r_e(1,S_e)+\nu^1(A_e|S_e) [R_e-r_e(A_e,S_e)] - r_h(S_e)$
and $\psi_{h,2}(O_h) = \mu(S_h) [R_h-r_h(S_h)]$.
\end{definition}

\begin{remark}
By addition and subtraction, $\widehat{\tau}_h$ is unbiased to the difference between  $\psi_{h,1}(O_e)+r_h(S_e)$ 
and $r_h(S_e)+\psi_{h,2}(O_h)$. To elaborate $\widehat{\tau}_h$, it is crucial to understand these two estimating functions.  
The first function 
uses the experimental data to construct the doubly robust estimator for the value of the target policy, while the second function incorporates the historical data to construct the doubly robust estimator for the average outcome under the control policy. These two terms are unbiased estimators for $\mathbb{E}[r_e(1, S_e)]$ and $\mathbb{E}[r_h(S_e)]$, respectively, provided that either the density ratio 
or the reward function 
is correctly specified.



Additionally, the use of   $r_h(S_e)+\psi_{h,2}(O_h)$ partially addresses the distributional shift between the experimental and historical data. Specifically, by using $S_e$ instead of $S_h$ in $r_h(S_e)$ and using the density ratio $\mu$ in $\psi_{h,2}$, it addresses the covariate shift, leading to an unbiased estimator toward $\Mean [r_h(S_e)]$ instead of $\Mean [r_h(S_h)]$. However, it  introduces a potential bias equal to 
\begin{eqnarray}\label{eqn:bias}
    b_h=\Mean [r_e(0,S_e)]-\Mean [r_h(S_e)].
\end{eqnarray} 
This parameter represents the mean reward shift between the experimental and historical data, serving as a pivotal metric for quantifying discrepancies between the two datasets. It is also equal to the bias of the ATE estimator $\widehat \tau_h$ which incorporates the historical data to estimate the control policy's outcome. A small value of $b_h$ implies a relatively safe use of historical data to enhance the precision of the ATE estimator. Conversely, a large $b_h$ suggests caution against using historical data due to the significant bias it introduces.
\end{remark}
\vspace{-0.5em}
{\bf The proposed estimators}. Both the proposed non-pessimistic and pessimistic estimators are formulated as linear combinations of the two base estimators $\widehat{\tau}_e$ and $\widehat{\tau}_h$. 
\begin{definition}[{\bf Weighted Estimator}]
  The weighted estimator is defined as
  $$
  \widehat{\tau}_w=w \widehat{\tau}_e+(1-w) \widehat{\tau}_h
  $$
  for some properly chosen weight $w\in [0,1]$
\end{definition}
The weight is selected to minimize the MSE of the resulting estimator. Specifically, for a given $w$, according to the bias-variance decomposition, we obtain
\begin{eqnarray}\label{eqn:tauw}
    \textrm{MSE}(\widehat{\tau}_w)=\textrm{Bias}^2(\widehat{\tau}_w)+\Var(\widehat{\tau}_w),
\end{eqnarray}
where the bias is proportional to $(1-w)$, given by $-(1-w)b_h$ according to \eqref{eqn:bias} and the variance term equals 
$w^2\Var(\widehat{\tau}_e)+(1-w)^2\Var(\widehat{\tau}_h)+2w(1-w)\Cov(\widehat{\tau}_e,\widehat{\tau}_h)$.
This yields a close-form expression for \eqref{eqn:tauw}. 

We aim to estimate the oracle weight $w^*$ that minimizes \eqref{eqn:tauw}. We first note that the variance/covariance terms can be consistently estimated using the sampling variance formula\footnote{For an i.i.d. average $\bar{X}=\sum_{i=1}^n X_i/n$, its variance can be consistently estimated by $\sum_{i=1}^n (X_i-\bar{X})^2/n(n-1)$.}. It remains to estimate the reward shift bias $b_h$.

The non-pessimistic estimator employs the unbiased estimator $\widehat{b}_h = \widehat{\tau}_e - \widehat{\tau}_h$ for estimating $b_h$. Through certain derivations, this approach yields the subsequent estimator for  $w^*$: 
\begin{eqnarray}\label{eqn:non-pess}
    \widehat{w}=\frac{\widehat{b}_h^2+\widehat{\Var}(\widehat{\tau}_h)-\widehat{\Cov}(\widehat{\tau}_e,\widehat{\tau}_h)}{\widehat{\Var}(\widehat{\tau}_e)+\widehat{b}_h^2+\widehat{\Var}(\widehat{\tau}_h)-2\widehat{\Cov}(\widehat{\tau}_e,\widehat{\tau}_h)},
\end{eqnarray}
where the precise expressions for the estimated variance and covariance terms are detailed in Appendix \ref{sec::implementation}.

We next discuss the limitations of the non-pessimistic estimator. We draw a parallel with the ``offline bandit'' problem where each weight $w$ represents an arm in the bandit framework. Here, the MSE of 
$\widehat{\tau}_w$ is analogous 
to the cost of selecting an arm, with the aim being to identify the optimal arm (weight) that minimizes this cost.

For each arm, the estimated cost, $\widehat{\mathrm{MSE}}(\widehat{\tau}_w)$, is calculated by incorporating $\widehat{b}_h$ and the estimated variance/covariance values into \eqref{eqn:tauw}. The non-pessimistic estimator employs a greedy action selection method, selecting the arm with the lowest estimated cost. The reliability of this estimator is largely dependent on a uniform consistency condition, which requires that the estimated costs uniformly converge to the actual costs across all arms. However, this condition is likely to be violated, if the estimated cost for any sub-optimal arm is 
inconsistent, as depicted in Figure  \ref{fig:pessi_illustration}.

\begin{figure}
	\centering
	\includegraphics[height=3.5cm, width=8cm]{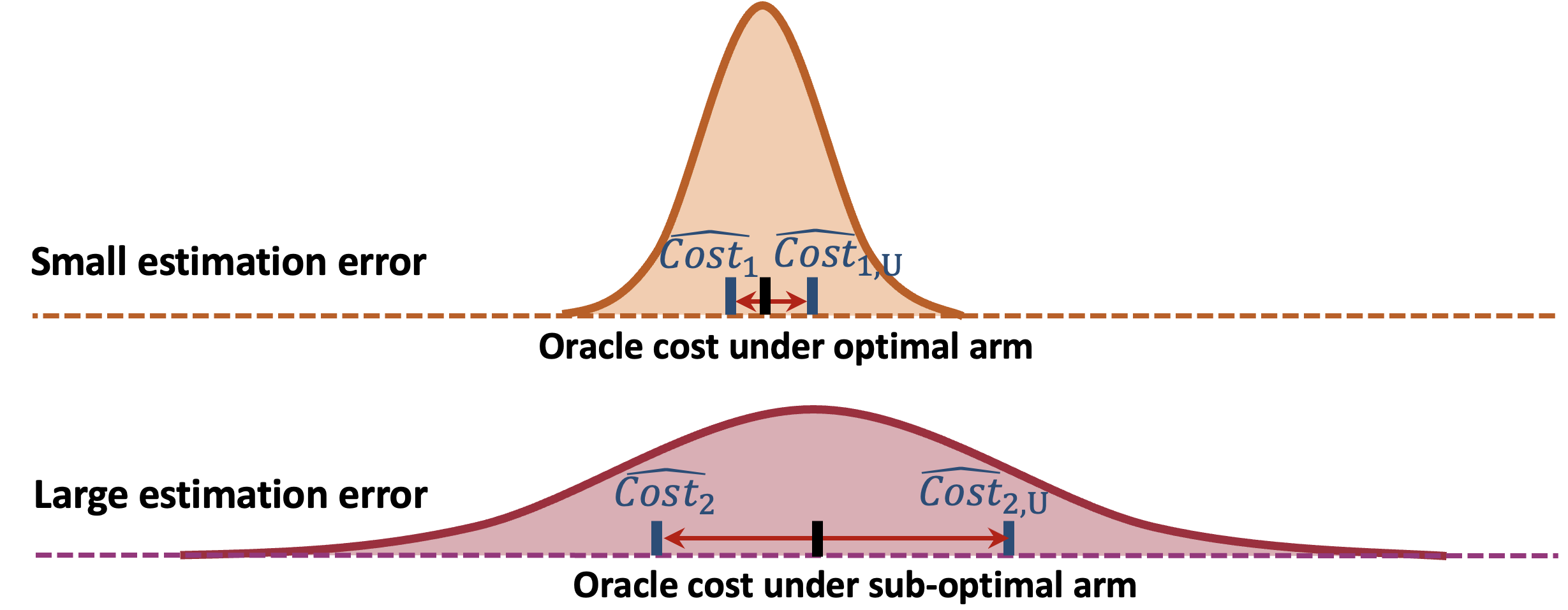}
	\vspace{-0.1 in}
	\caption{\small Distributions of estimated costs for optimal and sub-optimal arms. A key challenge arises when the estimated cost of a sub-optimal arm is inaccurately high, leading to failure of the greedy action selection method. To address this issue, we apply the pessimistic principle which takes into account the uncertainties inherent in these estimations. 
 The estimates of the cost under the two arms are given by $\widehat{Cost}_k~(k=1,2)$ with their pessimistic versions $\widehat{Cost}_{k,U} ~(k=1,2)$.
 By comparing the upper bounds of the estimated costs, we effectively identify the optimal arm.}
	\label{fig:pessi_illustration}
 \vspace{-0.2 in}
\end{figure}

In our framework, underestimating the absolute value of $b_h$ leads to lower estimated MSEs for smaller weights. As a result, the weight $\widehat{w}$ chosen by the non-pessimistic estimator tends to be smaller than the ideal (oracle) value, resulting in a significant bias in $\widehat{\tau}_{\widehat{w}}$. This 
reveals the limitations of the non-pessimistic estimator, particularly when $b_h$ is moderately large, as detailed in Table \ref{table:summary_theoretical_results} and further elaborated in Section \ref{sec:theorysampleestimator}.


The pessimistic estimator addresses this limitation by incorporating the uncertainty of cost estimation. Instead of selecting the greedy arm with the lowest estimated cost, it selects the arm based on a more pessimistic cost estimate that upper bounds the oracle cost with a high probability. This method relaxes the stringent 
uniform consistency condition, as illustrated in Figure \ref{fig:pessi_illustration}. Importantly, the consistency of the resulting estimator 
relies on the accurate estimation of the optimal arm's cost only.

In our setup, we compute an uncertainty quantifier $U$ for the estimation error $\widehat{b}_h-b_h$. It satisfies the following condition,\vspace{-0.5em}
\begin{eqnarray}\label{eqn:quantifier}
    \prob(|\widehat{b}_h-b_h|\le U)\le 1-\alpha,
\end{eqnarray}
for a given significance level $\alpha>0$. In practice, $U$ can be constructed using concentration inequalities or asymptotic normal approximation \citep{casella2021statistical}. We next use $(|\widehat{b}_h|+U)^2$ as a pessimistic estimator for $b_h^2$ and plug-in this estimator into the right-hand-side of \eqref{eqn:non-pess} to determine the weight $\widehat{w}_U$. Under the event defined in \eqref{eqn:quantifier}, $(|\widehat{b}_h|+U)^2$ serves as a valid upper bound for $b_h^2$. This leads to the pessimistic estimator $\widehat{\tau}_{\widehat{w}_U}$. We show in Section \ref{sec:pessitheory} that this estimator is more robust than the non-pessimistic estimator, particularly when $b_h$ is moderately large.

{\bf Confidence Interval.}
Under certain regularity conditions, each weighted estimator is asymptotically normal such that
$(\widehat \tau_{\widehat w} - \tau_e ) / \sqrt{ Var(\widehat \tau_{\widehat w})} \overset{d}{\rightarrow} N(0,1).$
This motivates us to consider the following Wald-type confidence interval for ATE
$$[\widehat \tau_{\widehat w}-\Phi^{-1}(1-\alpha) \sqrt{\widehat{Var}(\widehat \tau_{\widehat w})}, \widehat \tau_{\widehat w}+\Phi^{-1}(1-\alpha)\sqrt{\widehat{Var}(\widehat \tau_{\widehat w})}],$$
where $\Phi^{-1}$ is the inverse cumulative distribution function of a standard random variable, and the variance of $\widehat \tau_{\widehat w}$ is estimated based on the sampling variance formula.

\section{Extension to Sequential Decision Making}\label{sec:ext}
We next briefly outline the extension of our methods to sequential decision making. To save space, more details are given in Appendix \ref{sec:theoretical_sequential}. This extension aligns closely with our ridesharing example, where policy decisions are made sequentially over time, and past policies can influence future outcomes \citep{bojinov2023design,shi2023dynamic}. In this setting, the ATE is defined as the difference in expected cumulative rewards between the control and target policies.

The online experiment spans multiple days, with daily data summarized as sequences of state-action-reward triplets. Actions are binary, denoting either a baseline control or an experimental target policy. To account for day-to-day variations, we model the experiment as a time-varying Markov decision process. For ATE estimation, we employ the double RL estimator \citep{kallus2020double}, leading to the development of the EDO estimator. The historical data comprises state-reward pair sequences from previous days under the control policy, forming the basis for our second estimator. This estimator, also doubly robust, is used to estimate the cumulative reward under the control policy, thereby facilitating ATE calculation.  Building on the approaches outlined in Section \ref{sec:contextagnosticest}, we apply both pessimistic and non-pessimistic strategies to integrate these base estimators.

\section{Theoretical Properties}

To simplify our theoretical analysis, this section examines a sample-split version of the proposed estimator in non-dynamic settings. Further extensions of our analysis to sequential decision making 
are detailed in Appendix \ref{sec:theoretical_sequential}.

Our analysis 
compares three key estimators: a conceptual oracle estimator $\widehat{\tau}_{w^*}$, which utilizes the ideal $w^*$ value, the EDO estimator detailed in   \eqref{eqn:taue}, and the semi-parametrically efficient (SPE) estimator \citep{li2023improving} developed on the assumption of no reward shift. The EDO and SPE estimators represent two polar views of reward shift: the EDO anticipates a notable divergence between $r_e(0,\bullet)$ and $r_h(\bullet)$, while the SPE assumes no difference.


\textbf{Summary}. Before delving into the technical details, we offer a concise summary of our theories:\vspace{-0.5em}
\begin{itemize}
    \item \textbf{Small} $b_h$: in scenarios where the reward shift is much smaller than the standard deviations of the doubly robust estimators, the SPE estimator achieves the best performance. However, our analysis shows that the MSEs of the proposed estimators closely approximate those of both the oracle and the SPE estimator. \vspace{-0.5em}
    \item \textbf{Moderate} $b_h$: when the reward shift is comparable to or larger than the standard deviation terms, yet falls within the high confidence bounds of the estimation error, it remains uncertain which estimator (other than the oracle estimator) outperforms the rest. In these settings, the MSE of our pessimistic estimator is generally smaller than that of the non-pessimistic estimator.\vspace{-0.5em}
    \item \textbf{Large} $b_h$: when the reward shift is much larger than the estimation error, both the EDO estimator and our estimators are equivalent to that of the oracle estimator. We refer to this equivalence as the oracle property. 
\end{itemize}
\subsection{Properties of the Non-pessimistic Estimator}\label{sec:theorysampleestimator}
We study a sample-split variant of our estimator, where the dataset is equally divided into two parts. The first half, labeled as $\mathcal{D}_{e}^{(1)}\cup \mathcal{D}_{h}^{(1)}$, is utilized to deduce the weight $\widehat{w}$. The second half, $\mathcal{D}_{e}^{(2)}\cup \mathcal{D}_{h}^{(2)}$, is then employed to construct the final doubly robust estimator $\widehat{\tau}$, leveraging the previously estimated weight. This sample-splitting approach removes the dependencies between the estimated weight and the dataset used in formulating the ATE estimator, considerably simplifying our theoretical analysis. 
It has been widely used in causal inference and OPE \citep[see e.g.,][]{luedtke2016statistical, chernozhukov2018double, kallus2020double, bibaut2021post, shi2021deeply}. An alternative method involves swapping the roles of the data subsets $\mathcal{D}_{e}^{(1)}\cup \mathcal{D}_{h}^{(1)}$ and $\mathcal{D}_{e}^{(2)}\cup \mathcal{D}_{h}^{(2)}$ to generate a second estimator and then averaging both estimators to attain 
full efficiency. Nonetheless, this approach is not explored further in our paper for the sake of simplicity. 


We impose the following assumptions. 
\begin{cond}[Coverage]\label{con:coverage}
    Let $\pi^*(a|s)=\prob(A_e=a|S_e=s)$ be the propensity score. 
    There exists a scalar  $\epsilon>0$ such that $\pi^*(a|s)\ge \epsilon$ and $\mu^*(s)\le \epsilon^{-1}$ hold for any $a$ and $s$.  
\end{cond}

\begin{cond}[Boundedness]\label{con:bound}
    (i) There exists some constant $R_{\max}$ such that $\max(|R_e|, |R_h|)\le R_{\max}$ holds almost surely. (ii) $\max(|r_e|, |r_h|)$ is upper  bounded by $R_{\max}$. 
    (iii) $\pi$ and $\mu$ are lower bounded by $\epsilon$.  
\end{cond}

\begin{cond}[Doubly-robust Specification]\label{con:double}
    Either the reward functions or the density ratios are correctly specified. 
\end{cond}

\begin{remark}
    The coverage condition in Assumption \ref{con:coverage} is frequently imposed in the OPE literature \citep[see e.g.,][]{uehara2022review}. It is also referred to as the positivity assumption in the causal inference literature \citep{hernan2010causal}. 
\end{remark}

\begin{remark}
The condition of bounded rewards in Assumption \ref{con:bound}(i) is commonly imposed in RL \citep[see e.g.,][]{agarwal2019reinforcement}. 
Given the bounded nature of the reward and the density ratio/propensity score, it is reasonable to assume that the user-defined nuisance functions are similarly bounded, as detailed in Assumptions \ref{con:bound}(ii) and \ref{con:bound}(iii). 
\end{remark}

\begin{remark}
    Assumption \ref{con:double} reflects the double robustness property of the proposed estimator. Alternatively, this assumption can be replaced by requiring these nuisance functions to satisfy certain convergence rate conditions  \citep{chernozhukov2018double,kallus2020double}. 
\end{remark}
We begin by providing a non-asymptotic upper bound for MSE of the non-pessimistic estimator. 
Define $n_{\min} = \min\{|D_e|, |D_h|\}$ as the effective sample size. 
\begin{theorem}[MSE of the non-pessimistic estimator]\label{thm:nonpSAE}
Under Assumptions \ref{con:coverage} -- \ref{con:double}, the excess MSE of the non-pessimistic estimator compared to $\widehat{\tau}_{w^*}$, i.e., $\textrm{MSE}(\widehat{\tau}_{\widehat{w}})-\textrm{MSE}(\widehat{\tau}_{w^*})$ can be upper bounded by
\begin{eqnarray}\label{eqn:MSEdecomposition}
    \Mean [(1-w^*)^2-(1-\widehat{w})^2] (\widehat{b}_h^2-b_h^2)+O\Big(\frac{R_{\max}^2}{\epsilon^2 n_{\min}^{3/2}}\Big).
\end{eqnarray}
\end{theorem}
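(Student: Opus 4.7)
The plan is to exploit the sample-splitting to decouple $\widehat{w}$ from the final doubly robust estimator, and then to compare $\widehat{w}$ with $w^*$ via a standard basic-inequality argument. Since $\widehat{w}$ is computed solely on $\mathcal{D}_e^{(1)}\cup\mathcal{D}_h^{(1)}$, it is independent of the second-half observations used to form $\widehat{\tau}_e$ and $\widehat{\tau}_h$ inside $\widehat{\tau}_{\widehat{w}}$. Under Assumption~\ref{con:double}, $\Mean[\widehat{\tau}_e]=\tau_e$ and $\Mean[\widehat{\tau}_h]=\tau_e-b_h$, so conditioning on $\widehat{w}$ and applying the bias-variance decomposition yields $\Mean[(\widehat{\tau}_{\widehat{w}}-\tau_e)^2\mid\widehat{w}]=M(\widehat{w})$, where $M(w)=w^2 V_e+(1-w)^2 V_h+2w(1-w)C+(1-w)^2 b_h^2$ and $V_e,V_h,C$ denote the (co)variances of the second-half estimators. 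Consequently $\textrm{MSE}(\widehat{\tau}_{\widehat{w}})-\textrm{MSE}(\widehat{\tau}_{w^*})=\Mean[M(\widehat{w})-M(w^*)]$.

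Next, let $\widehat{M}(w)$ denote the plug-in of $M(w)$ obtained by replacing $(V_e,V_h,C,b_h^2)$ with their first-half estimates $(\widehat{V}_e,\widehat{V}_h,\widehat{C},\widehat{b}_h^2)$. Since $\widehat{w}$ minimizes $\widehat{M}$ while $w^*\in[0,1]$ is feasible, $\widehat{M}(\widehat{w})\le\widehat{M}(w^*)$, giving the basic inequality $M(\widehat{w})-M(w^*)\le [M-\widehat{M}](\widehat{w})-[M-\widehat{M}](w^*)$. Expanding the quadratic $M-\widehat{M}$ in $w$ produces four cross terms: one for each of $(V_e-\widehat{V}_e)$, $(V_h-\widehat{V}_h)$, $(C-\widehat{C})$, and $(b_h^2-\widehat{b}_h^2)$. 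The coefficient multiplying $b_h^2-\widehat{b}_h^2$ is $(1-\widehat{w})^2-(1-w^*)^2$, and taking expectations of this particular contribution yields, after a sign flip, exactly the leading term $\Mean[((1-w^*)^2-(1-\widehat{w})^2)(\widehat{b}_h^2-b_h^2)]$ stated in the theorem.

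It remains to absorb the three cross terms involving the variance/covariance residuals into the $O(R_{\max}^2/(\epsilon^2 n_{\min}^{3/2}))$ remainder. Under Assumptions~\ref{con:coverage}--\ref{con:bound}, the estimating functions $\psi_e,\psi_{h,1},\psi_{h,2}$ are uniformly bounded by $O(R_{\max}/\epsilon)$. Each of $\widehat{V}_e,\widehat{V}_h,\widehat{C}$ is a sample (co)variance of bounded i.i.d.\ terms divided by the corresponding first-half sample size $n\gtrsim n_{\min}$, so writing $\widehat{V}_e-V_e=(S_e^2-\sigma_e^2)/n$ with $S_e^2$ the unbiased sample variance of $\psi_e$, a fourth-moment calculation gives $\Var(S_e^2)=O(R_{\max}^4/(\epsilon^4 n))$, and Cauchy--Schwarz yields $\Mean|\widehat{V}_e-V_e|\le\sqrt{\Var(\widehat{V}_e)}=O(R_{\max}^2/(\epsilon^2 n^{3/2}))$, with analogous bounds for $\widehat{V}_h$ and $\widehat{C}$. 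Since $\widehat{w},w^*\in[0,1]$, the coefficients $\widehat{w}^2-(w^*)^2$, $(1-\widehat{w})^2-(1-w^*)^2$, and $\widehat{w}(1-\widehat{w})-w^*(1-w^*)$ are all bounded by absolute constants, so after taking expectations these three cross terms contribute $O(R_{\max}^2/(\epsilon^2 n_{\min}^{3/2}))$.

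The main obstacle I anticipate is securing the $n^{-3/2}$ rate, rather than the trivial $n^{-1}$ bound one gets from the crude estimate $|\widehat{V}_e-V_e|\le V_e+\widehat{V}_e=O(R_{\max}^2/(\epsilon^2 n))$: this forces a careful second-moment analysis of the sample variance with explicit $R_{\max}/\epsilon$ constants. A secondary subtlety is that $\widehat{V}_e,\widehat{V}_h,\widehat{C}$ are built from $\mathcal{D}^{(1)}$ whereas $V_e,V_h,C$ are variances of the second-half estimators; equal-size splitting makes $\Mean\widehat{V}_e=V_e$ (and analogously), so this mismatch contributes only to the $n^{-3/2}$ remainder and does not perturb the leading bias term.
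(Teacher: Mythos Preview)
Your proposal is correct and follows essentially the same route as the paper: the paper also uses the basic inequality $\widehat{M}(\widehat{w})\le\widehat{M}(w^*)$ to bound the excess MSE by $[M-\widehat{M}](\widehat{w})-[M-\widehat{M}](w^*)$, then separates out the bias contribution (yielding the leading term) via an intermediate $\widetilde{\mathrm{MSE}}$ in which only the variance/covariance estimates are replaced by their oracle values, and finally absorbs the variance/covariance residuals into $O(R_{\max}^2/(\epsilon^2 n_{\min}^{3/2}))$ using exactly the $n^{-1/2}$ sample-variance bound you describe (their Lemma~2). Your direct expansion of $M-\widehat{M}$ into four cross terms is just a slightly more compact packaging of the same decomposition.
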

The upper bound can be decomposed 
into two parts: the first one represents the error for estimating the mean reward shift $b_h$, whereas the second one upper bounds the  errors for estimating the variance and covariance terms, namely $\Var(\widehat{\tau}_e)$, $\Var(\widehat{\tau}_h)$, and $\Cov(\widehat{\tau}_e,\widehat{\tau}_h)$. 

We next compare this excess MSE against $\textrm{MSE}(\widehat{\tau}_{w^*})$. First, we observe that when $\textrm{MSE}(\widehat{\tau}_{w^*})$ is proportional to $R_{\max}^2/(\epsilon^2 n_{\min})$, the second term in \eqref{eqn:MSEdecomposition} becomes negligible as $n_{\min}$ grows to infinity. Hence, it suffices to compare the first term in \eqref{eqn:MSEdecomposition} in contrast to $\textrm{MSE}(\widehat{\tau}_{w^*})$.    To elaborate the first term, we examine three scenarios previously introduced in this section, 
differentiated by the magnitude of $b_h$.

\textbf{Small $b_h$}. In this scenario, we assume $|b_h|\ll n_{\min}^{-1/2}R_{\max}/\epsilon$ and thus,  the first term is asymptotically equivalent to
\begin{eqnarray}\label{eqn:sce}
    \textrm{SEE}(\widehat{b}_h)=\Mean [(1-w^*)^2-(1-\widehat{w})^2] (\widehat{b}_h-b_h)^2. 
\end{eqnarray}
We refer to this term as the spurious estimation error (SEE) of $\widehat{b}_h$, since it occurs due to the spurious correlation between $\widehat{w}$ and $\widehat{b}_h$. Theoretically, it is of the same order of magnitude as $\textrm{MSE}(\widehat{\tau}_{w^*})$. However, our empirical investigation reveals that it is considerably smaller than the oracle MSE, as illustrated in Figure \ref{fig:SSE_difference}. 

\begin{figure}
	\centering
	\includegraphics[height=3cm, width=8.5cm]{./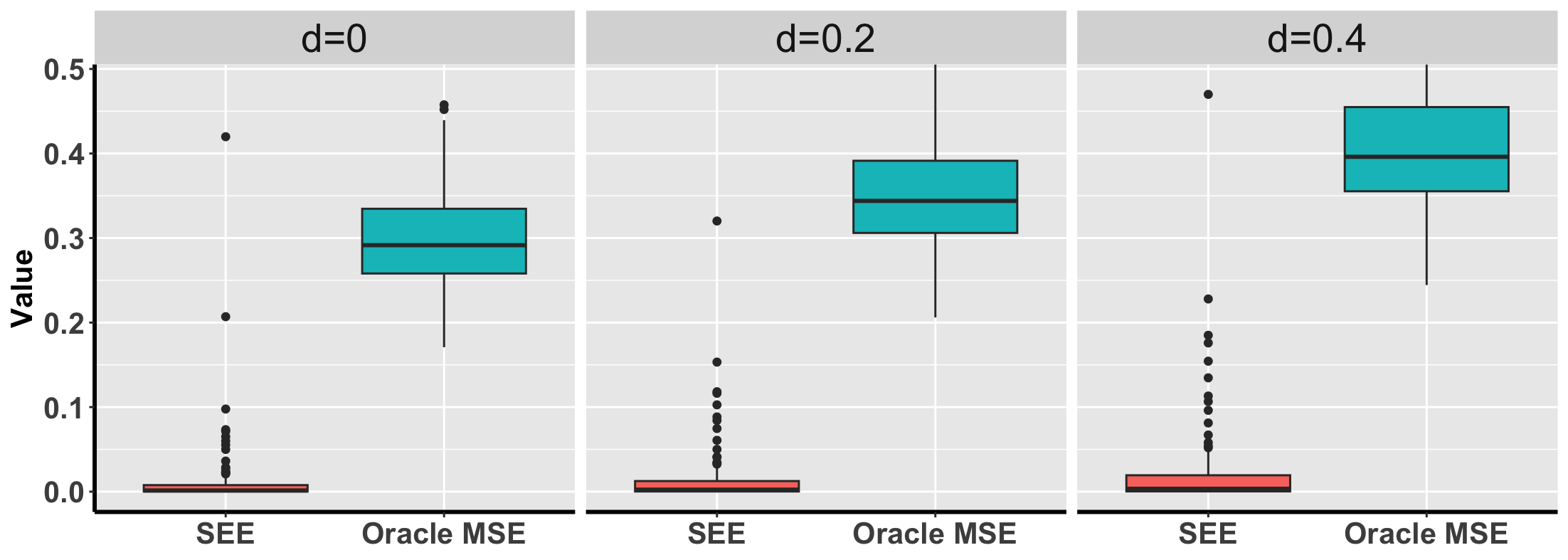}
	\vspace{-0.3 in}
	\caption{\small  Boxplots of the SEE and the oracle MSE under the setting of Example \ref{ex:single_stage} when the bias $b_h=0$, and $d$ indicates the difference of the conditional variance of the reward between the experimental data and historical data.}
	\label{fig:SSE_difference}
 \vspace{-0.1in}
\end{figure}

Additionally, under the assumption that $r_h(s)=r_e(0,s)$ for all $s$ --- effectively resulting in $b_h=0$ --- the SPE estimator achieves the smallest MSE asymptotically, since it is tailored to minimize MSE under this assumption. Assuming all nuisance functions are correctly specified, and the proportionality assumption holds such that the ratio $\Var(R_e|A_e=0,S_e)/[\Var(R_h|S_e)\mu(S_e)\pi(0|S_e)]$ remains constant irrespective of $S_e$, the SPE estimator is equivalent to the oracle estimator. Consequently, this suggests that the MSE of our non-pessimistic estimator asymptotically equals the efficiency bound augmented by a small spurious estimation error. We summarize these discussions below. 
\begin{corollary}[MSE with a small $b_h$]
\label{cor:small_bh_nonpessimistic}
    In the small $b_h$ scenario, if $\textrm{MSE}(\widehat{\tau}_{w^*})$ is proportional to $R_{\max}^2/(\epsilon^2 n_{\min})$, then 
    \begin{eqnarray*}
        \Big|\frac{\textrm{MSE}(\widehat{\tau}_{\widehat{w}})-\textrm{MSE}(\widehat{\tau}_{w^*})}{\textrm{MSE}(\widehat{\tau}_{w^*})}-\frac{\textrm{SEE}(\widehat{b}_h)}{\textrm{MSE}(\widehat{\tau}_{w^*})}\Big|\to 0,
    \end{eqnarray*}
    as $n_{\min}\to \infty$. Additionally, when $r_h(\bullet)=r_e(0,\bullet)$, the proportionality assumption holds, and all nuisance functions are correctly specified, $\textrm{MSE}(\widehat{\tau}_{\widehat{w}})$ is asymptotically equivalent to the sum of the efficiency bound plus $\textrm{SEE}(\widehat{b}_h)$. 
\end{corollary}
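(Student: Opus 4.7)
The plan is to derive both claims from Theorem~\ref{thm:nonpSAE} by dissecting the leading term $\Mean[(1-w^*)^2-(1-\widehat{w})^2](\widehat{b}_h^2-b_h^2)$ and showing that every other contribution is of smaller order than the oracle MSE $R_{\max}^2/(\epsilon^2 n_{\min})$. For the first claim, I would decompose
\begin{equation*}
\widehat{b}_h^2 - b_h^2 = (\widehat{b}_h - b_h)^2 + 2 b_h (\widehat{b}_h - b_h).
\end{equation*}
Multiplied by $\Mean[(1-w^*)^2-(1-\widehat{w})^2]$, the first piece is exactly $\textrm{SEE}(\widehat{b}_h)$. For the cross term I would use $|(1-w^*)^2 - (1-\widehat{w})^2| \le 2$ almost surely together with $\Mean|\widehat{b}_h - b_h| = O(R_{\max}/(\epsilon\sqrt{n_{\min}}))$, which follows from Assumptions~\ref{con:coverage}--\ref{con:double} and the doubly robust structure of $\widehat{\tau}_e$ and $\widehat{\tau}_h$. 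Under the small-$b_h$ regime $|b_h| \ll R_{\max}/(\epsilon\sqrt{n_{\min}})$, the cross term is therefore $o(R_{\max}^2/(\epsilon^2 n_{\min}))$. Combining with the residual $O(R_{\max}^2/(\epsilon^2 n_{\min}^{3/2}))$ in Theorem~\ref{thm:nonpSAE} and dividing by $\textrm{MSE}(\widehat{\tau}_{w^*}) \asymp R_{\max}^2/(\epsilon^2 n_{\min})$ gives the first claim.

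For the second claim, setting $r_h(\cdot) = r_e(0,\cdot)$ forces $b_h = 0$, so $\widehat{\tau}_{w^*}$ reduces to a pure variance minimizer with weight
\begin{equation*}
w^* = \frac{\Var(\widehat{\tau}_h) - \Cov(\widehat{\tau}_e, \widehat{\tau}_h)}{\Var(\widehat{\tau}_e) + \Var(\widehat{\tau}_h) - 2\Cov(\widehat{\tau}_e, \widehat{\tau}_h)}.
\end{equation*}
Under correct nuisance specification, I would evaluate $\Var(\widehat{\tau}_e)$, $\Var(\widehat{\tau}_h)$, and $\Cov(\widehat{\tau}_e, \widehat{\tau}_h)$ from the doubly robust influence functions, keeping only the conditional-variance contributions of $R_e$ given $(S_e,A_e)$ and of $R_h$ given $S_h$. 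Substituting the proportionality condition, which pins $\Var(R_e|A_e=0,S_e)$ to a constant multiple of $\Var(R_h|S_e)\mu(S_e)\pi(0|S_e)$, I would simplify $w^{*2}\Var(\widehat{\tau}_e)+(1-w^*)^2\Var(\widehat{\tau}_h)+2w^*(1-w^*)\Cov(\widehat{\tau}_e,\widehat{\tau}_h)$ and verify in closed form that it coincides with the efficiency bound derived by \citet{li2023improving}. This collapses $\textrm{MSE}(\widehat{\tau}_{w^*})$ onto the efficiency bound, and invoking the first claim yields $\textrm{MSE}(\widehat{\tau}_{\widehat{w}}) = \textrm{efficiency bound} + \textrm{SEE}(\widehat{b}_h) + o(R_{\max}^2/(\epsilon^2 n_{\min}))$.

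The hard part will be the algebraic identification in the second claim: matching the variance of the optimal convex combination of two doubly robust estimators to the semi-parametric efficient variance of \citet{li2023improving} requires reconciling two different closed-form variance expressions, and the proportionality assumption is precisely the structural condition that makes this matching succeed. A secondary subtlety is that Theorem~\ref{thm:nonpSAE} is stated only as an upper bound, so extracting the two-sided convergence $|\cdot|\to 0$ in the corollary requires either revisiting the proof of Theorem~\ref{thm:nonpSAE} to confirm that the leading identity, rather than a one-sided inequality, holds up to $O(n_{\min}^{-3/2})$, or invoking non-negativity of the excess MSE together with the fact that $\textrm{SEE}(\widehat{b}_h)$ is itself $O(R_{\max}^2/(\epsilon^2 n_{\min}))$.
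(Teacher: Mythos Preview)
Your proposal is correct and matches the paper's approach closely: the first claim uses the identical decomposition $\widehat{b}_h^2 - b_h^2 = (\widehat{b}_h - b_h)^2 + 2b_h(\widehat{b}_h - b_h)$ and bounds the cross term in the same way. For the second claim, the paper proceeds by showing that the oracle weight $w^*$ coincides with the state-dependent weight $w_{EB}(S_e)$ from \citet{li2023improving} under the proportionality assumption---effectively your variance-matching step carried out at the level of weights---and, like you, treats the bound in Theorem~\ref{thm:nonpSAE} as a two-sided approximation without further justification.
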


\textbf{Large $b_h$}. In this scenario, we require $|b_h|\gg n_{\min}^{-1/2}\sqrt{\log n_{\min}} R_{\max}/\epsilon$. Notice that the lower bound is aligned with the high confidence bound for the estimation error of $\widehat{b}_h$. Consequently, the reward shift is sufficiently large to be ``detectable" from the data. 

Under this condition, the $b_h^2$ term becomes the dominant factor in the MSE \eqref{eqn:tauw}, leading to the optimal weight $w^*$ approaching $1$. Consequently, the EDO estimator is asymptotically equivalent to the oracle estimator whereas the SPE estimator is sub-optimal since it assumes a zero $b_h$.

In the large $b_h$ scenario, the weight selected by the non-pessimistic estimator tends towards one, so that the excess MSE is of a small order. Hence, the MSE of the non-pessimistic estimator is asymptotically the same as that of the oracle estimator, achieving the oracle property. 
\begin{corollary}[Oracle property with a large $b_h$]
\label{cor:moderate_bh_nonPessimistic}
    In the large $b_h$ scenario, both $\textrm{MSE}(\widehat{\tau}_{\widehat{w}})/\textrm{MSE}(\widehat{\tau}_{w^*})$ and $\textrm{MSE}(\widehat{\tau}_{\widehat{w}})/\textrm{MSE}(\widehat{\tau}_{e})$ approach $1$ as $n_{\min}\to \infty$. 
\end{corollary}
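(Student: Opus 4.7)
The plan is to convert the qualitative statement that $w^{*}$ and $\widehat{w}$ are both close to $1$ into quantitative rates, then feed them into the excess-MSE bound of Theorem~\ref{thm:nonpSAE}. Write $V_e=\Var(\widehat{\tau}_e)$, $V_h=\Var(\widehat{\tau}_h)$, and $C=\Cov(\widehat{\tau}_e,\widehat{\tau}_h)$; by Assumptions~\ref{con:coverage}--\ref{con:bound} each is of order $R_{\max}^{2}/(\epsilon^{2}n_{\min})$. Setting $u=1-w$ and minimizing the bias--variance decomposition~\eqref{eqn:tauw} gives
\begin{equation*}
1-w^{*}=\frac{V_e-C}{b_h^{2}+V_e+V_h-2C},\qquad
\textrm{MSE}(\widehat{\tau}_{w^{*}})=V_e-\frac{(V_e-C)^{2}}{b_h^{2}+V_e+V_h-2C}.
\end{equation*}
In the large-$b_h$ regime, $b_h^{2}\gg R_{\max}^{2}/(\epsilon^{2}n_{\min})$, so $|1-w^{*}|=O(1/(n_{\min}b_h^{2}))$ and the correction to $V_e$ is $O(1/(n_{\min}^{2}b_h^{2}))=o(V_e)$. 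Hence $\textrm{MSE}(\widehat{\tau}_{w^{*}})=V_e(1+o(1))=\textrm{MSE}(\widehat{\tau}_{e})(1+o(1))$, which settles the second ratio as soon as the first one is proved.

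For the first ratio I would invoke Theorem~\ref{thm:nonpSAE}. Its remainder $O(R_{\max}^{2}/(\epsilon^{2}n_{\min}^{3/2}))$ is $o(V_e)$ and hence negligible after dividing by $\textrm{MSE}(\widehat{\tau}_{w^{*}})\asymp 1/n_{\min}$. For the leading term $\Mean\{[(1-w^{*})^{2}-(1-\widehat{w})^{2}](\widehat{b}_h^{2}-b_h^{2})\}$ I would proceed in three steps: (i) a standard doubly-robust linearization combined with Hoeffding/Bernstein yields $|\widehat{b}_h-b_h|\lesssim R_{\max}\sqrt{\log n_{\min}}/(\epsilon\sqrt{n_{\min}})$ on an event $\mathcal{E}$ with $\prob(\mathcal{E}^{c})\le 2 n_{\min}^{-2}$; since $|b_h|\gg n_{\min}^{-1/2}\sqrt{\log n_{\min}}$, on $\mathcal{E}$ one has $\widehat{b}_h^{2}=b_h^{2}(1+o(1))$; (ii) the closed form of $\widehat{w}$ in~\eqref{eqn:non-pess} matches that of $w^{*}$ with $(b_h^{2},V_e,V_h,C)$ replaced by their sample-split estimators, each consistent by Chebyshev, so on $\mathcal{E}$ one obtains $|1-\widehat{w}|=O(1/(n_{\min}b_h^{2}))$; (iii) putting these together, on $\mathcal{E}$ the factor $|(1-w^{*})^{2}-(1-\widehat{w})^{2}|$ is $O(1/(n_{\min}^{2}b_h^{4}))$ and $|\widehat{b}_h^{2}-b_h^{2}|$ is $O(|b_h|\sqrt{\log n_{\min}}/\sqrt{n_{\min}})$, whose product divided by $\textrm{MSE}(\widehat{\tau}_{w^{*}})$ is of order $\sqrt{\log n_{\min}}/(n_{\min}^{3/2}|b_h|^{3})\to 0$ because $n_{\min}^{1/2}|b_h|\to\infty$.

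The main obstacle I anticipate is controlling the expectation on $\mathcal{E}^{c}$: because $\widehat{b}_h^{2}$ sits in the denominator of $\widehat{w}$, a naive substitution of in-probability bounds does not by itself control $\Mean(1-\widehat{w})^{2}$. I would address this by exploiting the deterministic bounds $\widehat{w}\in[0,1]$ and $|\widehat{b}_h|\le 2R_{\max}/\epsilon$ guaranteed by Assumption~\ref{con:bound}, so that on $\mathcal{E}^{c}$ the integrand is $O(R_{\max}^{4}/\epsilon^{4})$ and, combined with $\prob(\mathcal{E}^{c})=O(n_{\min}^{-2})$, contributes only $O(n_{\min}^{-2})$ to the expectation, again $o(V_e)$. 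Assembling the contributions from $\mathcal{E}$ and $\mathcal{E}^{c}$ yields $\textrm{MSE}(\widehat{\tau}_{\widehat{w}})-\textrm{MSE}(\widehat{\tau}_{w^{*}})=o(\textrm{MSE}(\widehat{\tau}_{w^{*}}))$, so the first ratio also tends to $1$.
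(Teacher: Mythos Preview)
Your proposal is correct and the overall architecture---show $w^{*}\to 1$ so that $\textrm{MSE}(\widehat\tau_{w^{*}})\sim\textrm{MSE}(\widehat\tau_e)$, then show $\widehat w$ is close enough to $1$ that the excess MSE is negligible---matches the paper's. The execution, however, follows a different route.

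The paper's proof invokes Lemma~\ref{lem:weight_error_bound} (an $L^{1}$ bound on $|\widehat w-w^{*}|$) to conclude $\widehat w\overset{p}{\to}w^{*}\to 1$, and then simply asserts that this convergence ``suffices.'' You instead work directly through the excess-MSE decomposition of Theorem~\ref{thm:nonpSAE}: you obtain the quantitative rates $|1-w^{*}|,\,|1-\widehat w|=O(1/(n_{\min}b_h^{2}))$ and $|\widehat b_h^{2}-b_h^{2}|=O(|b_h|\sqrt{\log n_{\min}/n_{\min}})$ on a high-probability event, multiply them out, and control the complementary event via the deterministic bounds from Assumption~\ref{con:bound}. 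Your route is longer but more explicit; in particular, the paper's ``it suffices'' step is a bit loose---because the MSE function contains the factor $b_h^{2}$, mere convergence $\widehat w\overset{p}{\to}1$ together with the $L^{1}$ rate $O(1/(\sqrt{n_{\min}}|b_h|))$ from Lemma~\ref{lem:weight_error_bound} does not by itself yield $\Mean[(\widehat w-w^{*})^{2}]b_h^{2}=o(1/n_{\min})$ without the sharper high-probability bound $|1-\widehat w|=O(1/(n_{\min}b_h^{2}))$ that you derive. One small refinement: your event $\mathcal{E}$ should formally include concentration of $\widehat V_e,\widehat V_h,\widehat C$ as well as of $\widehat b_h$; since these are sample averages of bounded terms this is routine and does not affect the argument.
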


\textbf{Moderate $b_h$}. In this scenario, the magnitude of $|b_h|$ falls 
between $n_{\min}^{-1/2}R_{\max}/\epsilon$ and $n_{\min}^{-1/2}\sqrt{\log n_{\min}} R_{\max}/\epsilon$. This scenario is the most challenging, as it is not clear whether the SPE estimator or the EDO estimator will deliver superior performance.

To illustrate the issues of the non-pessimistic estimator, let us examine a scenario where $|b_h|$ significantly exceeds $n_{\min}^{-1/2}R_{\max}/\epsilon$, causing the optimal weight $w^*\to 1$. In this context, even though $|b_h|$ is considerably large, it remains within the high-confidence interval of the estimation error for $\widehat{b}_h-b_h$, which might not make $|b_h|$ adequately distinguishable from the data. As $w^*\to 1$, the dominant factor in the first part of  \eqref{eqn:MSEdecomposition} becomes $-\Mean (1-\widehat{w})^2(\widehat{b}_h^2-b_h^2)$. Nevertheless, there is no guarantee that $\widehat{w}$ will converge to 1 with high confidence. This uncertainty introduces a significant excess MSE for the non-pessimistic estimator. See the numerical results in Section \ref{sec:numerical}.

\subsection{Robustness of the Pessimistic Estimator}\label{sec:pessitheory}
The pessimistic estimator effectively mitigates the aforementioned limitation of the non-pessimistic estimator by incorporating the estimation error of $\widehat{b}_h$ into weight selection. To elaborate, we first provide a non-asymptotic upper bound for its MSE. 
\begin{theorem}[MSE of the pessimistic estimator]\label{thm:pessi}
    Under Assumptions \ref{con:coverage} -- \ref{con:double} and \eqref{eqn:quantifier}, $\textrm{MSE}(\widehat{\tau}_{\widehat{w}_U})-\textrm{MSE}(\widehat{\tau}_{w^*})$ can be upper bounded by
\begin{eqnarray}\label{eqn:MSEpessi}
\begin{split}
    (1-w^*)^2 \Mean [(|\widehat{b}_h|+U)^2-b_h^2]+O\Big(\frac{R_{\max}^2}{\epsilon^2 n_{\min}^{3/2}}\Big)\\
    +O(\alpha [b_h^2+R_{\max}^2/\epsilon^2 n_{\min}]). 
\end{split}
\end{eqnarray}
\end{theorem}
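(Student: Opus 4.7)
The plan is to parallel the decomposition used for Theorem \ref{thm:nonpSAE}, with two modifications that exploit the pessimistic construction. Thanks to sample splitting, the weight $\widehat{w}_U$ (learned on the first half of the data) is independent of $\widehat{\tau}_e$ and $\widehat{\tau}_h$ (formed on the second half); writing $M(w)$ for the conditional MSE of $w\widehat{\tau}_e+(1-w)\widehat{\tau}_h$ given $w$, I therefore have $\textrm{MSE}(\widehat{\tau}_{\widehat{w}_U}) = \Mean [M(\widehat{w}_U)]$ by the tower rule. Let $\widehat{M}_U(w)$ denote the sample analogue of $M(w)$ in which $\Var(\widehat\tau_e)$, $\Var(\widehat\tau_h)$ and $\Cov(\widehat\tau_e,\widehat\tau_h)$ are replaced by their first-half sample counterparts and $b_h^2$ is replaced by $B^2 := (|\widehat{b}_h|+U)^2$. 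By construction, $\widehat{w}_U = \argmin_{w\in[0,1]} \widehat{M}_U(w)$.

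The first step is the telescoping identity
$M(\widehat w_U) - M(w^*) = [M(\widehat w_U) - \widehat M_U(\widehat w_U)] + [\widehat M_U(\widehat w_U) - \widehat M_U(w^*)] + [\widehat M_U(w^*) - M(w^*)]$,
in which the middle bracket is nonpositive by optimality of $\widehat w_U$. Writing $\Delta(w) := \widehat M_U(w) - M(w) - (1-w)^2(B^2-b_h^2)$ for the portion collecting only the variance/covariance estimation errors, rearrangement yields
\begin{equation*}
M(\widehat w_U) - M(w^*) \le [(1-w^*)^2 - (1-\widehat w_U)^2](B^2 - b_h^2) + \Delta(w^*) - \Delta(\widehat w_U).
\end{equation*}

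The second step is the event-based argument. Define $E := \{|\widehat b_h - b_h|\le U\}$, so that $\prob(E)\ge 1-\alpha$ by \eqref{eqn:quantifier}. On $E$, the triangle inequality gives $|b_h|\le |\widehat b_h|+U$, hence $B^2\ge b_h^2$; this lets me discard the nonpositive term $-(1-\widehat w_U)^2(B^2-b_h^2)$ and retain only $(1-w^*)^2(B^2-b_h^2)$, which is precisely the first term of \eqref{eqn:MSEpessi}. On the complement $E^c$, whose probability is at most $\alpha$, I apply the crude worst-case bound $M(w)\le C(b_h^2 + R_{\max}^2/(\epsilon^2 n_{\min}))$ uniformly in $w\in[0,1]$, which follows from Assumptions \ref{con:coverage} and \ref{con:bound}; multiplying by $\alpha$ supplies the third term of \eqref{eqn:MSEpessi}. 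Taking expectations and using $\Mean[(B^2-b_h^2)\mathbb{I}_E] \le \Mean[B^2-b_h^2] + \alpha b_h^2$ (since $b_h^2 - B^2 \le b_h^2$) absorbs the residual $\alpha b_h^2$ into the third term as well.

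The third step controls the residual $\Mean[\,|\Delta(w^*)-\Delta(\widehat w_U)|\,]$. Since $\Delta(w)$ is a quadratic in $w\in[0,1]$ whose coefficients are differences between first-half sample (co)variances and the corresponding population quantities, and each such (co)variance is $O(R_{\max}^2/(\epsilon^2 n_{\min}))$ with sample error of order $O(R_{\max}^2/(\epsilon^2 n_{\min}^{3/2}))$ by Bernstein's inequality applied to bounded summands, the supremum $\sup_{w\in[0,1]}|\Delta(w)|$ is $O(R_{\max}^2/(\epsilon^2 n_{\min}^{3/2}))$, giving the middle term of \eqref{eqn:MSEpessi}. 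The main obstacle is precisely this uniform (co)variance control; sample splitting is essential because it makes $\widehat w_U$ independent of the second-half (co)variances, reducing a stochastic supremum over random $\widehat w_U$ to a deterministic supremum over $[0,1]$ that is attained at an endpoint by continuity of the quadratic.
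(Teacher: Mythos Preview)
Your proposal is correct and follows essentially the same route as the paper: the paper telescopes through the hybrid $\widebar{\textrm{MSE}}$ (estimated variances, true bias) and then uses the chain $\widebar{\textrm{MSE}}(\widehat w_U)\le \widehat{\textrm{MSE}}_{\text{UB}}(\widehat w_U)\le \widehat{\textrm{MSE}}_{\text{UB}}(w^*)$ on the good event, which is algebraically equivalent to your telescoping through $\widehat M_U$ and separating out $\Delta(w)$. The only cosmetic differences are that the paper invokes Lemma~\ref{lemmaMSEbound} (Cauchy--Schwarz) rather than Bernstein for the $O(R_{\max}^2/\epsilon^2 n_{\min}^{3/2})$ variance control, and it handles the $E^c$ contribution by the same crude $O(b_h^2+R_{\max}^2/\epsilon^2 n_{\min})$ bound you describe.
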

According to Theorem \ref{thm:pessi}, the excess MSE of the pessimistic estimator can be decomposed into three parts: \vspace{-0.5em}
\begin{enumerate}
    \item \textbf{Estimation error of} $\widehat{b}_h$: the first term quantifies the estimation error for $\widehat{b}_h$. Unlike the non-pessimistic estimator where this term depends on the estimated weight, here it relies only on $w^*$. This distinction enhances the robustness of the estimator, particularly when $b_h$ is moderately large. \vspace{-0.5em}
    \item \textbf{Estimation errors of the variance/covariance terms}: similar to the non-pessimistic estimator, the second term quantifies the estimation error for the variance/covariance terms.\vspace{-0.5em}
    \item \textbf{Type-I error}: the last term is directly proportional to the type-I error $\alpha$ which upper bounds the
    probability that the $|\widehat{b}_h-b_h|$ exceeds $U$. Notice that this term can be made sufficiently small by employing concentration inequalities without substantially increasing  the estimation error associated with $\widehat{b}_h$.\vspace{-0.5em}
\end{enumerate}
To further illustrate the advantage of the pessimistic estimator, we consider the moderate $b_h$ scenario. 
When $|b_h|\gg n_{\min}^{-1/2}R_{\max}/\epsilon$, $\widehat{w}$ might not necessarily converge to $1$ with high confidence. Hence, the non-pessimistic estimator can suffer from a large loss, due to the involvement of $\widehat{w}$ in \eqref{eqn:MSEdecomposition}. On the contrary, \eqref{eqn:MSEpessi} depends solely on $w^*$, which results in a smaller excess loss. Indeed, the following corollary shows that the pessimistic estimator achieves the oracle property even when $b_h$ is moderately large. 
\begin{corollary}[Oracle property of the pessimistic estimator]\label{coro:pessimistic}
    Suppose that $U$ is proportional to the order $n_{\min}^{-1/2}\sqrt{\log n_{\min}} R_{\max}/\epsilon$ such that $\alpha=o(1/n_{\min})$, if further
    $b_h\gg n_{\min}^{-1/2} (\log n_{\min})^{1/6} R_{\max}/\epsilon$, then the pessimistic estimator achieves the oracle property. 
\end{corollary}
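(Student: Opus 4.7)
The plan is to invoke Theorem~\ref{thm:pessi} and show that each of its three upper-bound summands is $o(\mathrm{MSE}(\widehat{\tau}_{w^*}))$ under the stated conditions; this then delivers $\mathrm{MSE}(\widehat{\tau}_{\widehat{w}_U})/\mathrm{MSE}(\widehat{\tau}_{w^*})\to 1$. Throughout I let $V_e=\Var(\widehat{\tau}_e)$, $V_h=\Var(\widehat{\tau}_h)$ and $C=\Cov(\widehat{\tau}_e,\widehat{\tau}_h)$; each of these is $O(R_{\max}^2/(\epsilon^2 n_{\min}))$ by Assumptions~\ref{con:coverage}--\ref{con:bound}.

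The first step is to pin down the benchmark. Standard minimisation of the quadratic $\mathrm{MSE}(\widehat{\tau}_w)$ in $w$ yields the closed forms $\mathrm{MSE}(\widehat{\tau}_{w^*})=[V_e(V_h+b_h^2)-C^2]/[V_e+V_h+b_h^2-2C]$ and $1-w^*=(V_e-C)/(V_e+V_h+b_h^2-2C)$. The hypothesis $|b_h|\gg n_{\min}^{-1/2}(\log n_{\min})^{1/6}R_{\max}/\epsilon$ forces $b_h^2$ to dominate $V_e\vee V_h\vee|C|$, so that $\mathrm{MSE}(\widehat{\tau}_{w^*})\asymp V_e\asymp R_{\max}^2/(\epsilon^2 n_{\min})$ and $(1-w^*)^2=O(V_e^2/b_h^4)=O(R_{\max}^4/(\epsilon^4 n_{\min}^2 b_h^4))$.

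Next I would dispose of the two ``routine'' remainder terms in \eqref{eqn:MSEpessi}: the variance-estimation piece $O(R_{\max}^2/(\epsilon^2 n_{\min}^{3/2}))$ is trivially $o(V_e)$; and $\alpha=o(1/n_{\min})$ combined with the trivial bound $b_h^2\le 4R_{\max}^2$ yields $\alpha(b_h^2+R_{\max}^2/(\epsilon^2 n_{\min}))=o(V_e)$. For the leading term $(1-w^*)^2\,\Mean[(|\widehat{b}_h|+U)^2-b_h^2]$ I would condition on the event $\mathcal{A}=\{|\widehat{b}_h-b_h|\le U\}$, whose complement has probability $\le\alpha$ by \eqref{eqn:quantifier} and contributes at most $O(\alpha R_{\max}^2/\epsilon^2)$, already absorbed into the third summand. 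On $\mathcal{A}$, the factorisation $(|\widehat{b}_h|+U)^2-b_h^2=(|\widehat{b}_h|+U-|b_h|)(|\widehat{b}_h|+U+|b_h|)$ together with $||\widehat{b}_h|-|b_h||\le U$ yields the pointwise bound $4U|b_h|+4U^2$, and multiplying by $(1-w^*)^2$ produces an excess of order $V_e^2(U|b_h|+U^2)/b_h^4$. The $U|b_h|$ contribution is $O(R_{\max}^5\sqrt{\log n_{\min}}/(\epsilon^5 n_{\min}^{5/2}|b_h|^3))$, which is $o(V_e)$ precisely when $|b_h|\gg n_{\min}^{-1/2}(\log n_{\min})^{1/6}R_{\max}/\epsilon$, matching the hypothesis exactly.

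The main obstacle is the subleading $(1-w^*)^2 U^2=O(R_{\max}^6\log n_{\min}/(\epsilon^6 n_{\min}^3 b_h^4))$ piece, for which a straightforward bound would demand the strictly stronger threshold $|b_h|\gg n_{\min}^{-1/2}(\log n_{\min})^{1/4}R_{\max}/\epsilon$, leaving a $(\log n_{\min})^{1/12}$ gap to close. I expect this gap can be closed by combining a refined expansion $\Mean[(|\widehat{b}_h|+U)^2-b_h^2]=\Var(\widehat{b}_h)+2U\Mean|\widehat{b}_h|+U^2$ (with $\Var(\widehat{b}_h)=O(V_e)$) with a sharpened analysis of $1-w^*$ that exploits potential cancellation in $V_e-C$ arising because $\widehat{\tau}_e$ and the first component of $\widehat{\tau}_h$ are built from the same experimental sample via possibly correlated estimating functions $\psi_e,\psi_{h,1}$. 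This refinement is the main technical step required to obtain the $(\log n_{\min})^{1/6}$ rate as stated.
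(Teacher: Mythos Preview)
Your overall plan---invoke Theorem~\ref{thm:pessi}, identify $\mathrm{MSE}(\widehat\tau_{w^*})\asymp V_e\asymp R_{\max}^2/(\epsilon^2 n_{\min})$ and $(1-w^*)^2=O(V_e^2/b_h^4)$, then control each of the three summands in \eqref{eqn:MSEpessi}---is the right one, and your treatment of the two remainder terms and of the $U|b_h|$ piece is accurate. In particular, your observation that the $U|b_h|$ contribution is exactly what singles out the $(\log n_{\min})^{1/6}$ exponent is correct.

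The gap you flag is genuine, and the paper's own proof does \emph{not} close it the way you hope. The paper argues differently from you in two places: (i) it bounds $\Mean[(|\widehat b_h|+U)^2-b_h^2]=O(U^2)$ outright, dropping the cross-term $O(U|b_h|)$ you correctly retain; and (ii) it uses $\mathrm{MSE}(\widehat\tau_{w^*})=O(b_h^2)$ in the denominator, which is the loose upper bound, not the sharp order $V_e$. Plugging these two into the ratio yields $O\big(U^2/(n_{\min}^2 b_h^6)\big)$, which vanishes precisely when $|b_h|\gg n_{\min}^{-1/2}(\log n_{\min})^{1/6}R_{\max}/\epsilon$. But step~(i) is not justified whenever $|b_h|\gtrsim U$ (the $4U|b_h|$ term then dominates $U^2$), and step~(ii) substitutes an upper bound where a lower bound is needed. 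With your correct benchmarks, the $(1-w^*)^2U^2/V_e$ piece is $O(n_{\min}^{-2}\log n_{\min}/b_h^4)$, which requires the stronger condition $|b_h|\gg n_{\min}^{-1/2}(\log n_{\min})^{1/4}R_{\max}/\epsilon$---exactly the $(\log n_{\min})^{1/12}$ shortfall you identified.

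Your proposed fix---exploiting cancellation in $V_e-C$---is speculative and not what the paper does; there is no general reason for $\Var(\psi_e)-\Cov(\psi_e,\psi_{h,1})$ to be of smaller order than $n_{\min}^{-1}$. In short, your analysis is more careful than the paper's, and the discrepancy you found appears to be a real defect in the paper's argument rather than a gap in your own reasoning.
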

The condition $b_h\gg n_{\min}^{-1/2} (\log n_{\min})^{1/6} R_{\max}/\epsilon$ applies to both the moderate and large $b_h$ scenarios.
Hence, even in cases of moderate $b_h$, as long as it is much larger than $n_{\min}^{-1/2} (\log n_{\min})^{1/6} R_{\max}/\epsilon$, the oracle property is satisfied. 
This formally establishes the robustness of the pessimistic estimator in comparison to the non-pessimistic estimator.

\section{Experiments}
\label{sec:numerical}
In this section, we investigate the finite sample performance of the proposed estimators.
Comparison is made among the following ATE estimators:
\begin{itemize}
    \item \textbf{NonPessi}: the proposed non-pessimistic estimator.
    \item 
    \textbf{Pessi}: the proposed pessimistic estimator.
    \item 
    \textbf{EDO}: the doubly robust estimator $\widehat \tau_e$ constructed based on the experimental data only (see \eqref{eqn:taue}).
    \item 
    \textbf{Lasso}: a weighted estimator $\widehat \tau_{Lasso}=w\widehat \tau_e + (1-w) \widehat \tau_h$ that linearly combines the ATE estimator $\widehat \tau_e$ based on experimental data and $\widehat \tau_h$ based on historical data, where the weight $w$ is chosen to minimize the estimated variance of the final ATE estimator with the Lasso penalty \cite{cheng2021adaptive},
\item 
\textbf{SPE}: the semi-parametrically efficient estimator proposed by \citet{li2023improving} developed under the assumption of no reward shift between the experimental and historical data, i.e., $r_e(0,s)=r_h(s)$ for any $s$. 
\end{itemize}
Notice that it remains unclear how to extend SPE in sequential decision making. Consequently, our implementation of SPE is confined to non-dynamic settings only. 
We compare the MSEs of the ATE estimators based on 100 simulation replications. Details about the data generating process can be found in Appendix \ref{sec:additional experiments}.

\begin{example}[\bf Non-dynamic simulation]
\label{ex:single_stage}
We consider a non-dynamic setting where the sample size of the experimental data is $|\mathcal{D}_e|=48$, and the sample size of the historical data is set to be $|\mathcal{D}_h|=m|\mathcal{D}_e|$ with {$m \in \{1,2,3\}$.} A deterministic switchback design is adopted to generate $\mathcal{D}_e$.
We 
vary the mean reward shift $b_h$ 
within the range from 0 to 1.5, incrementing by 0.1 at each step. We also vary the conditional variance of the reward and use $d$ to characterize this difference (see Appendix \ref{sec:additional experiments} for its detailed definition). 

Figure \ref{fig:single_withSPE} visualizes the empirical means of the MSEs for different methods.
According to our theory, the effectiveness of different estimators is determined by the magnitude of the reward shift. To validate our theory, we further classify $b_h$ into different regimes as follows:\vspace{-1em}
\begin{itemize}
    \item \textbf{Small $b_h$ regime}: $| b_h| \leq  c_1\sqrt{\Var(\widehat b_h)} $;\vspace{-0.5em}
    \item \textbf{Moderately large $b_h$ regime}: $c_1<\frac{| b_h|}{\sqrt{\Var(\widehat b_h)}} \leq  c_2$;\vspace{-0.5em}
    \item \textbf{Large $b_h$ regime}: $| b_h|> c_2\sqrt{\Var(\widehat b_h)}$.\vspace{-1em}
\end{itemize}
According to our theoretical analysis, we set $c_1=1$ and $c_2=\sqrt{\log(n_{\min})}$. This ensures that scenarios where variance dominates the bias are categorized within the small reward shift region. Conversely, when the bias exceeds the established high confidence bound, it is classified under the large reward shift regime. 

We depict the boundaries between different regimes in Figure \ref{fig:single_withSPE}.
It can be seen that in the small $b_h$ regime, the SPE estimator is the top performer. However, the MSE of our proposed non-pessimistic estimator is close to that of the SPE estimator. As $b_h$ grows to moderate levels, our pessimistic estimator achieves smaller or comparable MSEs compared to other alternatives. Finally, in the large $b_h$ regime, our pessimistic estimator achieves comparable performance to the EDO estimator, both outperforming other estimators in terms of MSE. These findings 
establish a concrete link between our theories and empirical observations. Particularly, 
they numerically verify our theoretically identified optimal method within each respective regime. 
Additionally, the bottom panel of Figure \ref{fig:single_withSPE} specifically reports the mean MSEs for methods excluding SPE, offering an in-depth comparison of the other estimators' performance. Here, Lasso is implemented with a carefully selected tuning parameter, which has been determined to yield reasonably good performance. However, as illustrated by additional numerical results in Figure \ref{fig:real_data_based_Lasso_all} in the Appendix, this estimator is sensitive to the choice of the tuning parameter.

\begin{figure}[t]
	\centering
	\includegraphics[height=5.25cm, width=8.5cm]{./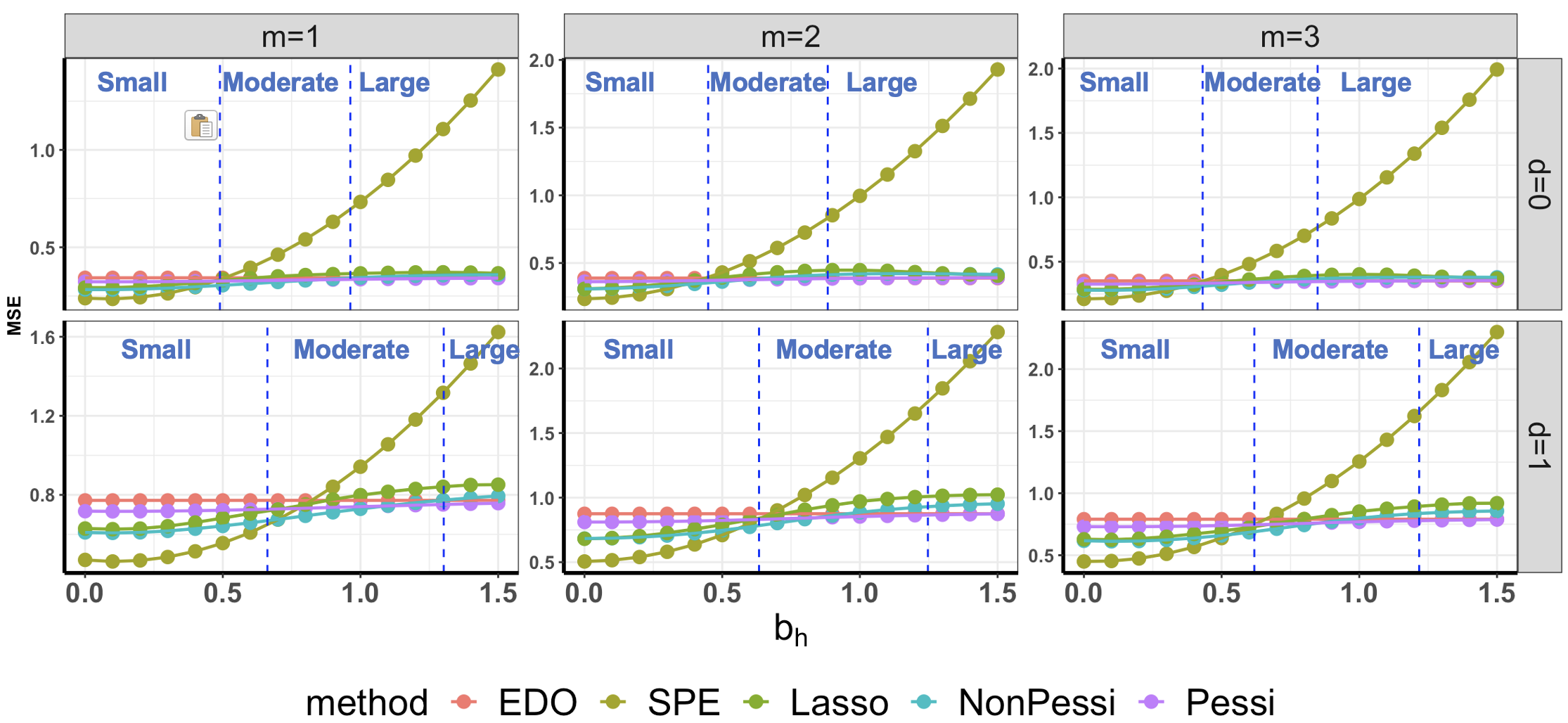}\\
 	\includegraphics[height=5.25cm, width=8.5cm]{./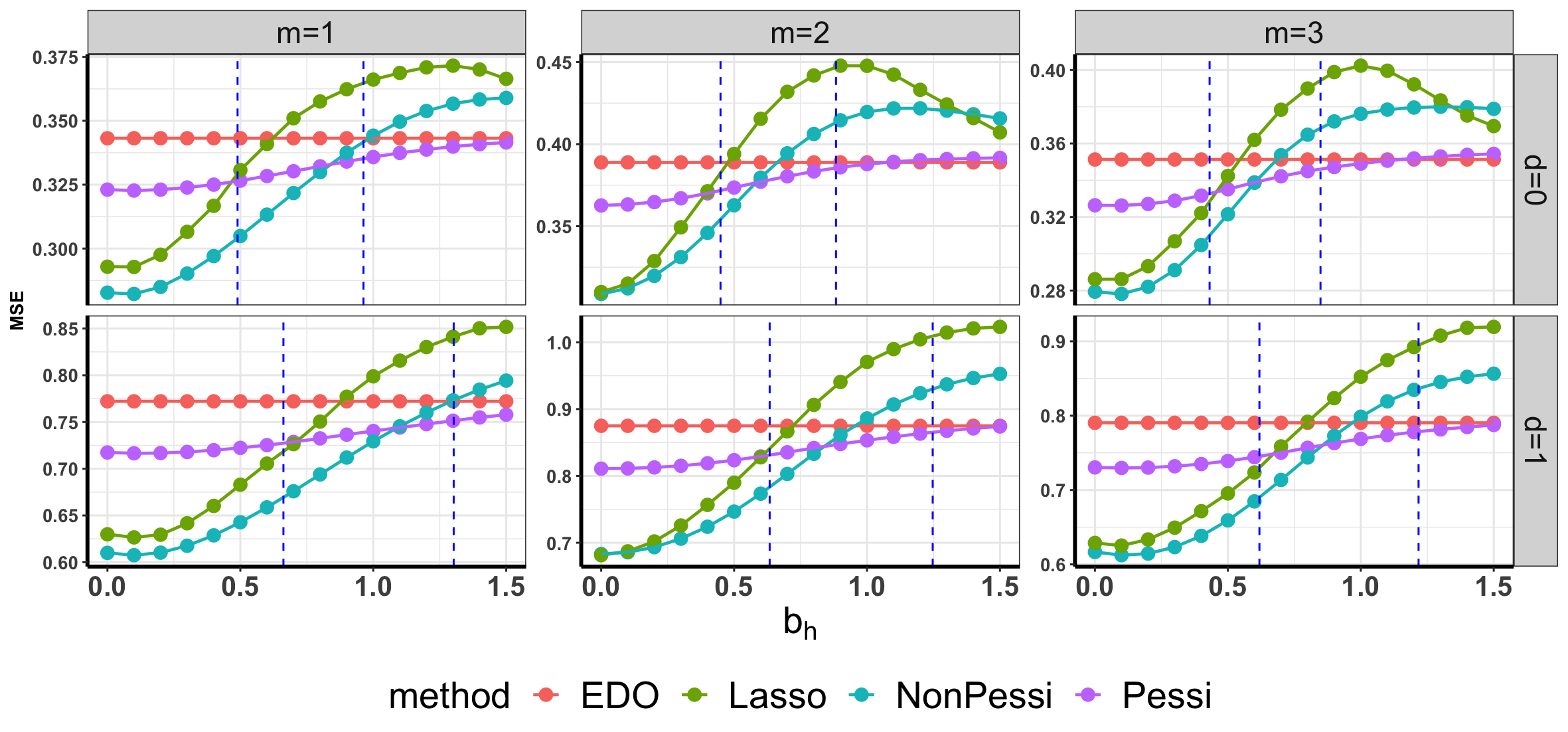}
	\vspace{-0.3 in}
	\caption{\small  Empirical means of MSEs for different methods under the switchback design in Example \ref{ex:single_stage}. The top panel displays all the methods, whereas the bottom panel focuses on the area excluding the SPE method. }
	\label{fig:single_withSPE}
  \vspace{-0.2 in}
\end{figure}

\end{example}

\begin{example}[\bf Ridesharing-data based sequential simulation]
\label{ex:real_data_based_agnostic}
In this example, we build a simulation environment based on a real dataset collected from a ridesharing company. The experimental data lasts for $|\mathcal{D}_e|=30$ days and is generated from a switchback design. 
We divide each day into $T=24$ time intervals. 
The state variable consists of the number of order requests and the driver's total online time within each one-hour time interval. The reward is defined as the total income earned by the drivers within each time interval.
To generate the historical data, we assume it contains another $|\mathcal{D}_h|=m|\mathcal{D}_e|$ days with $m \in \{1,2,3\}$, and set $b_h$ as a linearly increasing sequence ranging from 0 to 0.3, consisting of eight values, and choose the conditional variance difference parameter $d$ from the set $\{0,0.5\}$.  Refer to Appendix \ref{sec:additional experiments} for the detailed data generating process. 

\begin{figure}[t]
	\centering
	\includegraphics[height=5.25cm, width=8.5cm]{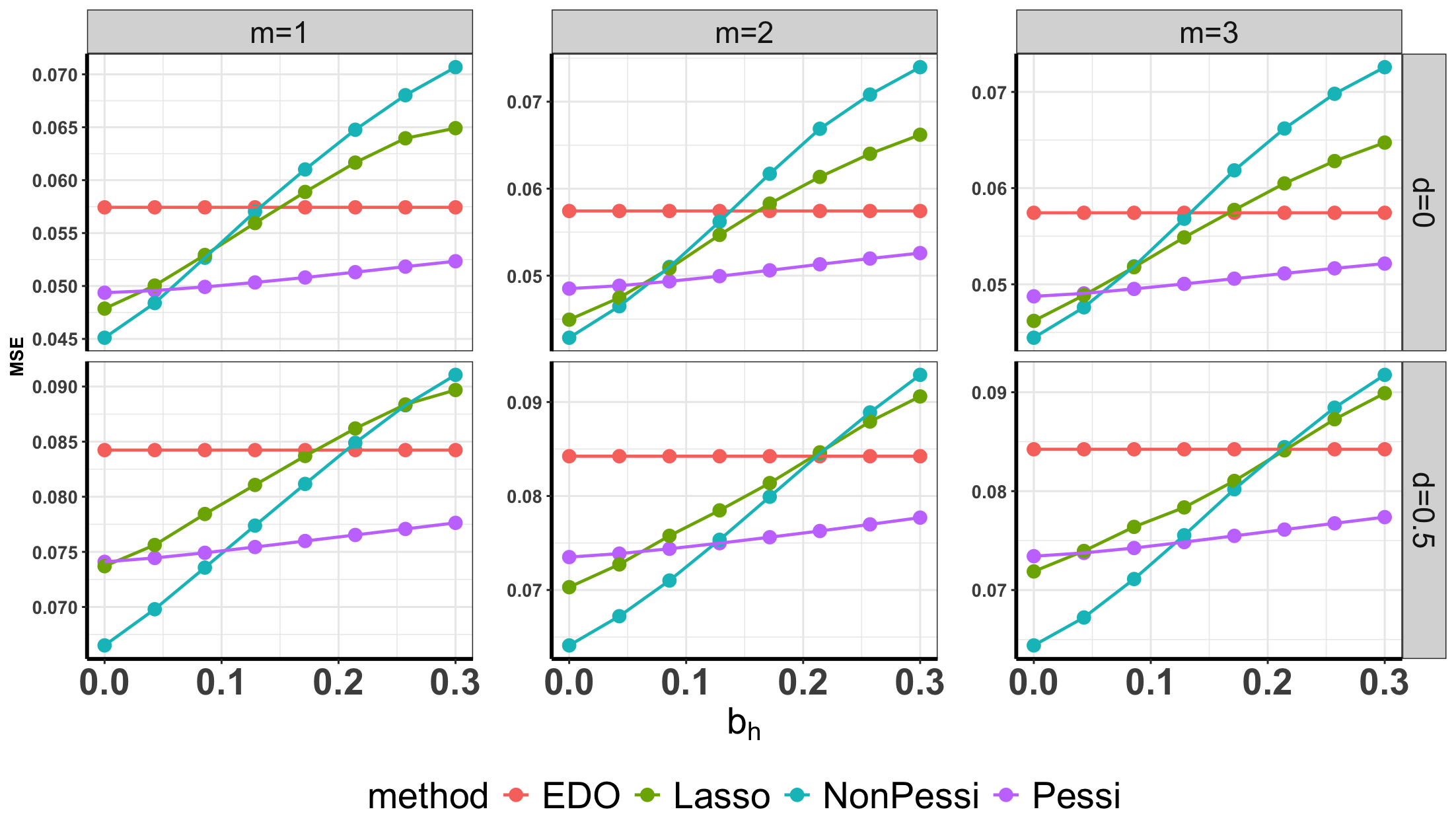}
 	\includegraphics[height=5.25cm, width=8.5cm]{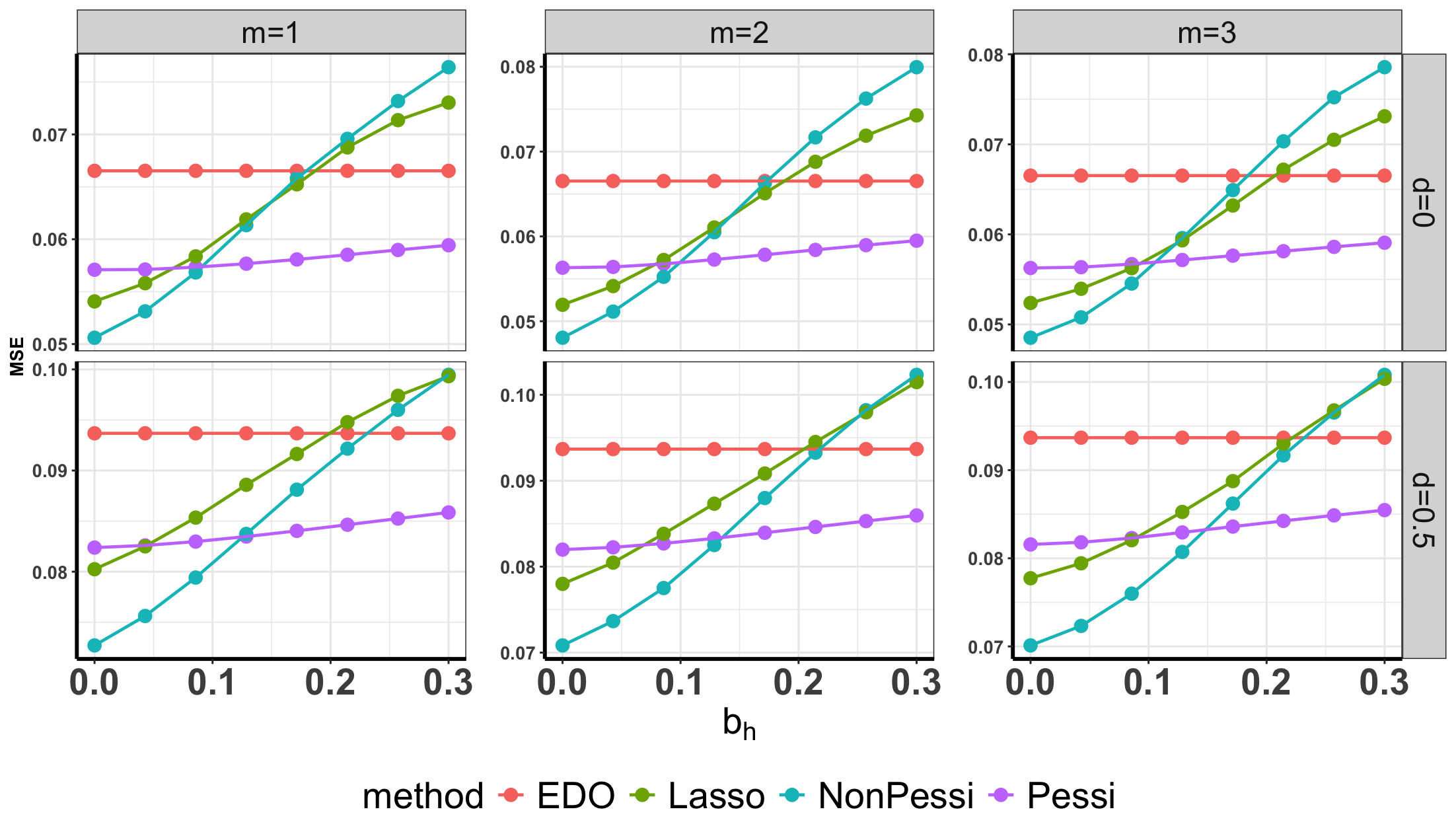}
	\vspace{-0.3 in}
	\caption{\small  Empirical Means of MSEs for different methods in Example \ref{ex:real_data_based_agnostic}. The treatment effect ratios are equal to $5\%$ (Top), $10\%$ (Bottom), respectively.}
 \vspace{-0.2 in}
\label{fig:real_data_based_agnostic_all}
\end{figure}

Figure \ref{fig:real_data_based_agnostic_all} reports the empirical means of the MSEs of different estimators. 
The Lasso method is again implemented with a reasonably good tuning parameter. 
It can be seen that the proposed ``NonPessi" estimator outperforms Lasso in most cases. When $b_h$ is large, the two methods perform comparably. 
In contrast, our proposed ``Pessi'' showcases robustness in dealing with the distributional shift. 
When $b_h$ increases, the MSEs of Lasso and ``NonPessi'' increase significantly, while ``Pessi'' has only a slight increase in the MSE. It is also consistently better than the EDO estimator, demonstrating the usefulness of data integration.  
\end{example}

\begin{example}[{\bf Ridesharing-data based non-dynamic simulation}]
\label{ex:real_data_based_non_dynamic}
    We also conduct a real-data based non-dynamic simulation study to compare different estimators. The findings are very similar. To save space, we relegate the detailed results to Appendix \ref{sec:additional experiments}. 
\end{example}

\begin{example}[{\bf Clinical-data based non-dynamic simulation}]
\label{ex:clinical_data_based_non_dynamic}
The data for this experiment is sourced from the AIDS Clinical Trials Group Protocol 175, involving 2139 HIV-infected individuals. Participants were randomly assigned to one of four treatment groups: zidovudine (ZDV) monotherapy, ZDV+didanosine (ddI), ZDV+zalcitabine, or ddI monotherapy \citep{hammer1996trial}. Following the analyses in \cite{lu2013variable} and \cite{shi2019testing}, we use the CD4 count (cells/mm3) at 20 ± 5 weeks post-baseline as the continuous response variable, and consider three contextual variables: age (in years), homosexual activity (0=no, 1=yes), and hemophilia (0=no, 1=yes).

Based on this clinical dataset, we create a simulation environment to test our methodology. The findings from this clinical-data-based non-dynamic experiment align with the patterns observed in the above examples. Details about the experimental settings and figures summarizing the findings can be found in Appendix \ref{sec:additional experiments}.
\end{example}

Furthermore, we conduct an additional experiment in Example \ref{ex:inference} of Appendix \ref{sec:additional experiments}  to evaluate the coverage probabilities of the confidence intervals (CIs). While maintaining nominal coverage, the pessimistic estimator yields narrower confidence intervals compared to the EDO estimator, indicating an improvement in efficiency by incorporating historical data.
 We also develop a hybrid procedure that chooses different methods according to the magnitude of the bias in Appendix \ref{sec:hybrid_procedure}.

\section*{Acknowledgement}
We thank the anonymous referees and the meta reviewer for their constructive comments, which have led to a significant improvement of the earlier version of this article.
Li’s research is partially supported by the National Science Foundation of China 12101388, CCF- DiDi GAIA Collaborative Research Funds for Young Scholars and Program for Innovative Research Team of Shanghai University of Finance and Economics. Shi’s research is partially supported by an EPSRC grant EP/W014971/1.

\section*{Impact Statement}
This paper introduces innovative methods for policy evaluation, particularly focusing on the integration of multiple data sources to enhance decision-making processes. 
There are many potential societal consequences of our work, none which we feel must be specifically highlighted here.

\bibliography{DataCombination_refs}
\bibliographystyle{icml2024}

\newpage
\appendix
\onecolumn
\renewcommand{\thefigure}{A\arabic{figure}}
\setcounter{figure}{0}

\section{Additional Experiment Results}
\label{sec:additional experiments}
In this section, we present details of the data-generating process for Section \ref{sec:numerical} and additional experiment results.

\textit{Example \ref{ex:single_stage}} {\bf (Continued).}
We consider the reward function as follows,
\begin{eqnarray*}
R_{e} = 10+  b_h  +A_{e} +  S_{e} + (2 + d) \varepsilon_{e}, \quad
R_h = 10 + S_h + \varepsilon_{h},
\end{eqnarray*}
where $S_{e}, S_h$'s and $\varepsilon_{e}, \varepsilon_{h}$'s are from standard normal distribution $N(0,1)$.
The sample size $|\mathcal{D}_e|=48$ with a horizon of $T=1$. 
The sample size of historical data is set to be $|\mathcal{D}_h|=m|\mathcal{D}_e|$ with $m \in \{ 1,2,3 \}$.
We consider the switchback design, which alternates the treatment and the control along the time.
We set $b_h$ to range over the set $\{0, 0.1, 0.2, \dots 1.5\}$. A larger value of $b_h$ indicates a greater difference in the average cumulative reward of the control between the historical data and the experimental data. Meanwhile, we use $d \in \{0, 1\}$ to characterize the difference of the conditional variance of the reward between the historical data and the experimental data. 

{\textit{Example \ref{ex:real_data_based_agnostic}}} {\bf(Continued).}
We perform the analysis based on the real dataset obtained from a prominent ridesharing company. The dataset covers the period from May 17th, 2019, to June 25th, 2019, with one-hour time units, resulting in a total of $T=24$ hours per day, collected over the span of $N=40$ days. To protect privacy, specific details about the company and cities are omitted, and all data, including states and rewards, are scaled to ensure privacy. 
In particular, the state variable consists of the number of order requests and the driver's total online time within each one-hour time interval. The reward is defined as the total income earned by the drivers within each time interval. Noting daily temporal trends in these variables, as shown in Figure \ref{fig:cityAB_t_pattern}, a time-varying Markov Decision Process (MDP) model is employed to understand the dataset dynamics. The dataset is based on the A/A experiments, in which a single order dispatch policy was consistently applied over time ($A_t=0$ for all $t$).
Similar to the data generating process in \citet{luo2022policy},
we create a simulation environment using the wild bootstrap method \citep{wu1986jackknife} based on this A/A dataset. In general, we assume the following time-varying linear models:
\begin{equation}\label{linear_mdp}
\left\{ 
\begin{split}
R_{t} &=\alpha_t + S_t^\top \beta_t + \gamma_t A_t +e_t, \\
S_{t+1}&=\phi_t + \Phi_t S_t + \Gamma_t A_t +E_t,
\end{split}
\right. 
\end{equation}
where $\alpha_t$ and $\gamma_t$ are real-valued scalars, $\beta_t,\phi_t,$ and $\Gamma_t$ are vectors in the space $\mathbb{R}^d$,  $\Phi_t \in \mathbb{R}^{d\times d}$, $e_t$ is the time-dependent random noise, and $E_t$ is the time-independent random error vector. Specifically, we first fit the data based on the linear models in \eqref{linear_mdp} by setting $\gamma_t=\Gamma_t=0$ and derive the estimates 
$\{\widehat{\alpha}_t\}_t$, $\{\widehat{\beta}_t\}_t$, $\{\widehat{\phi}_t\}_t$ and $\{\widehat{\Phi}_t\}_t$. 
We then calculate the residuals in the reward and state regression models based on these estimators as follows:  
\begin{equation}\label{eqn:residuals}
	 \widehat{e}_{i,t}=R_{i,t}-\widehat{\alpha}_t-S_{i,t}^\top \widehat{\beta}_t, \quad \widehat{E}_{i,t}=S_{i,t+1}-\widehat{\phi}_{t}- \widehat{\Phi}_t S_{i,t}.
\end{equation}

To simulate data reflecting varied treatment effects, we introduce a treatment effect ratio $\lambda$, which is defined as the proportional impact on the average return of the baseline policy. Define treatment effect parameters {$\widehat{\gamma}_t=\delta_1\times (\sum_i R_{i,t}/(100\times N))$ and $\widehat{\Gamma}_t=\delta_2\times (\sum_i S_{i,t}/(100\times N))$}.  We examine $\lambda$ at $5\%$ and $10\%$ levels. Correspondingly, $\delta_1$ and $\delta_2$ are adjusted to ensure that both the direct and carryover effects increment by $\lambda/2$, cumulatively elevating the Average Treatment Effect (ATE) by $\lambda$.


To structure the experimental ($|\mathcal{D}_e|=30$ days) and historical ($|\mathcal{D}_h|=m |\mathcal{D}_e|$ days) datasets with $m \in \{1,2,3\}$, we employ an alternating time interval design with a 3-hour span for the experimental dataset, and a global control for the historical dataset (i.e., $A_{i,t}=0$ for all $i, t$). We introduce i.i.d. standard Gaussian noise $\{\xi_i\}_{i=1}^{|D_e|}$ or $\{\xi_i\}_{i=1}^{|D_h|}$ for each dataset. For each day $i$, a random integer from set $I$ (where $I \in  \{1, 2, \dots, N \}$) is selected to determine the initial state $S_{I,1}$. For the historical dataset, the $b$-th bootstrap sample of rewards and states is generated following specific equations.
\begin{equation*}
\left\{ 
\begin{split}
\widehat{R}_{i,t}^b &=\widehat{\alpha}_t + (\widehat{S}_{i,t}^b)^\top \widehat{\beta}_t + \widehat{\gamma}_t A_{i,t} +\xi_i^b \hat{e}_{i,t}, \\
\widehat{S}_{i,t+1}^b &=\widehat{\phi}_t + \widehat{\Phi}_t \widehat{S}_{i,t}^b + \widehat{\Gamma}_t A_{i,t} +\xi_i^b \widehat{E}_{i, t},
\end{split}
\right. 
\end{equation*}
with the estimated $\{\widehat{\alpha}_t\}_t$, $\{\widehat{\beta}_t\}_t$, $\{\widehat{\phi}_t\}_t$, $\{\widehat{\Phi}_t\}_t$, the specified $\{\widehat{\gamma}_t\}_t$ and $\{\widehat{\Gamma}_t\}_t$, and the error residuals given by $\{\xi_i^b \widehat{e}_{i,t}:1\le t\le T\}$ and $\{\xi_i^b \widehat{E}_{i,t}:1\le t\le T\}$, respectively. In the experimental dataset, we exclusively generate the $b$-th Bootstrap sample of reward with shifted mean parameter $b_h$ and standard deviation parameter $d$, according to the equation, $\widehat{R}_{i,t}^b =\widehat{\alpha}_t + (\widehat{S}_{i,t}^b)^\top \widehat{\beta}_t + \widehat{\gamma}_t A_{i,t}+b_h +(1+d) \xi
_i^b \widehat{e}_{i,t}$, and we continue to generate states based on the settings of historical data. A summary of the bootstrap-assisted procedure is provided in Algorithm \ref{algo:res_bootstrap}.

\begin{algorithm}[h]
	\caption{Bootstrap-assisted procedure.}
 \label{algo:res_bootstrap}
 \begin{algorithmic}[1]
 \Require
Real data $\left\lbrace (S_{it}, R_{it}): 1 \leq i \leq N; 1 \leq t \leq T \right\rbrace$, the adjustment parameters for the ratios $(\delta_1, \delta_2)$, the assignment of actions, the bootstrapped sample size ($n=|D_e| \text{ or } n=|D_h|$, where $|D_h|=m |D_e|$), shifted mean parameter $b_h$ and standard deviation parameter $d$, random seed, the number of replications $B=200$.

\State 
\textbf{Initialization:} Calculating the least square estimates $\left\lbrace
	\hat{\alpha} \right\rbrace_t $,  $\left\lbrace
	\hat{\beta}_t \right\rbrace_t $,  $\left\lbrace
	\hat{\phi}_t \right\rbrace_t $,  $\left\lbrace
	\hat{\Phi}_t \right\rbrace_t $  in the model (\ref{linear_mdp}), treatment effect parameters  $\left\lbrace \hat{\gamma}_t \right\rbrace_t $ and  $\left\lbrace
	\hat{\Gamma}_t \right\rbrace_t $ and the residuals of reward model and state regression model by the  \eqref{eqn:residuals}\;
	
\For{$1\leq b\leq B$}
\State 
		Sampling the number of days $n$ from $\left\lbrace 1, \cdots ,N \right\rbrace $ with replacement, and generating the i.i.d. normal random variables $  \xi_{i}^b  \sim N(0,1)$;
\State
		Generating the pseudo rewards$\left\lbrace \hat{R}_{i,t}^b \right\rbrace_{i,t} $ and states $\left\lbrace \hat{S}_{i,t}^b \right\rbrace_{i,t} $ according to the following equations, 
			\begin{equation*}
		\hat{R}_{i,t}^b=[1, (\hat{S}_{i,t}^{b})^\top, A_{i,t} ] \begin{pmatrix}
			\hat{\alpha}_t \\
			\hat{\beta}_t \\
			\hat{\gamma}_t
		\end{pmatrix}+D_ib_h  +(1+D_i d)\xi_{i}^b \hat{e}_{i,t}, \quad 
		\hat{S}_{i,t+1}^b=[\hat{\phi}_t, \hat{\Phi}_t, \hat{\Gamma}_t ] \begin{pmatrix}
			1\\
			\hat{S}_{i,t}^{b} \\
			A_{i,t}
		\end{pmatrix} + \xi_{i}^b \hat{E}_{i,t},
	\end{equation*}
	  where $D_i = 1$ corresponds to index in the experimental dataset, while $D_i = 0$ corresponds to index in the historical dataset\;
   \State
		Calculating the set of estimators $\left\lbrace
		\textrm{ATE}^{b}
		\right\rbrace_{b} $ by proposed methods and state-of-the-art methods.
\EndFor
 
\Ensure 
 The empirical means of MSEs of different ATE estimators.
\end{algorithmic}
\end{algorithm}

Furthermore, to explore the Lasso method's efficacy across various tuning parameters, we select a set of $\ell_1$-tuning parameters $\{0.8, 1.2, 1.6, 2, 4\}$. Additionally, we take the mean difference $b_h$ from a sequence ranging from 0 to 0.6 in increments, comprising 8 values.
Figure \ref{fig:real_data_based_Lasso_all} illustrates the performance comparison between varying $\ell_1$-tuning parameters and the EDO method. We observe that for smaller $b_h$ values, Lasso methods with lower tuning parameters outperform the EDO method. However, their performance declines as $b_h$ increases. In contrast, Lasso methods with larger tuning parameters maintain a consistent efficiency, comparable to the EDO method, regardless of $b_h$ values.

\begin{figure}[ht]
  \centering
  \includegraphics[width=0.9\textwidth,height=1.5in]{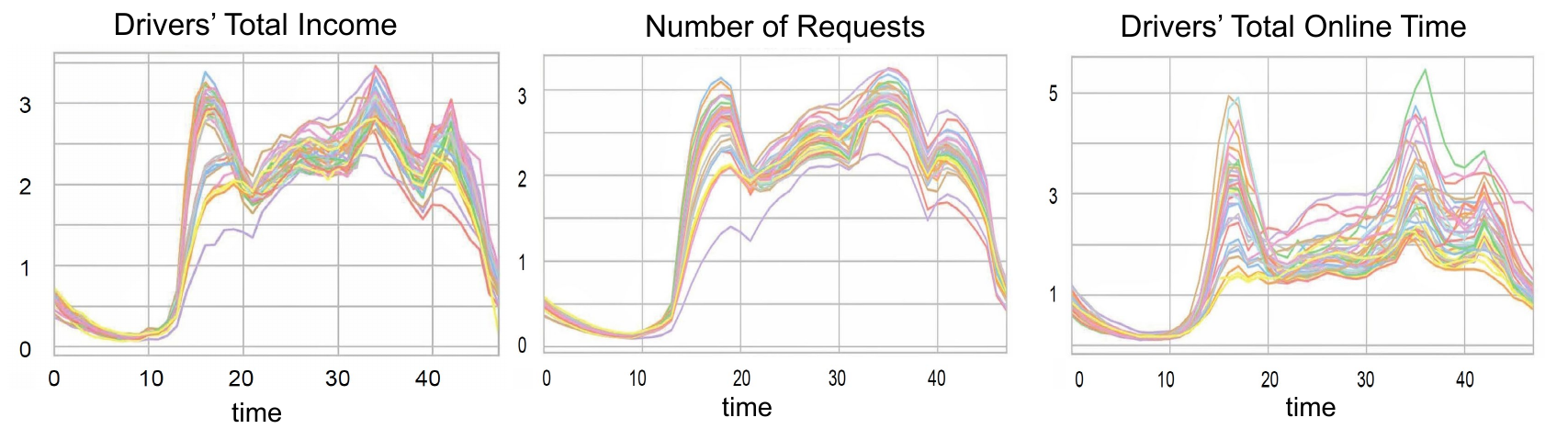}
  \caption{Visual representations of scaled states and rewards in one city across 40 days, comprising drivers' total income, the number of requests,  and drivers' total online time. Each line represents data from a specific day.}\label{fig:cityAB_t_pattern}
  \end{figure}

\begin{figure}[ht]
	\centering
      \includegraphics[height=6cm, width=8.5cm]{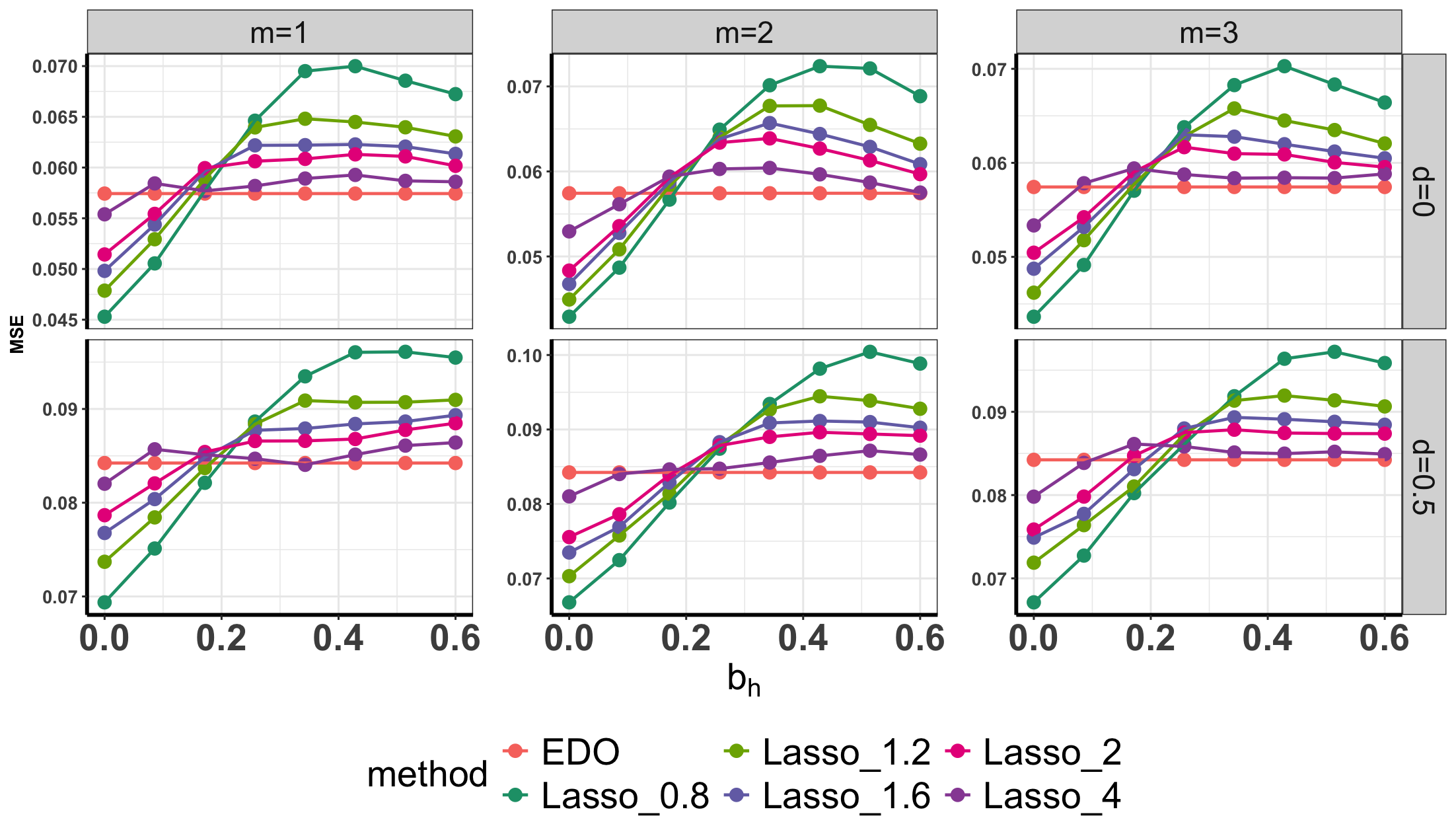}
            \includegraphics[height=6cm, width=8.5cm]{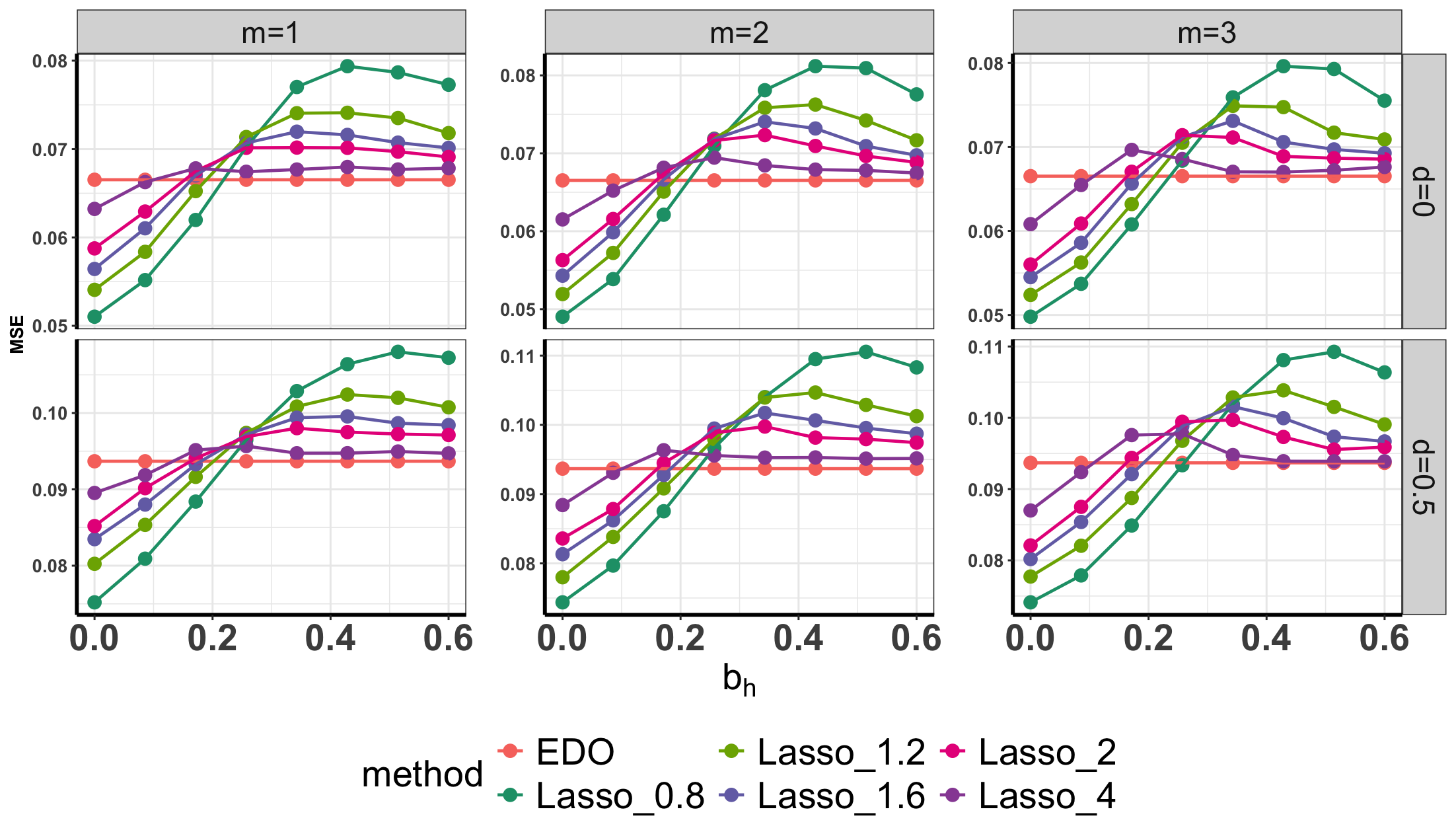}
            \vspace{-0.2 in}
	\caption{\small  Empirical Means of MSEs for Lasso with different tuning parameters under the switchback design in Example \ref{ex:real_data_based_agnostic}.  The treatment effect ratio parameters are equal to $5\%$ (Left) and  $10\%$ (Right), respectively. }
	\label{fig:real_data_based_Lasso_all}
\end{figure}

{\textit{Example \ref{ex:real_data_based_non_dynamic}}} {\bf(Continued).}
\label{ex:real_data_based_adaptive}
The data generating process is adapted from the approach outlined in Example \ref{ex:real_data_based_agnostic}.
Crucially, although the data is inherently sequential, we adapt it to a non-dynamic setting by treating each day as an independent instance with $T=1$.
This adjustment involves maintaining the same policy daily and defining
the daily average total income of drivers over all time intervals as the reward.
The state variable is represented by the number of order requests and the drivers' total online time during the initial time interval.
The sample size of the experimental data is $|\mathcal{D}_e|=30$ and that of the historical data is $|D_h|=m |D_e|$ with $m \in \{1,2,3\}$, and we take the mean difference { $ b_h $ from an arithmetic sequence ranging from $0$ to $0.6$, with a length of 8}, and the conditional variance difference $d \in \{0,0.5\}$ to explore different scenarios.

 Figure \ref{fig:single_real_data_based_adaptive_all_spe} illustrates the performance of our proposed estimators, EDO, SPE, and Lasso with a reasonably good tuning parameter. 
 The outcomes align with those observed in Example \ref{ex:single_stage}, demonstrating comparable insights. Meanwhile, Figure \ref{fig:single_real_data_based_adaptive_all} depicts the performance of these methods, excluding SPE, further validating our theoretical insights and highlighting the resilience of the "Pessi" estimator. Notably, as the treatment effect size grows, the "Pessi" approach exhibits increased stability across different values of $b_h$.

 various methods, excluding SPE, under the switchback design as detailed in Example \ref{ex:real_data_based_non_dynamic}. The figures display results for treatment effect ratio parameters set at $5\%$ (left) and $10\%$ (right) respectively.
\begin{figure}[ht]
	\centering
 	\includegraphics[height=6cm, width=8.5cm]{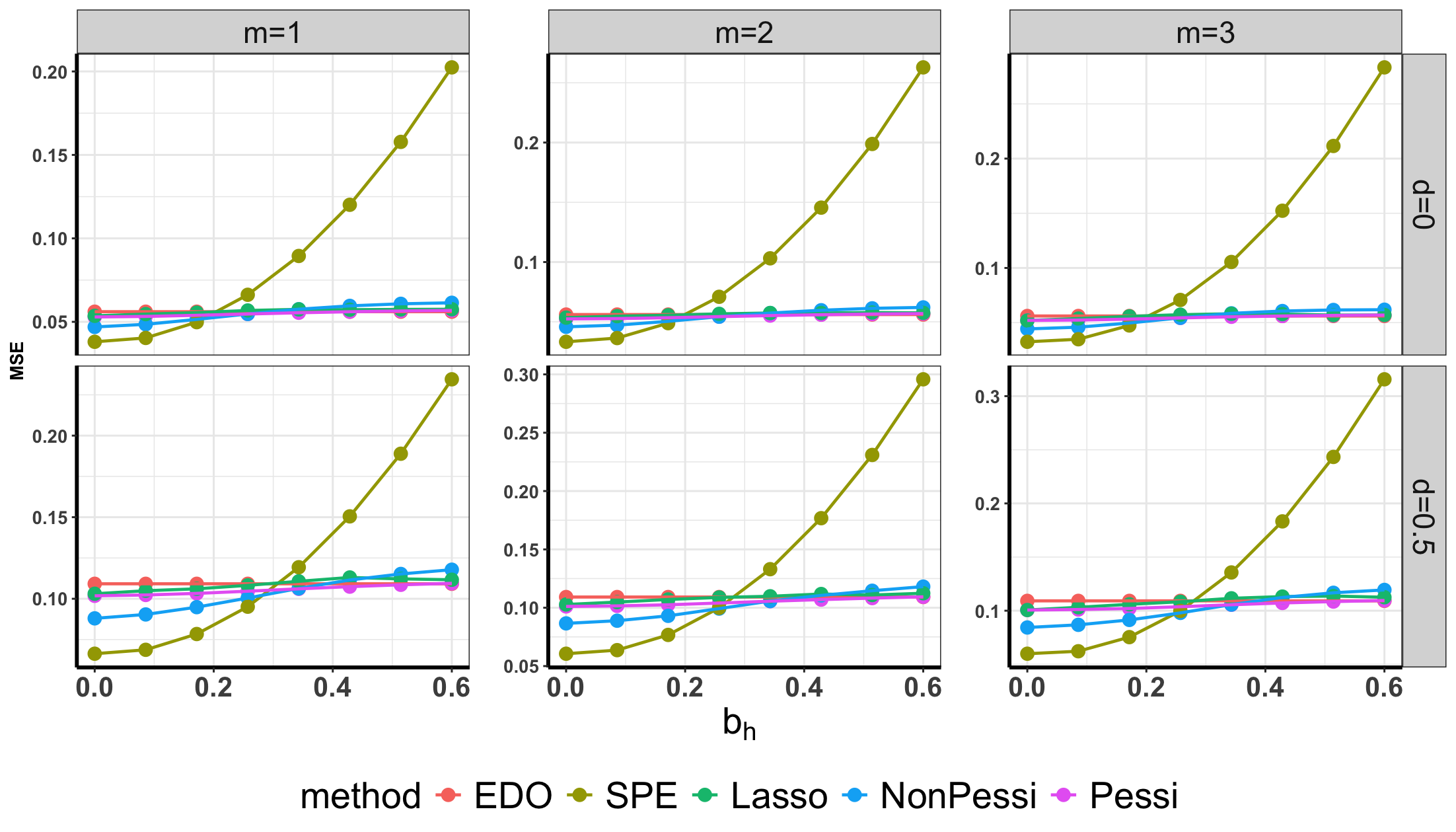}
      \includegraphics[height=6cm, width=8.5cm]{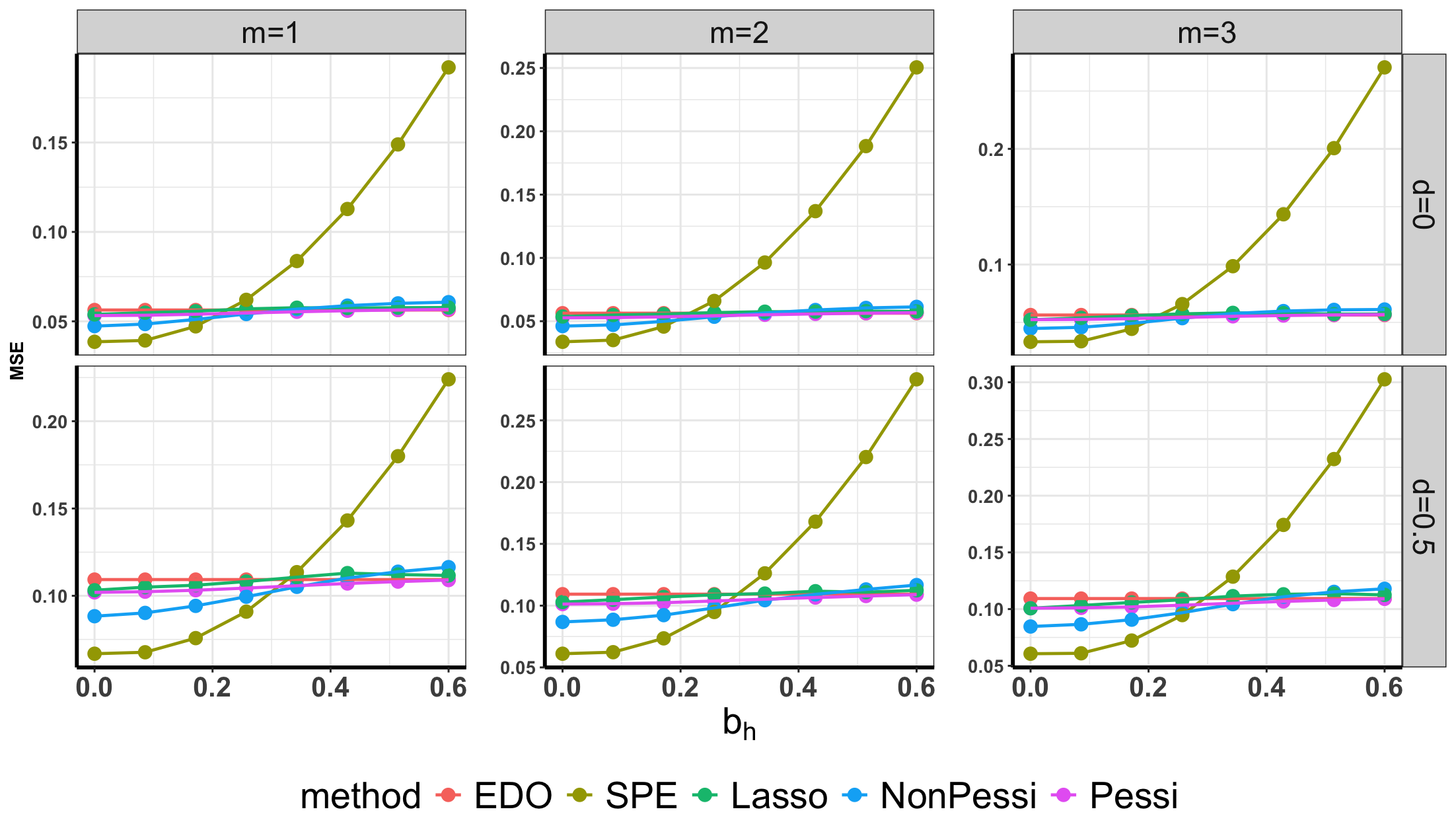}
	\caption{\small  Empirical means of MSEs for  various methods under the switchback design as detailed in Example \ref{ex:real_data_based_non_dynamic}.  The figures display results for treatment effect ratio parameters set at $5\%$ (left) and $10\%$ (right), respectively.
 }
	\label{fig:single_real_data_based_adaptive_all_spe}
\end{figure}
\begin{figure}[ht]
	\centering
      \includegraphics[height=6cm, width=8.5cm]{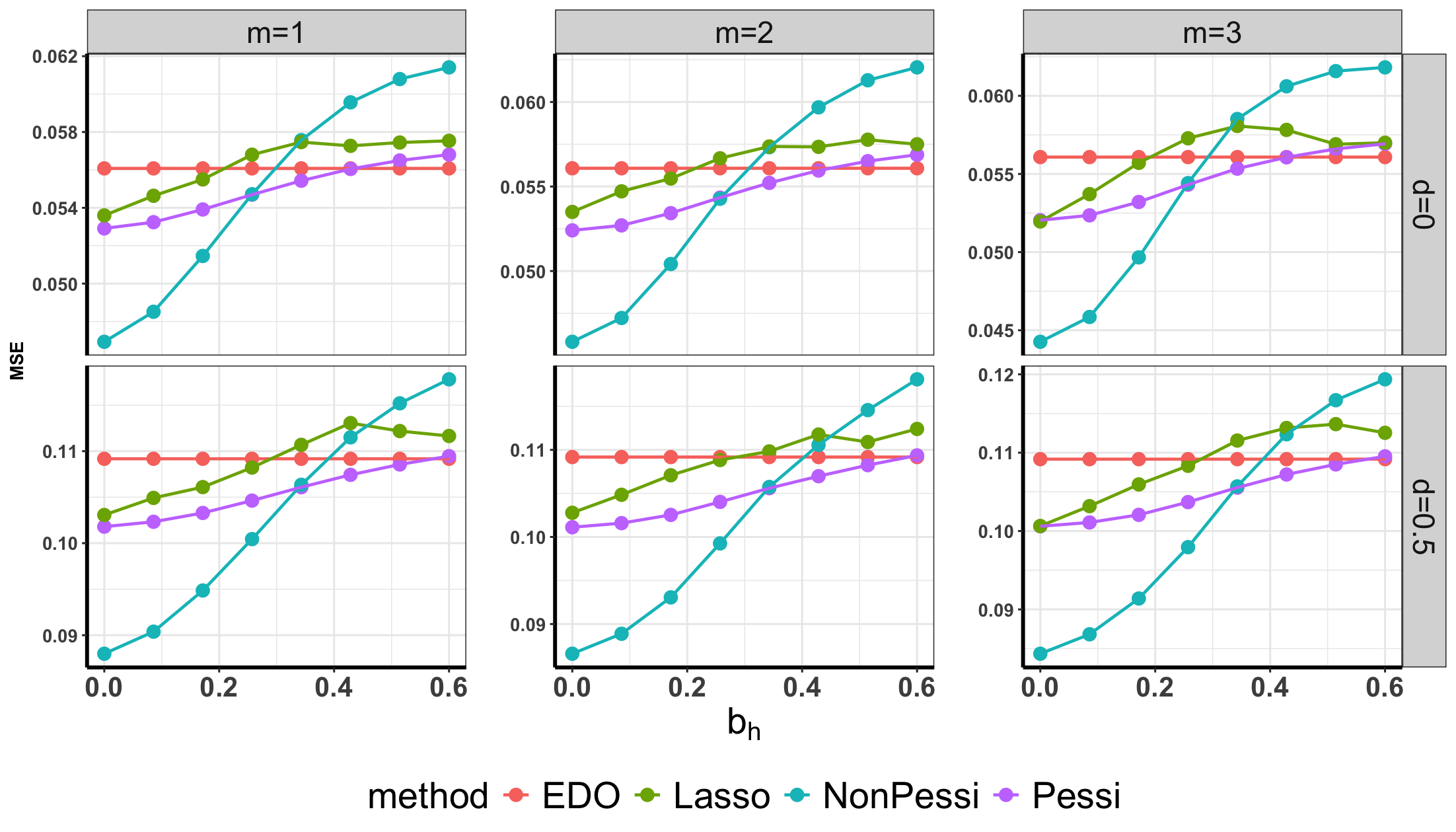}
            \includegraphics[height=6cm, width=8.5cm]{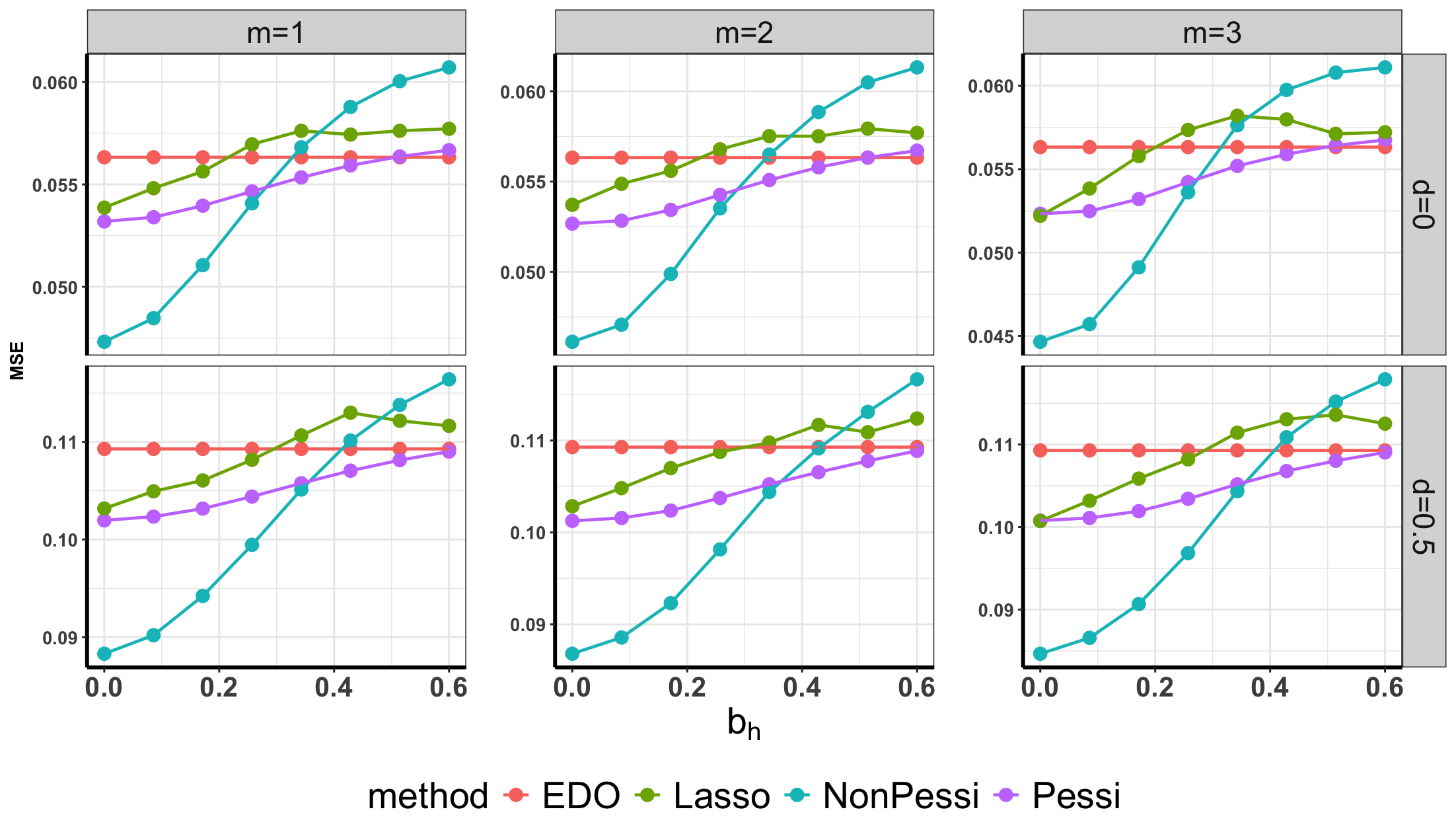}
	\caption{\small  Empirical means of MSEs for  various methods, excluding SPE,  under the switchback design in Example \ref{ex:real_data_based_non_dynamic}. The treatment effect ratio parameters are set to $5\%$ (Left) and $10\%$ (Right), respectively.}
	\label{fig:single_real_data_based_adaptive_all}
\end{figure}

{\textit{Example \ref{ex:clinical_data_based_non_dynamic}}} {\bf(Continued).}
We apply the proposed method to data from the AIDS Clinical Trials Group Protocol 175 (ACTG175), involving 2139 HIV-infected individuals. In this study, participants were randomly assigned to one of four treatment groups: zidovudine (ZDV) monotherapy, ZDV+didanosine (ddI), ZDV+zalcitabine, or ddI monotherapy \citep{hammer1996trial}. 
The CD4 count (cells/mm3
) at 20 
± 5 weeks post-baseline is chosen to be the continuous response variable $R$. 
Based on the significant factors identified in \citet{lu2013variable}, \citet{fan2017concordance} and \citet{shi2019testing}, 
the state variables are chosen to be age (in years), homosexual activity (0=no, 1=yes), and hemophilia (0=no, 1=yes).

We focus on two groups of patients receiving treatments ZDV+ddI (with a sample size of 522) and ZDV+zal (with a sample size of 524). 
To construct a simulation environment of these data, we 
use various nonlinear models to fit the data, taking into account the complex relationships between the state variables and the response. The goodness-of-fit for each model is assessed through examining their residual plots. This rigorous evaluation process has led us to select the following nonlinear model:
\begin{equation}\label{eq:CD4_model}
  \begin{split}
          R=f(S_1,S_2,S_3)+\gamma A S_1+\epsilon,
  \end{split}
\end{equation}
where $f(S_1,S_2,S_3)=(1+\beta_1 S_{1})^2+\beta_2 S_{2}+ \beta_3 S_{3}+ \beta_4 S_{1}S_{2}+\beta_5 S_{1}S_{3}$. In this model,
$R$ represents the CD4 count, $S_1, S_2, S_3$ correspond to age, homosexual activity, and hemophilia, and $A$ indicates whether a patient is receiving ZDV+ddI ($A=1$) or ZDV+zal ($A=0$). 

After obtaining the estimators of the unknown parameters $(\widehat \beta_1, \widehat \beta_2, \widehat \beta_3, \widehat \beta_4, \widehat \beta_5, \widehat \gamma)$, we calculate the fitted values $\widehat f(S_1,S_2,S_3)$ by plugging in the estimates along with the estimated residuals $\widehat \epsilon$.
This enables us to generate
the experimental data and the historical data similar to the bootstrap-assisted procedure in Algorithm \ref{algo:res_bootstrap} (Appendix A, page 15),
\begin{eqnarray*}
    R_e &=& \widehat f(S_{e,1},S_{e,2},S_{e,3}) +\widehat \gamma A_e S_{e,1}+ b_h + (1+d) \widehat \epsilon_e, \\
    R_h &=& \widehat f(S_{h,1},S_{h,2},S_{h,3}) + \widehat \epsilon_h,
\end{eqnarray*}
where the sets $(S_{e,1},S_{e,2},S_{e,3}, \widehat \epsilon_e)$ and $(S_{h,1},S_{h,2},S_{h,3}, \widehat \epsilon_h)$ are sampled from the state variables and the estimated residuals with replacement based on 
\eqref{eq:CD4_model}.
Furthermore, $A_e=1$ and $A_e=0$ are generated with equal probability, $b_h$ is varied from $0$ to $0.15$ in increments to produce a series of eight values, and $d\in \{0, 1\}$. The selected sample size is $|\mathcal{D}_e|=200$ and $|\mathcal{D}_h|=m |\mathcal{D}_e|$ with $m \in \{1,2,3\}$.

\begin{figure}[ht]
	\centering
      \includegraphics[height=5cm, width=8.5cm]{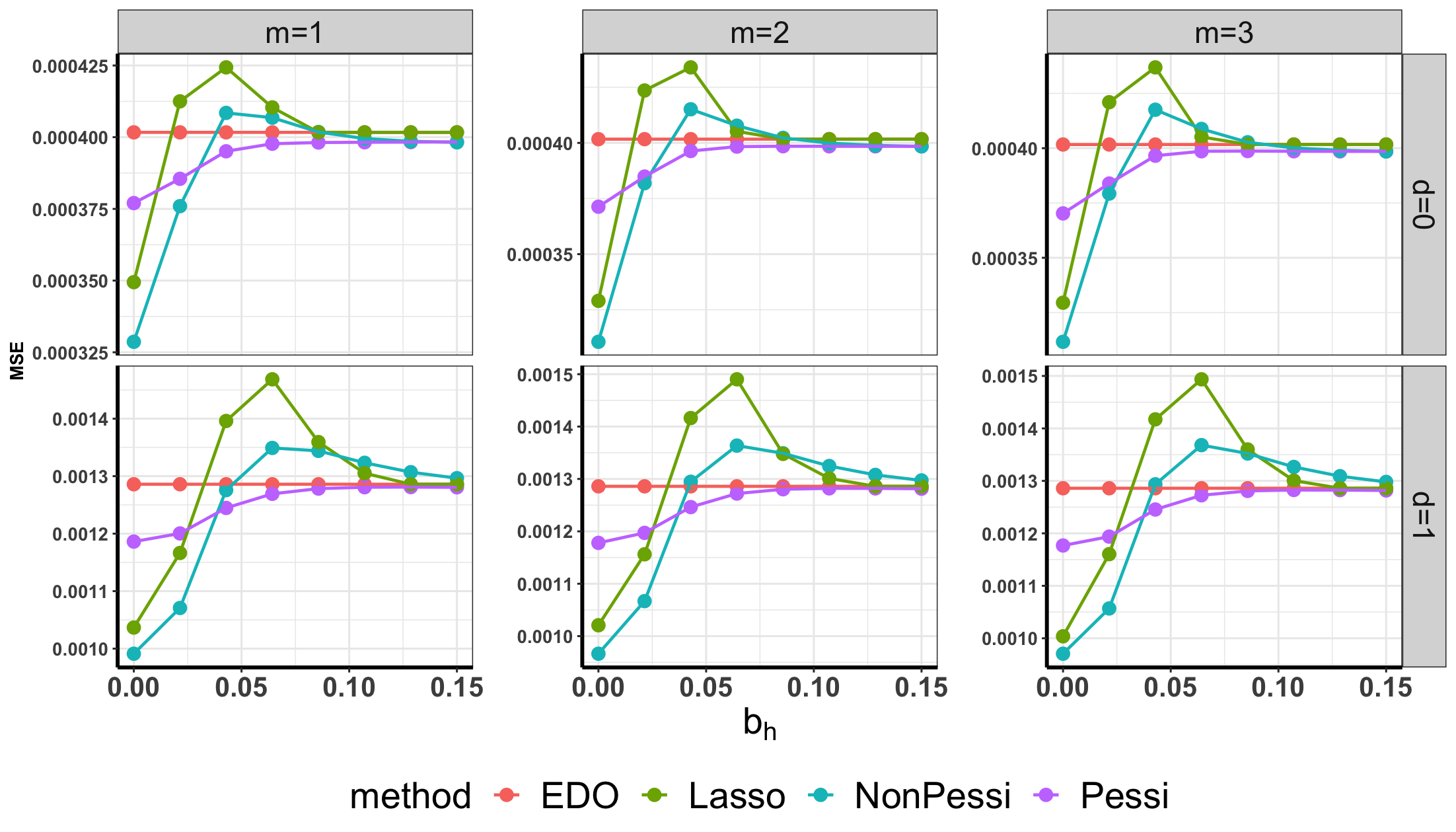}
            \includegraphics[height=5cm, width=8.5cm]{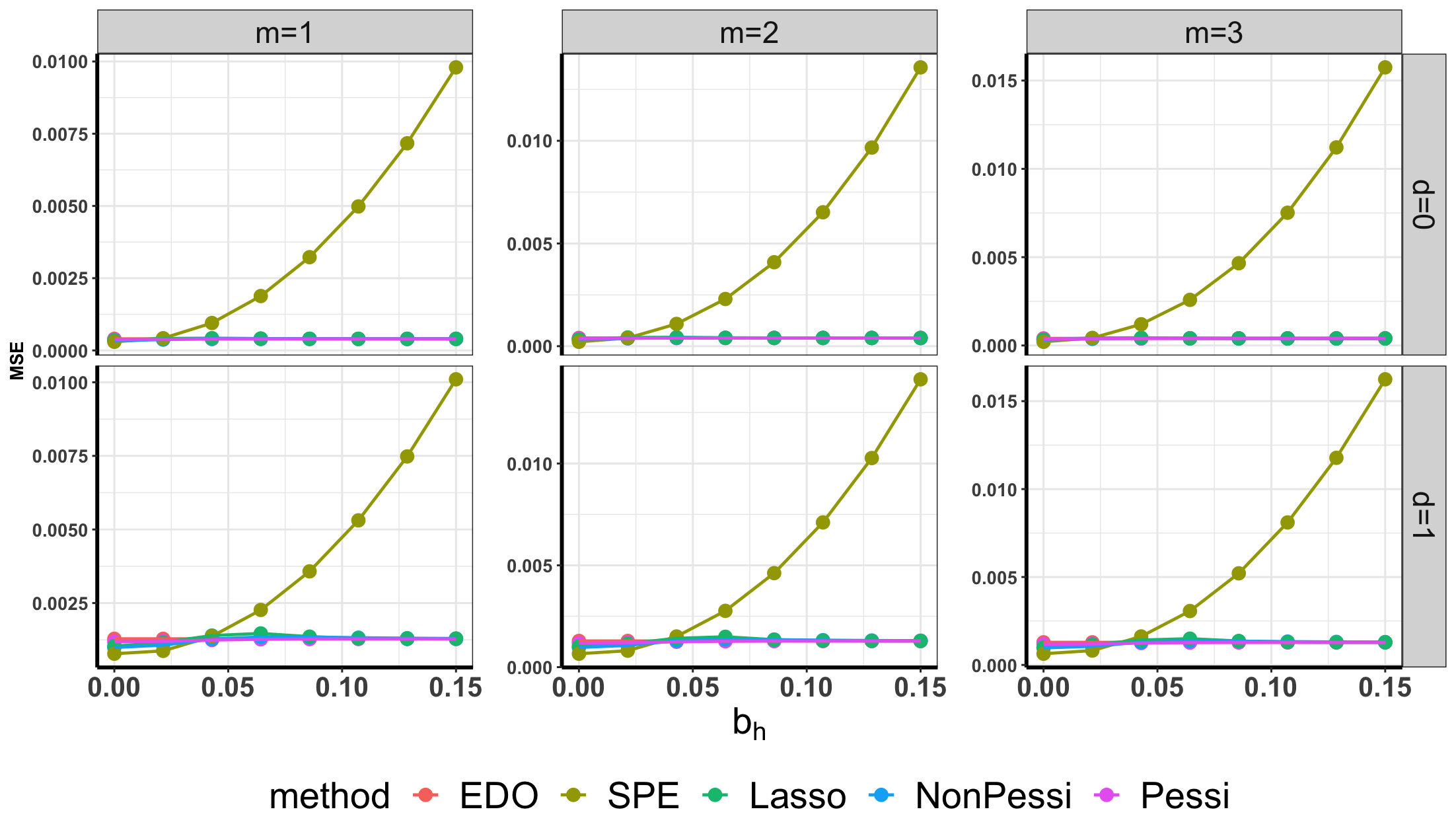}
	\caption{\small  Empirical means of MSEs for various methods in the clinical-data 
 based experiment in Example \ref{ex:clinical_data_based_non_dynamic}.}
	\label{fig:CD4_data_res}
\end{figure}
Figure \ref{fig:CD4_data_res} presents the empirical means of MSEs for all methods in the clinical-data based experiment. It further validates the effectiveness of the proposed methods, with findings aligning closely with those in Section \ref{sec:numerical}.

\begin{example}[Inference for the pessimistic method]
\label{ex:inference}
We further conduct an additional experiment to evaluate the coverage probabilities of the confidence intervals (CIs) based on our pessimistic estimator and compare them against the CIs based on the EDO and Lasso estimators.
Data settings mirror that in Example \ref{ex:single_stage}, except that $m=2$ is fixed and $|\mathcal{D}_e|=100$. The left panel of Figure \ref{fig:CI_results} displays the coverage probability of the $5\%$ confidence intervals across the three methods. 
Our findings indicate that the CIs based on both the pessimistic and EDO estimators achieve the expected nominal coverage (e.g., 0.95). In contrast, the CI based on the Lasso estimator exhibits significant undercoverage. Additionally, while maintaining nominal coverage, the pessimistic estimator yields narrower confidence intervals compared to the EDO estimator, indicating an improvement in efficiency by incorporating historical data.

\begin{figure}[ht]
	\centering
      \includegraphics[height=8cm, width=12cm]{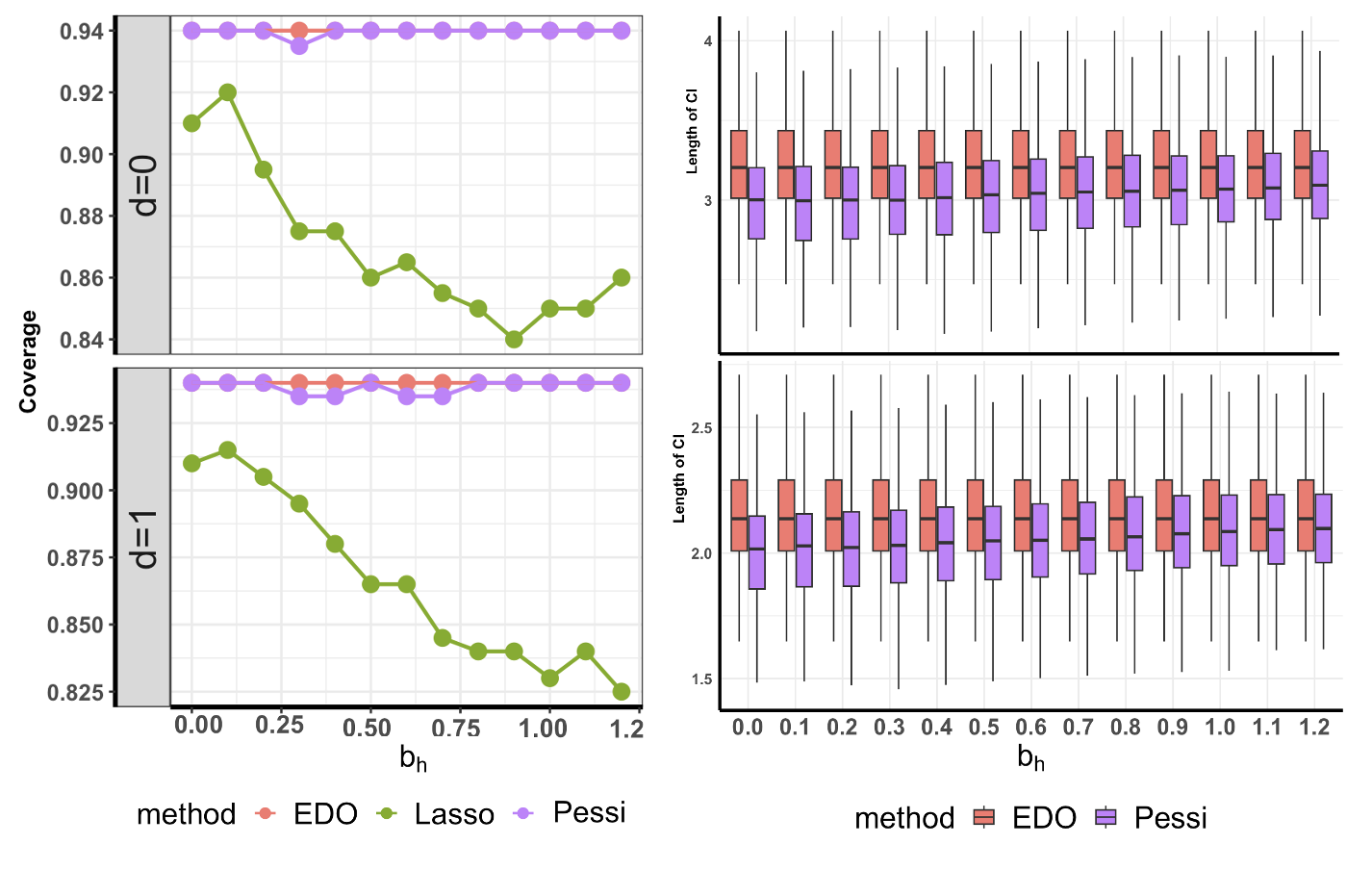}
	\caption{\small Coverage probabilities of the 95\% confidence interval (left panel) and the boxplots of the length of the confidence intervals (right panel).}
	\label{fig:CI_results}
\end{figure}
    
\end{example}

\section{A hybrid procedure}
\label{sec:hybrid_procedure}
 If we have prior knowledge about the reward shift $b_h$, we can introduce a hybrid procedure that leverages the strengths of each method within their optimal ranges. It consists of the following steps. 
For given thresholds $c_1, c_2$, 
\begin{itemize}
    \item 
    If $| b_h| \leq c_1 \sqrt{Var(\widehat b_h)} $, the bias is expected to be small and the SPE estimator \citep{li2023improving} is adopted.

    \item 
    If $c_1 \sqrt{Var(\widehat b_h) }<| b_h| \leq c_2 \sqrt{Var(\widehat b_h) }$, indicating moderate bias,  the proposed pessimistic method is then applied. 

    \item 
    If $| b_h| > c_2 \sqrt{Var(\widehat b_h)} $, signifying substantial bias and the EDO estimator $\widehat \tau_e$ is employed.
\end{itemize}
 According to our theoretical analysis, we can choose $c_1=1$ and $c_2=\sqrt{\log(n_{\min})}$. 
This ensures that scenarios, where variance dominates the bias, are categorized within the small reward shift region. Conversely, when the bias exceeds the established high confidence bound, it is classified under the large reward shift regime.

We conduct an additional experiment that includes the hybrid procedure, under the assumption that there is pre-existing knowledge about the specific regime to which the data corresponds. The experimental parameters are aligned with those outlined in Example \ref{ex:single_stage}, as detailed on page 7. Figure \ref{fig:hybrid_results} presents the empirical means of MSEs of all methods, with dotted vertical lines depicting the boundaries. These empirical results formally verify our theoretical assertions, demonstrating the superiority of the optimal method in each identified region. Additionally, the findings demonstrate that the hybrid procedure, when applied with prior knowledge of the data regime, consistently outperforms the rest of the methods. 

\begin{figure}[ht]
	\centering
      \includegraphics[height=5cm, width=12cm]{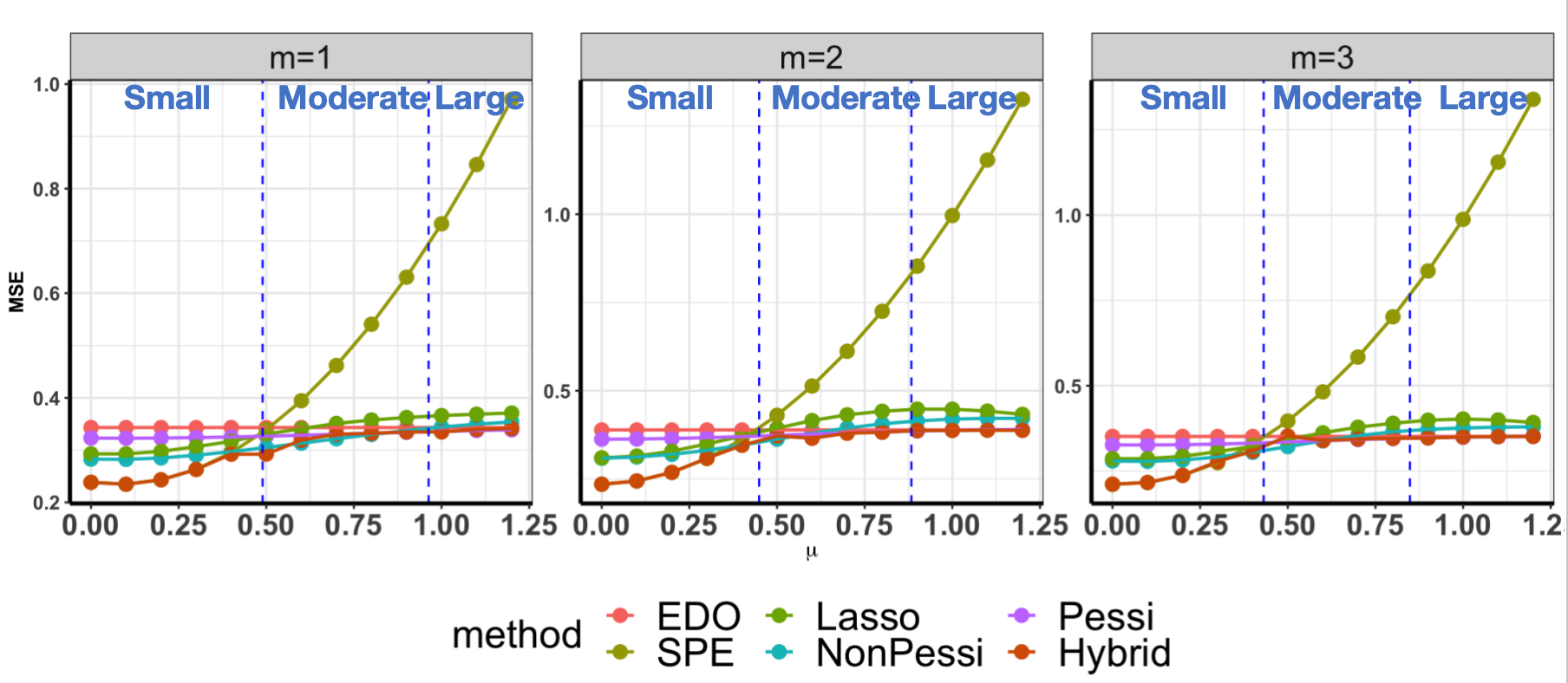}
	\caption{\small Empirical means of the MSEs across all the methods including the hybrid procedure. The vertical dotted blue lines denote the boundaries.}
	\label{fig:hybrid_results}
\end{figure}

In the absence of prior knowledge regarding $b_h$, estimating the regime to which the data belongs becomes necessary to implement the hybrid method. However, this estimation introduces additional variability. Consequently, our analysis reveals that the hybrid method, under these conditions, does not consistently outperform other methods in all scenarios. While optimizing the hybrid method for scenarios lacking prior regime knowledge presents a valuable avenue for future exploration, such an endeavor falls outside the scope of this current paper. 

\section{Extension to Sequential Decision Making}
\label{sec:theoretical_sequential}

Let $O_{e,t}$ be a shorthand of $(S_{e,t}, A_{e,t},R_{e,t})$ and denote $\{O_{e,t}\} $ containing all the data from $t=1, \dots, T$ in the experimental data. The EDO estimator can be represented by
$$
\widehat \tau_e = 
\frac{1}{|\mathcal{D}_e|}\sum_{\{O_{e,t}\} \in \mathcal{D}_e}\psi_e(\{O_{e,t}\}),
$$
where $\psi_e(\{O_{e,t}\}) =     \sum_{a=0}^1 (-1)^{a-1} \{V^a_{e,1} (S_{e,1})+ \sum_{t=1}^T \mu_t^a(A_{e,t},S_{e,t}) [R_{e,t}+ V^a_{e,t+1}(S_{e,t+1}) - V^a_{e,t}(S_{e,t}) ] \} $,
$\mu_t^1(A_{e,t},S_{e,t})$ and $\mu_t^0(A_{e,t},S_{e,t})$ are, respectively, the density ratio of the state-action pair of time $t$ under the treatment and control policy under the behavior policy of the experimental data, and $V^a_{e,t}(s)= \sum_{k=t}^T \Mean( R_{e,k} |S_{e,t}=s) $ is the value function for the experimental data. The estimator based on the historical data can be constructed as 
\begin{eqnarray*}
    \widehat{\tau}_h=\frac{1}{|\mathcal{D}_e|}\sum_{O_{e,1}\in \mathcal{D}_e}\psi_{h,1}(O_{e,1})-\frac{1}{|\mathcal{D}_h|}\sum_{\{O_{h,t}\}\in \mathcal{D}_h}\psi_{h,2}(\{O_{h,t}\}),
\end{eqnarray*}
where $\psi_{h,1}(O_{e,1}) = V^1_{e,1}(S_{e,1})+\sum_{t=1}^T\mu_t^1(A_{e,t}, S_{e,t}) [R_{e,t}+ V^1_{e,t+1}(S_{e,t+1}) - V^1_{e,t}(S_{e,t})]-V_{h,1}(S_e),$
$\psi_{h,2}(\{O_{h,t}\}) = \mu_t^h(S_{h,t})[ R_{h,t}+ V_{h,t+1}(S_{h,t+1}) - V_{h,t}(S_{h,t}) ]$, and 
$V_{h,t}(\bullet)= \sum_{k=t}^T \Mean( R_{h,k} |S_{h,t}=\bullet) $ is the value function for the historical data. 
The proposed estimator can be represented by $\widehat \tau_w = w \widehat \tau_e + (1 - w) \widehat \tau_h$.
In sequential decision-making, the bias caused by the reward shift is given by $b_h =\Mean[ V_{e,1}^0(S_e) ] -  \Mean[ V_{h,1} (S_e)  ]$.

The construction of $\widehat \tau_e $ and $\widehat \tau_w $ for the sequential decision making have similar patterns as that in Section \ref{sec:contextagnosticest}.
Following the methodologies developed in Section \ref{sec:contextagnosticest}, we adopt similar pessimistic and non-pessimistic strategies to determine the weight.

We next give the theoretical properties of the proposed estimators for sequential decision making. Similar to the analysis in the non-dynamic setting, we compare the proposed estimator with the oracle estimator $\widehat \tau_{w^*}$. Before that, we impose the following conditions that extend the Assumptions \ref{con:coverage}-\ref{con:double} to the sequential setting.

\begin{cond}
    \label{con:coverage sequential}
    There exists some constant $\varepsilon >0$ such that the true density ratios $\mu_t^{a,*}(a, s)\geq \epsilon$ and  $\mu_t^{h,*}(s)\geq \epsilon $ for any $t, a, s$.  
\end{cond}

\begin{cond}
\label{con:boundedness sequential}
(i) There exists some constant $R_{\max}$ such that $| R_{e,t}| \leq R_{\max}$ and $| R_{h,t}| \leq R_{\max}$ for $1\leq t \leq T$ almost surely. (ii) $| V_{e,t}^{a} |$ and $V_{h,t}$ are bounded by $(T+1-t)R_{\max}$.  (iii) 
$\mu_t^{a}$ and $\mu_t^h$ are lower bounded by $\varepsilon$.
\end{cond}

\begin{cond}
\label{cond:dr specification sequantial}
(i) Either $\mu_t^{a}$ or $V_{e,t}^{a}$ is correctly specified.  (ii) Either $\mu_t^h$ or $V_{h,t}$ is correctly specified.
\end{cond}

The following theorem provides a non-asymptotic upper bound for MSE of the non-pessimistic estimator.

\begin{theorem}[MSE of the non-pessimistic estimator]
\label{thm:MSE_nonpessi_sequential}
Under Assumptions \ref{con:coverage sequential}-\ref{cond:dr specification sequantial}, the excess MSE of the non-pessimistic estimator in sequential decision making satisfies
\begin{eqnarray*}
    \textrm{MSE}(\widehat \tau_{\widehat w} ) - \textrm{MSE}( \widehat \tau_{w^*} ) 
   \leq 
   \Mean\Big[ (1 - w^*)^2 - (1- \widehat w)^2 \Big](\widehat b^2_h - b_h^2) 
   + O \Big( \frac{ T^2 R^2_{\max}}{\varepsilon^2 n_{\min}^{3/2} } \Big).
\end{eqnarray*}
\end{theorem}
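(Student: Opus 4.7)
The plan is to adapt the strategy from the proof of Theorem \ref{thm:nonpSAE} to the sequential decision making setting, keeping careful track of how the horizon $T$ enters through the value functions and per-step importance ratios that appear in the doubly robust estimating functions.

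First, I would write the MSE of $\widehat{\tau}_w = w\widehat{\tau}_e + (1-w)\widehat{\tau}_h$ as the quadratic in $w$
\begin{equation*}
M(w) \equiv \textrm{MSE}(\widehat{\tau}_w) = (1-w)^2 b_h^2 + w^2 \Var(\widehat{\tau}_e) + (1-w)^2 \Var(\widehat{\tau}_h) + 2w(1-w)\Cov(\widehat{\tau}_e,\widehat{\tau}_h),
\end{equation*}
and let $\widehat{M}(w)$ denote its plug-in version in which $b_h^2$, $\Var$, and $\Cov$ are replaced by $\widehat{b}_h^2$, $\widehat{\Var}$, and $\widehat{\Cov}$ computed from a held-out half of the data. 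Since $\widehat{w}$ minimizes $\widehat{M}$ by construction, the basic inequality $\widehat{M}(\widehat{w}) \le \widehat{M}(w^*)$ yields
\begin{equation*}
M(\widehat{w}) - M(w^*) \le \bigl[M(\widehat{w}) - \widehat{M}(\widehat{w})\bigr] + \bigl[\widehat{M}(w^*) - M(w^*)\bigr].
\end{equation*}

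Expanding this right-hand side using the quadratic structure of $M$ and $\widehat{M}$ cleanly separates the excess MSE into a cross term of the form $[(1-w^*)^2 - (1-\widehat{w})^2](\widehat{b}_h^2 - b_h^2)$, which arises from the bias estimation error, plus remainder terms of the form $(w^2 - {w^*}^2)(\widehat{\Var}(\widehat{\tau}_e) - \Var(\widehat{\tau}_e))$ together with analogues for $\widehat{\tau}_h$ and the covariance. Taking expectations (and using sample splitting so that $\widehat{w}$ is independent of the second-half estimators) reproduces the first term in the statement, so the bulk of the remaining work is to bound the expected size of the variance/covariance remainders at rate $O(T^2 R_{\max}^2/(\varepsilon^2 n_{\min}^{3/2}))$.

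To do this, I would invoke Assumptions \ref{con:coverage sequential}--\ref{cond:dr specification sequantial} to control the i.i.d.\ summands $\psi_e$, $\psi_{h,1}$ and $\psi_{h,2}$ in terms of $T$, $R_{\max}$ and $\varepsilon^{-1}$, since each value function is bounded by $T R_{\max}$ and each per-step density ratio by $\varepsilon^{-1}$. Standard concentration for the sample variance of bounded i.i.d.\ random variables then gives
\begin{equation*}
\Mean|\widehat{\Var}(\widehat{\tau}_e) - \Var(\widehat{\tau}_e)|,\; \Mean|\widehat{\Var}(\widehat{\tau}_h) - \Var(\widehat{\tau}_h)|,\; \Mean|\widehat{\Cov}(\widehat{\tau}_e,\widehat{\tau}_h) - \Cov(\widehat{\tau}_e,\widehat{\tau}_h)| \;=\; O\!\left(\frac{T^2 R_{\max}^2}{\varepsilon^2 n_{\min}^{3/2}}\right),
\end{equation*}
which, after multiplication by the bounded weight polynomials in $[-1,1]$ and summation of the three contributions, yields the claimed remainder.

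The hardest part will be verifying that the variance of each doubly robust summand is controlled at the declared $T^2$ rate. These summands involve telescoping sums of temporal-difference-like errors across $T$ time steps, and crude triangle-inequality bounds would introduce additional polynomial factors in $T$. Obtaining the clean $T^2$ dependence requires exploiting the orthogonality induced by the doubly robust specification of Assumption \ref{cond:dr specification sequantial}, together with the $T R_{\max}$ boundedness of the value functions, so that cross-step contributions cancel rather than accumulate. Once this concentration step is in place, the remaining algebra mirrors that in the proof of Theorem \ref{thm:nonpSAE}, with $R_{\max}$ effectively replaced by $T R_{\max}$ in the final rate.
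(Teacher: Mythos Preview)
Your proposal is correct and follows essentially the same approach as the paper, which explicitly states that the sequential proof mirrors that of Theorem~\ref{thm:nonpSAE} and omits the details. Your identification of the basic inequality $\widehat{M}(\widehat{w})\le \widehat{M}(w^*)$, the separation into a bias cross-term plus variance/covariance remainders, and the replacement $R_{\max}\mapsto TR_{\max}$ via the value-function bound in Assumption~\ref{con:boundedness sequential}(ii) is exactly the mechanism the paper has in mind when it remarks that the remainder is ``inflated by a factor of $T^2$ due to the sequential setting.''
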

Compared to the MSE of the non-pessimistic estimator in Theorem \ref{thm:nonpSAE}, the first term is about the estimation error for the mean shift, and the second term is about the estimation errors for the variance and variance terms, which is inflated by a factor of $T^2$ due to the sequential setting.

\begin{corollary}[$b_h$ Conditions]
    \label{col:small bias sequential}
   (i)  If $|b_h|\ll n_{\min}^{-1/2} T R_{\max}/\epsilon$ and $\textrm{MSE}(\widehat \tau_{w^*} )$ is proportional to $T^2 R^2_{\max}/(\varepsilon^2 n_{\min})$, then
        \begin{eqnarray*}
        \Big|\frac{\textrm{MSE}(\widehat{\tau}_{\widehat{w}})-\textrm{MSE}(\widehat{\tau}_{w^*})}{\textrm{MSE}(\widehat{\tau}_{w^*})}-\frac{\textrm{SEE}(\widehat{b}_h)}{\textrm{MSE}(\widehat{\tau}_{w^*})}\Big|\to 0,
    \end{eqnarray*}
    as $n_{\min}\to \infty$, where $ \textrm{SEE}(\widehat{b}_h)=\Mean [(1-w^*)^2-(1-\widehat{w})^2] (\widehat{b}_h-b_h)^2$.
    
    (ii) If $|b_h | \gg n_{\min}^{-1/2} 
    \sqrt{\log( n_{\min})} T R_{\max}/\epsilon$, both 
    $\textrm{MSE}(\widehat{\tau}_{\widehat{w}})/\textrm{MSE}(\widehat{\tau}_{w^*}) $ and
    $\textrm{MSE}(\widehat{\tau}_{\widehat{w}})/\textrm{MSE}(\widehat{\tau}_{e}) $ approach 1 as $n_{\min} \rightarrow \infty$.
\end{corollary}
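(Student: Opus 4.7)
The plan is to derive both parts of the corollary directly from the non-asymptotic upper bound in Theorem \ref{thm:MSE_nonpessi_sequential}, mirroring the arguments used for Corollaries \ref{cor:small_bh_nonpessimistic} and \ref{cor:moderate_bh_nonPessimistic} in the non-dynamic setting but carefully tracking the horizon $T$. The key observation is that in the sequential setting, the doubly robust estimating functions telescope across $T$ time steps, so the base variances $\Var(\widehat{\tau}_e)$ and $\Var(\widehat{\tau}_h)$ scale as $T^2 R_{\max}^2/(\epsilon^2 n_{\min})$, matching the assumed order of $\textrm{MSE}(\widehat{\tau}_{w^*})$ in part (i), and $|\widehat{b}_h - b_h|$ concentrates at rate $O_p(n_{\min}^{-1/2} T R_{\max}/\epsilon)$ with the corresponding high-confidence bound of order $n_{\min}^{-1/2}\sqrt{\log(n_{\min})} T R_{\max}/\epsilon$.

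For part (i), I would decompose the leading term of Theorem \ref{thm:MSE_nonpessi_sequential} using
\[
\widehat{b}_h^2 - b_h^2 = (\widehat{b}_h - b_h)^2 + 2 b_h (\widehat{b}_h - b_h).
\]
Under the assumption $|b_h| \ll n_{\min}^{-1/2} T R_{\max}/\epsilon$, the cross term $2b_h(\widehat{b}_h - b_h)$ is of strictly smaller order than $(\widehat{b}_h - b_h)^2$ in expectation, so the first term in the bound reduces to $\textrm{SEE}(\widehat{b}_h)$ up to lower-order corrections. The residual $O(T^2 R_{\max}^2/(\epsilon^2 n_{\min}^{3/2}))$ from Theorem \ref{thm:MSE_nonpessi_sequential} is of order $n_{\min}^{-1/2}$ smaller than the assumed $\textrm{MSE}(\widehat{\tau}_{w^*}) \asymp T^2 R_{\max}^2/(\epsilon^2 n_{\min})$. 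Dividing throughout by $\textrm{MSE}(\widehat{\tau}_{w^*})$ and letting $n_{\min} \to \infty$ yields the stated limit.

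For part (ii), I would first establish that under $|b_h| \gg n_{\min}^{-1/2}\sqrt{\log(n_{\min})} T R_{\max}/\epsilon$, the bias-squared term $(1-w)^2 b_h^2$ dominates the variance/covariance terms in \eqref{eqn:tauw}, forcing $w^* \to 1$ at rate $1 - w^* = O(T^2 R_{\max}^2 / (\epsilon^2 n_{\min} b_h^2)) = o(1)$. Next, using the concentration of $\widehat{b}_h$ around $b_h$ at rate $n_{\min}^{-1/2}\sqrt{\log(n_{\min})} T R_{\max}/\epsilon$, I would show $\widehat{b}_h^2 / b_h^2 \xrightarrow{P} 1$, and hence from the closed form \eqref{eqn:non-pess} that $\widehat{w} \xrightarrow{P} 1$ at a comparable rate. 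Consequently, $\Mean[(1-\widehat{w})^2 - (1-w^*)^2] = o(1 - w^*)^2$, making the leading term of the upper bound negligible compared to $\textrm{MSE}(\widehat{\tau}_{w^*})$. Since both $w^* \to 1$ and $\widehat{w} \to 1$, the oracle estimator and EDO estimator also become asymptotically equivalent, giving the two ratio limits.

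The main obstacle will be part (ii), specifically the quantitative control of $\widehat{w} - w^*$ from \eqref{eqn:non-pess} in the sequential setting. The ratio defining $\widehat{w}$ couples $\widehat{b}_h^2$ with sampling variance estimators whose concentration in the sequential case requires handling cumulative doubly robust residuals across $T$ time steps, where the $T$ factor must be tracked carefully through Bernstein-type concentration inequalities to ensure that $\widehat{w}$ converges fast enough for the excess MSE to be dominated by $\textrm{MSE}(\widehat{\tau}_{w^*})$ rather than the other way around. Once this convergence is established at the right rate, the comparison against both $\textrm{MSE}(\widehat{\tau}_{w^*})$ and $\textrm{MSE}(\widehat{\tau}_e)$ follows from routine algebra, using that $\textrm{MSE}(\widehat{\tau}_e) = \Var(\widehat{\tau}_e) \asymp T^2 R_{\max}^2/(\epsilon^2 n_{\min})$ and that the oracle MSE is dominated by this same quantity when $w^* \to 1$.
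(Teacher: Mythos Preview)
Your proposal is correct and follows essentially the same route as the paper, which explicitly states that the sequential proofs are omitted because they mirror the non-dynamic arguments for Corollaries~\ref{cor:small_bh_nonpessimistic} and~\ref{cor:moderate_bh_nonPessimistic}. Your decomposition $\widehat{b}_h^2 - b_h^2 = (\widehat{b}_h - b_h)^2 + 2b_h(\widehat{b}_h - b_h)$ for part~(i) and the strategy of showing $w^*\to 1$ together with $\widehat{w}\to^p 1$ for part~(ii) are exactly the paper's moves, with the $T$ factor tracked through; the only cosmetic difference is that the paper packages the control of $|\widehat{w}-w^*|$ into a separate lemma (Lemma~\ref{lem:weight_error_bound}) and invokes Markov's inequality rather than re-deriving concentration from scratch.
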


Part (i) of Corollary \ref{col:small bias sequential} gives the excess MSE against $\textrm{MSE}(\widehat{\tau}_{w^*})$ for a small $b_h$. Part (ii) of Corollary \ref{col:small bias sequential} shows the oracle property of the proposed non-pessimistic estimator for a large $b_h$.

\begin{theorem}[MSE of the pessimistic estimator]
    \label{thm:MSE_pessi_sequential}
 Under Assumptions \ref{con:coverage sequential}-\ref{cond:dr specification sequantial}, the excess MSE of the pessimistic estimator in sequential decision making satisfies
 \begin{eqnarray*}
 \textrm{MSE}(\widehat \tau_{\widehat w_U} ) - \textrm{MSE}( \widehat \tau_{w^*} ) 
   \leq 
    (1-w^*)^2 \Mean [(|\widehat{b}_h|+U)^2-b_h^2]+O\Big(\frac{T^2 R_{\max}^2}{\epsilon^2 n_{\min}^{3/2}}\Big)
    +O(\alpha [b_h^2+T^2 R_{\max}^2/\epsilon^2 n_{\min }]). 
\end{eqnarray*}
Furthermore, if $b_h\gg n_{\min}^{-1/2} (\log n_{\min})^{1/6} R_{\max} T /\epsilon$ and $U$ is proportional to the order $n_{\min}^{-1/2}\sqrt{\log n_{\min}} R_{\max}T/\epsilon$, then the pessimistic estimator achieves the oracle property.
\end{theorem}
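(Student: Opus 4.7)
\textbf{Proof proposal for Theorem \ref{thm:MSE_pessi_sequential}.}

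The plan is to mirror the argument behind Theorem \ref{thm:pessi}, adapted so that the extra factor of $T$ coming from the sequential doubly robust score is tracked correctly, and then to translate the resulting non-asymptotic bound into the stated oracle property by bookkeeping on the order of $1-w^*$.

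\textbf{Step 1 (MSE decomposition and variance estimation).} By the bias--variance decomposition, write
\begin{equation*}
\mathrm{MSE}(\widehat{\tau}_w)=(1-w)^2 b_h^2+V(w),\qquad V(w)=w^2\Var(\widehat\tau_e)+(1-w)^2\Var(\widehat\tau_h)+2w(1-w)\Cov(\widehat\tau_e,\widehat\tau_h).
\end{equation*}
Under Assumptions \ref{con:coverage sequential}--\ref{cond:dr specification sequantial}, each DR score $\psi_e(\{O_{e,t}\})$, $\psi_{h,1}(O_{e,1})$, $\psi_{h,2}(\{O_{h,t}\})$ is bounded by $O(TR_{\max}/\varepsilon)$, so Hoeffding's inequality together with the sampling-variance formula yields a plug-in $\widehat V(w)$ satisfying $\sup_{w\in[0,1]}|\widehat V(w)-V(w)|=O(T^2R_{\max}^2/(\varepsilon^2 n_{\min}^{3/2}))$ on a high-probability event $\mathcal E_1$.

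\textbf{Step 2 (Pessimistic selection inequality).} Let $\mathcal E_2=\{|\widehat b_h-b_h|\le U\}$, which by \eqref{eqn:quantifier} has probability at least $1-\alpha$. On $\mathcal E_2$ we have $(|\widehat b_h|+U)^2\ge b_h^2$, so the pessimistic objective $\widetilde M(w)=(1-w)^2(|\widehat b_h|+U)^2+\widehat V(w)$ dominates $(1-w)^2b_h^2+\widehat V(w)$, which on $\mathcal E_1$ is within $O(T^2R_{\max}^2/(\varepsilon^2 n_{\min}^{3/2}))$ of $\mathrm{MSE}(\widehat\tau_w)$. Since $\widehat w_U=\arg\min_w\widetilde M(w)$, $\widetilde M(\widehat w_U)\le\widetilde M(w^*)$, giving on $\mathcal E_1\cap\mathcal E_2$
\begin{equation*}
\mathrm{MSE}(\widehat\tau_{\widehat w_U})-\mathrm{MSE}(\widehat\tau_{w^*})\le (1-w^*)^2\bigl[(|\widehat b_h|+U)^2-b_h^2\bigr]+O\!\left(\frac{T^2R_{\max}^2}{\varepsilon^2 n_{\min}^{3/2}}\right).
\end{equation*}
Taking expectations and absorbing $\mathbb P(\mathcal E_1^c)$ into the concentration tail produces the first two terms in \eqref{thm:MSE_pessi_sequential}.

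\textbf{Step 3 (Failure event).} On $\mathcal E_2^c$ (probability $\le\alpha$) both MSEs are bounded by $O(b_h^2+T^2R_{\max}^2/(\varepsilon^2 n_{\min}))$ by the boundedness assumptions; multiplying by $\alpha$ gives the remaining $O\bigl(\alpha[b_h^2+T^2R_{\max}^2/(\varepsilon^2 n_{\min})]\bigr)$ term and completes the non-asymptotic bound.

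\textbf{Step 4 (Oracle property).} Write $V_\star\asymp T^2R_{\max}^2/(\varepsilon^2 n_{\min})$. Minimising $(1-w)^2b_h^2+V(w)$ in closed form gives $1-w^*=(\Var(\widehat\tau_e)-\Cov(\widehat\tau_e,\widehat\tau_h))/(\Var(\widehat\tau_e)+\Var(\widehat\tau_h)+b_h^2-2\Cov(\widehat\tau_e,\widehat\tau_h))$, so $1-w^*\asymp V_\star/b_h^2$ and $\mathrm{MSE}(\widehat\tau_{w^*})\asymp V_\star$ whenever $|b_h|\gg V_\star^{1/2}$. Using $\Mean|\widehat b_h|\le|b_h|+O(V_\star^{1/2})$ and $\Mean\widehat b_h^2=b_h^2+\Var(\widehat b_h)=b_h^2+O(V_\star)$, one obtains
\begin{equation*}
\Mean\bigl[(|\widehat b_h|+U)^2-b_h^2\bigr]=O\bigl(|b_h|U+U^2+V_\star\bigr)=O\bigl(|b_h|V_\star^{1/2}\sqrt{\log n_{\min}}\bigr)
\end{equation*}
for the prescribed $U$. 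Thus $(1-w^*)^2\Mean[(|\widehat b_h|+U)^2-b_h^2]=O(V_\star^{5/2}\sqrt{\log n_{\min}}/|b_h|^3)$, which is $o(V_\star)$ precisely when $|b_h|\gg V_\star^{1/2}(\log n_{\min})^{1/6}$, matching the hypothesis. Combined with $\alpha=o(1/n_{\min})$ making the failure-event term negligible relative to $V_\star$, this yields $\mathrm{MSE}(\widehat\tau_{\widehat w_U})/\mathrm{MSE}(\widehat\tau_{w^*})\to1$.

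\textbf{Main obstacle.} The delicate part is not the pessimistic selection inequality, which is essentially algebraic, but the bookkeeping of how the three error contributions scale with $T$, $n_{\min}$ and $|b_h|$ in the moderate-bias regime. In particular, pinning down $(1-w^*)^2\asymp V_\star^2/b_h^4$ and showing that the $|b_h|U$ cross term is the binding one (rather than $U^2$ or $\Var(\widehat b_h)$) is what forces the exponent $1/6$ in $(\log n_{\min})^{1/6}$; any looser handling of these factors would spoil the oracle threshold.
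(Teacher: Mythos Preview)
Your Steps 1--3 are correct and coincide with the paper's route: the paper explicitly says the sequential proof is the non-dynamic proof of Theorem~\ref{thm:pessi} with the $T$-factor tracked, and the pessimistic selection inequality on $\mathcal E_2$ together with the crude bound on $\mathcal E_2^c$ is exactly the mechanism used there.

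Step 4 contains a genuine gap. Your claim in the ``Main obstacle'' paragraph that the $|b_h|U$ cross term is the binding one (rather than $U^2$) is false at the stated threshold. With $|b_h|\asymp V_\star^{1/2}(\log n_{\min})^{1/6}$ and $U\asymp V_\star^{1/2}(\log n_{\min})^{1/2}$ you have $|b_h|\ll U$, so in the expansion $\Mean\bigl[(|\widehat b_h|+U)^2-b_h^2\bigr]=O(|b_h|U+U^2+V_\star)$ the $U^2$ term dominates, not $|b_h|U$. Feeding $U^2$ through your (correct) identification $\textrm{MSE}(\widehat\tau_{w^*})\asymp V_\star$ gives
\[
\frac{(1-w^*)^2 U^2}{V_\star}\;\asymp\;\frac{V_\star^2\log n_{\min}}{b_h^4},
\]
which is $o(1)$ only when $|b_h|\gg V_\star^{1/2}(\log n_{\min})^{1/4}$, a strictly stronger requirement than the $(\log n_{\min})^{1/6}$ in the statement. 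So your bookkeeping, as written, does not deliver the claimed exponent.

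The paper's own derivation of the $1/6$ exponent (Corollary~\ref{coro:pessimistic}) is organised differently: it bounds $\Mean[(|\widehat b_h|+U)^2-b_h^2]=O(U^2)$ directly and divides by $\textrm{MSE}(\widehat\tau_{w^*})=O(b_h^2)$ rather than by $V_\star$, obtaining $O\!\bigl(U^2 n_{\min}^{-2} b_h^{-6}\bigr)$, which does vanish under $|b_h|\gg V_\star^{1/2}(\log n_{\min})^{1/6}$. If you want to match the paper you must follow that accounting rather than the one you propose. Be aware, though, that comparing the excess to $O(b_h^2)$ is a coarser normalisation than the oracle MSE itself, so either way the $1/6$ threshold deserves a more careful look than a one-line order count.
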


Theorem \ref{thm:MSE_pessi_sequential} gives the excess MSE of the pessimistic estimator, which is also related to the estimation error of the bias, estimation errors of the variance and covariance terms, as well as the type I error $\alpha$. It further establishes the oracle property of the estimator for moderate and large $b_h$ cases.

The proofs of Theorem \ref{thm:MSE_nonpessi_sequential}, Corollary \ref{thm:MSE_pessi_sequential} and Theorem \ref{thm:MSE_pessi_sequential} are similar to proofs under the non-dynamic setting, we omit here.

\section{Implementation Details}\label{sec::implementation}
In this section, we present the construction of variance and covariance terms in \eqref{eqn:non-pess} and the estimation for sequence decision making.

{\bf Construction of variance and covariance terms in \eqref{eqn:non-pess}.}
We can obtain $\widehat \tau_e $ and $\widehat \tau_h $ by plugging in the unknown terms with their estimates 
\begin{eqnarray*}
   \widehat \tau_e = \frac{1}{|\mathcal{D}_e|}\sum_{O_e\in \mathcal{D}_e} \widehat \psi_e(O_e), \quad
    \widehat{\tau}_h =\frac{1}{|\mathcal{D}_e|}\sum_{O_e\in \mathcal{D}_e} \widehat \psi_{h,1}(O_e)-\frac{1}{|\mathcal{D}_h|}\sum_{O_h\in \mathcal{D}_h} \widehat \psi_{h,2}(O_h).
\end{eqnarray*}

Then we can obtain $\widehat{\Var}(\widehat{\tau}_e)$, $\widehat{\Var}(\widehat{\tau}_h)$, and $\widehat{\Cov}(\widehat{\tau}_e,\widehat{\tau}_h)$ 
by the corresponding sample variances
and sample covariance,
\begin{eqnarray*}
    \widehat{\Var}(\widehat{\tau}_e) &=& \frac{1}{|\mathcal{D}_e|} \sum_{O_e\in \mathcal{D}_e}(\widehat \psi_e(O_e)- \widehat \tau_e )^2, \\
    \widehat{\Var}(\widehat{\tau}_h) &=& \frac{1}{|\mathcal{D}_e|}\sum_{O_e\in \mathcal{D}_e} (\widehat \psi_{h,1}(O_e) - \prob_e \widehat \psi_{h,1}(O_e) )^2
    + \frac{1}{|\mathcal{D}_h|}\sum_{O_h\in \mathcal{D}_h} ( \widehat \psi_{h,2}(O_h)- \prob_h \widehat \psi_{h,2}(O_h)  )^2, \\
    \widehat{\Cov}(\widehat{\tau}_e,\widehat{\tau}_h) &=&  \frac{1}{|\mathcal{D}_e|}\sum_{O_e\in \mathcal{D}_e} (\widehat \psi_e(O_e)- \widehat \tau_e )(\widehat \psi_{h,1}(O_e) - \prob_e \widehat \psi_{h,1}(O_e) ),
\end{eqnarray*}
where $\prob_e \widehat \psi_{h,1}(O_e) = |\mathcal{D}_e|^{-1} \sum_{O_e\in \mathcal{D}_e}  \psi_{h,1}(O_e) $ and $\prob_h \widehat \psi_{h,2}(O_h) = |\mathcal{D}_h|^{-1} \sum_{O_h\in \mathcal{D}_h}  \psi_{h,2}(O_h) $.

{\bf Estimation for sequence decision making.} The estimator $\widehat \tau_e$ can be constructed using the existing estimation methods for doubly robust estimators, such as \cite{kallus2020double,jiang2016doubly} by plugging in the estimates $\{ \widehat V_{e,t}^a \}$ of the value function and estimates $\widehat \mu_{t}^{a}$ of the marginal density ratio.
The estimator $\widehat \tau_h$ can be constructed similarly by plugging the estimates
of the value function, and the estimates of the $\mu^h_t$. Next, we present the estimation of 
$\mu^h_t$.

Recall that $\mu^h_t (S_{t})$ is the ratio of the density of the state $S_{t}$
under the control policy in the experimental dataset to that in the historical dataset.
 It is noteworthy that for any function $f(S_{t})$, we have
\begin{eqnarray*}
  \Mean^h \Big[
{\mu}^h_t (S_{t}) f(S_{t})  \Big] 
= \Mean \Big[ {\mu}_{t}^{0} (A_{t-1}, S_{t-1}) f(S_{t}) { I(A_t=0) \over \pi(A_t|S_t)  } 
\Big],  
\end{eqnarray*}
where $\Mean^h$ is the expectation taken over the historical data, and $\pi(A_t|S_t)$ is the behavior policy that generates the experimental data.
Based on the above estimating equation, after we obtain ${\mu}_{t}^{0}$, ${\mu}^h_t (S_{t}) $
can be estimated recursively.
Specifically, we propose to approximate ${\mu}^h_t (S_{t}) $ using linear seievs \citep{chen2015optimal,chen2022well,bian2023off,shi2023dynamic} such that 
$\widehat{\mu}^h_t (S_t) = \Phi(S_t)^\top \widehat \gamma_t$ for some basis function $\Phi(S_t)$.
For $k=1, \dots, T$, taking $f(S_{k}) = \Phi(S_k)$,
$\widehat \gamma_k$ is determined by solving
 {\footnotesize
	\begin{align*}
		\frac{1}{|\mathcal{D}_h|}\sum_{\{O_{h,t}\}\in \mathcal{D}_h}  \sum_{t=k}^T \left( \Phi(S_{h,t})^\top \gamma_k  \Phi(S_{h,t}) \right) =
 \frac{1}{|\mathcal{D}_e|}\sum_{\{O_{e,t}\} \in \mathcal{D}_e} \sum_{t=k}^T \left( \widehat{\mu}_{t}^{0} (A_{e,t-1}, S_{e,t-1}) \Phi(S_{e,t}) { I(A_{e,t}=0)  \over\pi(A_{e,t}|S_{e,t}) } \right).   
\end{align*} }
If we have additional information that the distribution of the state variable under the control policy in the experimental data is the same as that of the state variable in the historical data, we can directly set $\mu_t^h(S_t)$=1 in the sequential setting and $\mu(S_t)=1$ in the non-dynamic setting.

\section{Proofs of the Theorems in Section \ref{sec:theorysampleestimator}}\label{sec:proofthm}
\subsection{Notations and Auxiliary Lemmas}
We begin by listing the notations used throughout the proof:
\begin{multicols}{2}
\begin{itemize}
    \item $\mathcal{D}_e$: the experimental data with size; 
    \item $\mathcal{D}_e^{(1)}$: the data subset to learn the weight; 
    \item $\mathcal{D}_e^{(2)}$: the data subset to construct ATE estimator; 
    \item $\mathcal{D}_h$: the historical data with size; 
    \item $\mathcal{D}_h^{(1)}$: the data subset to learn the weight;
    \item $\mathcal{D}_h^{(2)}$: the data subset to construct ATE estimator; 
    \item $O_e=(S_e, A_e, R_e)$: a context-action-reward triplet in $\mathcal{D}_e$;
    \item $O_h=(S_h,R_h)$: a context-reward pair in $\mathcal{D}_h$; 
    \item $\pi^*(a|s)$: the behavior policy that generates $\mathcal{D}_e$;
    \item $r_e^*(a,s)$: $\Mean (R_e|A_e=a, S_e=s)$;
    \item $\tau^*(s)$: $r_e(1,s)-r_e(0,s)$; 
    \item $r_h^*(s)$: $\Mean (R_h|S_h=s)$; 
    \item $r_e$, $r_h$, $\pi$ and $\mu$: the nuisance functions used to construct the doubly robust estimator;
    \item $\mu(s)$: the density ratio of the probability 
    \item $\widehat{\tau}_e$: the experimental-data-only estimator;
    \item $\psi_e$: the estimation function for $\widehat{\tau}_e$; 
    \item $\widehat{\tau}_h$: the estimator which incorporates the historical data;
    \item $\psi_{h,1}$ and $\psi_{h,2}$: the two estimation functions for $\widehat{\tau}_h$; 
    \item $b_h$: the mean shift $\Mean [r_e^*(0,S_e)-r_h^*(S_e)]$;
    \item $\widehat{b}_h$: the estimated bias;
    \item $\widehat{\Var},\widehat{\textrm{MSE}},\widehat{\Cov}$: the empirical variance, MSE and covariance estimator. 
\end{itemize}
\end{multicols}
We will sometimes write $\psi_e(O_e;\pi,r_e)$, $\psi_{h,1}(O_e;\pi,r_e)$ and $\psi_{h,2}(O_h;\mu,r_h)$ by $\psi_e(O_e)$, $\psi_{h,1}(O_e)$ and $\psi_{h,2}(O_h)$ when the nuisance functions used are clear from the context.

We next introduce the following auxiliary lemmas: 
\begin{lemma}\label{lemmaMSEw}
Under Assumption \ref{con:double}, for a given weight $w$, the MSE of the resulting doubly robust estimator $\widehat{\tau}_w$ is given by
\begin{eqnarray*}
    \frac{2w^2}{|D_e|} \Var(\psi_e(O_e;\pi,r_e))+\frac{2(1-w)^2}{|D_e|} \Var(\psi_{h,1}(O_e;\pi,r_e))+\frac{2(1-w)^2}{|D_h|} \Var(\psi_{h,2}(O_h;\mu,r_h))\\+\frac{4w(1-w)}{|D_e|}\Cov(\psi_e(O_e;\pi,r_e),\psi_{h,1}(O_e;\pi,r_e))+(1-w)^2 b_h^2.
\end{eqnarray*}
\end{lemma}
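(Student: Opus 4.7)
The plan is to apply the bias–variance decomposition $\textrm{MSE}(\widehat{\tau}_w) = \{\Mean[\widehat{\tau}_w]-\tau_e\}^2 + \Var(\widehat{\tau}_w)$ and evaluate each piece separately using the linearity $\widehat{\tau}_w = w\widehat{\tau}_e + (1-w)\widehat{\tau}_h$. First I would handle the bias. By the standard doubly-robust identity, under Assumption \ref{con:double} one can show $\Mean[\psi_e(O_e;\pi,r_e)] = \Mean[r_e^*(1,S_e)-r_e^*(0,S_e)] = \tau_e$, since in either case (reward correctly specified, or propensity correctly specified) the IS residual has conditional mean zero after the appropriate integration. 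Hence $\Mean[\widehat{\tau}_e]=\tau_e$. For $\widehat{\tau}_h$, the same orthogonality argument yields $\Mean[\psi_{h,1}(O_e)] = \Mean[r_e^*(1,S_e)] - \Mean[r_h(S_e)]$, while a change-of-measure calculation (using $\mu=\mu^*$ when $r_h$ is misspecified, or the zero-mean residual when $r_h=r_h^*$) gives $\Mean[\psi_{h,2}(O_h)] = \Mean[r_h^*(S_e)]-\Mean[r_h(S_e)]$. Taking the difference recovers $\Mean[\widehat{\tau}_h] = \tau_e - b_h$ with $b_h$ as in \eqref{eqn:bias}, so the squared bias of $\widehat{\tau}_w$ equals $(1-w)^2 b_h^2$.

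Next I would compute the variance by expanding
\[
\Var(\widehat{\tau}_w) = w^2\Var(\widehat{\tau}_e) + (1-w)^2\Var(\widehat{\tau}_h) + 2w(1-w)\Cov(\widehat{\tau}_e,\widehat{\tau}_h).
\]
Because of the sample split, $\widehat{\tau}_e$ is an i.i.d.\ average of $\psi_e(O_e)$ over a subsample of size $|\mathcal{D}_e|/2$, giving $\Var(\widehat{\tau}_e) = 2\Var(\psi_e(O_e;\pi,r_e))/|\mathcal{D}_e|$. Since $\mathcal{D}_e$ and $\mathcal{D}_h$ are independent, the two empirical averages inside $\widehat{\tau}_h$ are uncorrelated, yielding
\[
\Var(\widehat{\tau}_h) = \frac{2\Var(\psi_{h,1}(O_e;\pi,r_e))}{|\mathcal{D}_e|} + \frac{2\Var(\psi_{h,2}(O_h;\mu,r_h))}{|\mathcal{D}_h|}.
\]
For the covariance, the $\psi_{h,2}$ piece is independent of $\widehat{\tau}_e$, so only the common observations in $\mathcal{D}_e^{(2)}$ contribute, giving $\Cov(\widehat{\tau}_e,\widehat{\tau}_h) = 2\Cov(\psi_e(O_e;\pi,r_e), \psi_{h,1}(O_e;\pi,r_e))/|\mathcal{D}_e|$. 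Substituting these three expressions back into the quadratic form and adding the squared bias $(1-w)^2 b_h^2$ reproduces the display in the lemma statement.

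The routine step is the variance decomposition, which follows directly from independence of the two datasets. The main obstacle is the bias calculation: one must verify that $\Mean[\widehat{\tau}_h]-\tau_e = -b_h$ holds under each leg of the doubly-robust assumption, and in particular reconcile the case where $\mu$ is correct but $r_h$ is misspecified, so that $b_h$ as defined in \eqref{eqn:bias} truly captures the bias. This boils down to using the density-ratio identity $\Mean[\mu^*(S_h) g(S_h)] = \Mean[g(S_e)]$ and treating the symbols $r_e,r_h$ in \eqref{eqn:bias} as the true reward functions whenever reward specification fails, which is the convention adopted implicitly in the remark following Definition~3.
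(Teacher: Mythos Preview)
Your proposal is correct and follows essentially the same approach as the paper: bias--variance decomposition, followed by the doubly robust identity to evaluate the bias, followed by the independence of $\mathcal{D}_e$ and $\mathcal{D}_h$ to split the variance and covariance terms, with the factors of $2$ coming from the sample split. The paper's own proof is a two-line sketch of exactly these steps; your version simply fills in the details (and correctly flags the convention that $b_h$ must be read with the true reward functions for the bias identity to hold under either leg of Assumption~\ref{con:double}).
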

\begin{lemma}\label{lemmaMSEbound}
    Under Assumptions \ref{con:coverage}-\ref{con:double}, we have
    \begin{eqnarray*}
        &&\Mean |\widehat{\Var}(\psi_e(O_e;\pi,r_e))-\Var(\psi_e(O_e;\pi,r_e))|=O\Big(\frac{R_{\max}^2}{\sqrt{|D_e|}\epsilon^2}\Big),\\
        &&\Mean |\widehat{\Var}(\psi_{h,1}(O_e;\pi,r_e))-\Var(\psi_{h,1}(O_e;\pi,r_e))|=O\Big(\frac{R_{\max}^2}{\sqrt{|D_e|}\epsilon^2}\Big),\\
        &&\Mean |\widehat{\Var}(\psi_{h,2}(O_h;\pi,Q))-\Var(\psi_{h,2}(O_h;\mu,r_h))|=O\Big(\frac{R_{\max}^2}{\sqrt{|D_h|}\epsilon^2}\Big),\\
        &&\Mean |\widehat{\Cov}(\psi_e(O_e;\pi,r_e),\psi_{h,1}(O_e;\pi,r_e))-\Cov(\psi_e(O_e;\pi,r_e),\psi_{h,1}(O_e;\pi,r_e))|=O\Big(\frac{R_{\max}^2}{\sqrt{|D_e|}\epsilon^2}\Big).
    \end{eqnarray*}
\end{lemma}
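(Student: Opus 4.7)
The plan is to reduce each of the four displayed bounds to a classical moment estimate for the sample variance (or sample covariance) of bounded i.i.d.\ random variables. The argument has two ingredients: (a) a uniform almost-sure bound on the estimating functions $\psi_e,\psi_{h,1},\psi_{h,2}$ of order $R_{\max}/\epsilon$, and (b) the textbook $O(M^2/\sqrt{n})$ moment bound for the sample variance/covariance of bounded i.i.d.\ data. Notice that Assumption \ref{con:double} plays no role here: Lemma \ref{lemmaMSEbound} controls second-moment estimation error, which is insensitive to whether the nuisance models are correctly specified.

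For Step (a), rewriting
\[
\psi_e(O_e) = \sum_{a=0}^1 (-1)^{a-1}\Big\{r_e(a,S_e) + \frac{\mathbb{I}(A_e=a)}{\pi(A_e|S_e)}\big[R_e - r_e(A_e,S_e)\big]\Big\},
\]
Assumption \ref{con:bound}(i)--(ii) yields $|R_e|,|r_e|\le R_{\max}$ and Assumption \ref{con:bound}(iii) yields $\pi\ge \epsilon$, so $|\psi_e|\le c R_{\max}/\epsilon$ almost surely for an absolute constant $c$. The same computation, using the analogous boundedness of $\mu$ from Assumption \ref{con:bound}(iii) (together with the coverage assumption inherited from $\mu^*\le \epsilon^{-1}$), gives $|\psi_{h,1}|, |\psi_{h,2}| = O(R_{\max}/\epsilon)$ almost surely.

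For Step (b), recall that for i.i.d.\ $X_1,\ldots,X_n$ with $|X_i|\le M$, the unbiased sample variance $\widehat\sigma^2=\tfrac{1}{n-1}\sum_i (X_i-\bar X)^2$ satisfies $\Var(\widehat\sigma^2)=O(M^4/n)$ by a standard fourth-moment computation; Jensen's inequality then gives
\[
\Mean|\widehat\sigma^2-\sigma^2|\le \sqrt{\Var(\widehat\sigma^2)} = O(M^2/\sqrt{n}).
\]
The identical argument for paired $(X_i,Y_i)$ with $|X_i|,|Y_i|\le M$ gives $\Mean|\widehat{\Cov}(X,Y)-\Cov(X,Y)|=O(M^2/\sqrt{n})$. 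Substituting $M=O(R_{\max}/\epsilon)$ with $n=|D_e|$ for the three statistics constructed from $\mathcal{D}_e$, and $n=|D_h|$ for the statistic from $\mathcal{D}_h$, produces exactly the four displayed bounds.

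The proof is essentially bookkeeping rather than insight, so there is no substantive obstacle. The only care point is verifying that the user-chosen density-ratio model $\mu$ (rather than the true $\mu^*$) is bounded above as well as away from zero, which is where I would rely on Assumption \ref{con:bound}(iii) plus the coverage hypothesis; if the authors have only imposed a lower bound on $\mu$, I would strengthen the statement slightly to assume $\mu\le \epsilon^{-1}$ as well, consistent with the two-sided bound on the true ratio $\mu^*$.
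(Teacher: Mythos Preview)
Your approach is correct and essentially the same as the paper's: both hinge on (a) an almost-sure bound $|\psi_e|,|\psi_{h,1}|,|\psi_{h,2}|=O(R_{\max}/\epsilon)$ from the boundedness assumptions, and (b) the $O(M^2/\sqrt{n})$ moment bound for sample (co)variances of bounded i.i.d.\ data. The paper carries out (b) by hand---splitting $\widehat{\Var}(\psi_e)=\tfrac{1}{n}\sum\psi_e^2-\widehat{\tau}_e^2$, then bounding $\Mean|\widehat{\tau}_e^2-\tau^2|$ via Lemma~\ref{lemmaMSEw} and the second-moment term via Cauchy--Schwarz---whereas you invoke the textbook fourth-moment formula for $\Var(\widehat{\sigma}^2)$ directly; these are the same computation at different levels of abstraction. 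Two side remarks are on point: (i) the paper's route through Lemma~\ref{lemmaMSEw} does tacitly use Assumption~\ref{con:double} (to identify $\Mean\psi_e=\tau$), and your observation that this is unnecessary---one can work with $\Mean\psi_e$ throughout---is correct; (ii) your caveat about needing an \emph{upper} bound on the user's $\mu$ (not just the lower bound stated in Assumption~\ref{con:bound}(iii)) to control $|\psi_{h,2}|$ is a genuine gap in the assumptions as written, and your suggested fix $\mu\le\epsilon^{-1}$ matches the coverage bound on $\mu^*$.
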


\begin{lemma}
\label{lem:weight_error_bound}
Under conditions of Lemma \ref{lemmaMSEbound}, notice that $n_{\min}= \min\{|D_e|, |D_h| \}$, we have
\begin{eqnarray*}
\Mean | \widehat w  - w^* | &\leq & 
O \left( 
\frac{ R^2_{\max} n_{\min}^{-1/2} \epsilon^{-2} + R^2_{\max} \epsilon^{-2} 
+ |b_h| R^2_{\max} n_{\min}^{1/2} \epsilon^{-2} }{ R^2_{\max}  \epsilon^{-2} + n b_h^2 }
\right).
\end{eqnarray*}
\end{lemma}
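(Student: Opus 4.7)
The plan is a quotient-perturbation argument. I would write
\[
w^* = \frac{N^*}{M^*}, \qquad \widehat w = \frac{\widehat N}{\widehat M},
\]
with $N^* = b_h^2 + \Var(\widehat\tau_h) - \Cov(\widehat\tau_e,\widehat\tau_h)$, $M^* = N^* + \Var(\widehat\tau_e) - \Cov(\widehat\tau_e,\widehat\tau_h) = b_h^2 + \Var(\widehat\tau_e - \widehat\tau_h)$, and $\widehat N,\widehat M$ the corresponding sample analogs. The identity
\[
\widehat w - w^* = \frac{\widehat N - N^*}{\widehat M} - w^*\,\frac{\widehat M - M^*}{\widehat M}
\]
together with $w^* \in [0,1]$ yields $|\widehat w - w^*| \le (|\widehat N - N^*| + |\widehat M - M^*|)/\widehat M$, so the task reduces to bounding the numerator perturbations in expectation and controlling $\widehat M$ from below.

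For the numerator, each of the four variance/covariance terms appearing in $\widehat N - N^*$ and $\widehat M - M^*$ contributes $O(R_{\max}^2 \epsilon^{-2} n_{\min}^{-3/2})$ in expectation by Lemma~\ref{lemmaMSEbound} (after dividing by the relevant sample size). The dominant contribution is $\widehat b_h^2 - b_h^2 = (\widehat b_h - b_h)(\widehat b_h + b_h)$: under Assumption~\ref{con:double}, $\widehat b_h = \widehat\tau_e - \widehat\tau_h$ is unbiased for $b_h$, and Assumptions~\ref{con:coverage}--\ref{con:bound} give both the uniform bound $|\widehat b_h| \lesssim R_{\max}/\epsilon$ and the variance bound $\Var(\widehat b_h) = O(R_{\max}^2 \epsilon^{-2} n_{\min}^{-1})$. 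Splitting $|\widehat b_h + b_h|$ into $|\widehat b_h - b_h| + 2|b_h|$ and applying Jensen's inequality to the first factor yields $\Mean|\widehat b_h^2 - b_h^2| = O\bigl(R_{\max}^2 \epsilon^{-2} n_{\min}^{-1/2} + |b_h|\,R_{\max}\,\epsilon^{-1} n_{\min}^{-1/2}\bigr)$, which matches the two leading terms of the stated numerator up to a uniform $R_{\max}/\epsilon$ factor.

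To handle the random denominator, I would restrict to the good event $E = \{|\widehat M - M^*| \le M^*/2\}$, on which $\widehat M \ge M^*/2$, and invoke the deterministic lower bound $M^* \gtrsim R_{\max}^2 \epsilon^{-2} n_{\min}^{-1} + b_h^2$ coming from the non-degeneracy of $\Var(\widehat\tau_e - \widehat\tau_h)$ under Assumptions~\ref{con:coverage}--\ref{con:bound}. Multiplying numerator and denominator by $n_{\min}$ then produces the stated denominator $R_{\max}^2 \epsilon^{-2} + n_{\min} b_h^2$. On $E^c$, the trivial estimate $|\widehat w - w^*| \le 1$ together with $\prob(E^c) \le 2\Mean|\widehat M - M^*|/M^*$ by Markov contributes only a lower-order term, which can be absorbed into the final bound.

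The main obstacle I expect is the random denominator: because $\widehat M$ can be small, direct manipulation of $\Mean[1/\widehat M]$ fails, and the good-event split above (plus a legitimate deterministic lower bound on $M^*$) is essential. A secondary subtlety is expressing $\Mean|\widehat b_h^2 - b_h^2|$ as the sum of a purely variance-driven piece and a bias-variance cross piece in $|b_h|$; both must appear separately in order to recover the correct behaviour across the small, moderate, and large reward-shift regimes treated in Corollaries~\ref{cor:small_bh_nonpessimistic}--\ref{cor:moderate_bh_nonPessimistic}. Once these pieces are in place, assembling the stated ratio bound is routine.
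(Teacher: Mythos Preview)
Your approach is essentially the same as the paper's: a quotient-perturbation argument, followed by the bounds from Lemma~\ref{lemmaMSEbound} on the variance/covariance perturbations and the split $|\widehat b_h^2 - b_h^2| \le |\widehat b_h - b_h|^2 + 2|b_h|\,|\widehat b_h - b_h|$. The paper writes this via $|\widehat F_1/\widehat F_2 - F_1/F_2| \le (|\widehat F_1-F_1|+|\widehat F_2-F_2|)\,|F_2|^{-1}\cdot|F_2/\widehat F_2|$ and then simply replaces the denominator by $F_2$, whereas you handle the random denominator through a good-event argument; your treatment here is actually more careful than the paper's, which silently absorbs $|F_2/\widehat F_2|$.

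Two minor points. First, your stated order $\Mean|\widehat b_h^2-b_h^2|=O(R_{\max}^2\epsilon^{-2}n_{\min}^{-1/2}+\dots)$ has a slip: the leading term is $\Mean|\widehat b_h-b_h|^2=\Var(\widehat b_h)=O(R_{\max}^2\epsilon^{-2}n_{\min}^{-1})$, not $n_{\min}^{-1/2}$; after multiplying by $n_{\min}$ this recovers exactly the $R_{\max}^2\epsilon^{-2}$ term in the lemma's numerator. Second, the lower bound $M^*\gtrsim R_{\max}^2\epsilon^{-2}n_{\min}^{-1}+b_h^2$ that you invoke is not a consequence of Assumptions~\ref{con:coverage}--\ref{con:bound} (those give only upper bounds); it is an implicit non-degeneracy assumption on $\Var(\widehat\tau_e-\widehat\tau_h)$ that the paper also uses without comment when writing the final denominator as $R_{\max}^2\epsilon^{-2}+n_{\min}b_h^2$.
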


\begin{proof}[Proof of Theorem \ref{thm:nonpSAE}]
    Since $\widehat{w}$ minimizes the empirical MSE, it follows that $\widehat{\text{MSE}}(\widehat{\tau}_{\widehat{w}})\le \widehat{\text{MSE}}(\widehat{\tau}_{w^*})$, which leads to 
    \begin{eqnarray*}
        \widehat{\text{MSE}}(\widehat{\tau}_{\widehat{w}})- \text{MSE}(\widehat{\tau}_{w^*})\le \widehat{\text{MSE}}(\widehat{\tau}_{w^*})-\text{MSE}(\widehat{\tau}_{w^*}).
    \end{eqnarray*}
    This allows us to upper bound the excess MSE $\text{MSE}(\widehat{\tau}_{\widehat{w}})-\text{MSE}(\widehat{\tau}_{w^*})$ by the sum of $\text{MSE}(\widehat{\tau}_{\widehat{w}})-\widehat{\text{MSE}}(\widehat{\tau}_{\widehat{w}})$ and $\widehat{\text{MSE}}(\widehat{\tau}_{w^*})-\text{MSE}(\widehat{\tau}_{w^*})$. 
    Let $\widetilde{\textrm{MSE}}$ denote a version of $\widehat{\textrm{MSE}}$ by replacing the sampling variance/covariance estimator with its oracle value. It follows from Lemmas \ref{lemmaMSEw} and \ref{lemmaMSEbound} that 
    \begin{eqnarray}\label{thm:proofnonSAEeq1}
    \begin{split}
        &\Mean [\widetilde{\textrm{MSE}}(\widehat{\tau}_{\widehat{w}})- \widehat{\textrm{MSE}}(\widehat{\tau}_{\widehat{w}})]\le \frac{2}{|D_e|}\Mean [|\widehat{\Var}(\psi_e(O_e))-\Var(\psi_e(O_e))|+|\widehat{\Var}(\psi_{h,1}(O_e))-\Var(\psi_{h,1}(O_e))|]\\
        +&\frac{2}{|D_h|}\Mean [|\widehat{\Var}(\psi_{h,2}(O_e))-\Var(\psi_{h,2}(O_e))|]+\frac{4}{|D_e|}\Mean |\widehat{\Cov}(\psi_e,\psi_{h,1})-\Cov(\psi_e,\psi_{h,1})|=O\Big(\frac{R_{\max}^2}{\epsilon^2 n_{\min}^{3/2} }\Big).
    \end{split}
    \end{eqnarray}
    Similarly, we can show that
    \begin{eqnarray}\label{thm:proofnonSAEeq2}
        \Mean [\widehat{\textrm{MSE}}(\widehat{\tau}_{w^*})-\widetilde{\textrm{MSE}}(\widehat{\tau}_{w^*})]=O\Big(\frac{R_{\max}^2}{\epsilon^2 n_{\min}^{3/2} }\Big).
    \end{eqnarray}
    Consequently, it remains to bound $\Mean [\text{MSE}(\widehat{\tau}_{\widehat{w}})-\widetilde{\text{MSE}}(\widehat{\tau}_{\widehat{w}})]$ and $\Mean [\widetilde{\text{MSE}}(\widehat{\tau}_{w^*})-\Mean \text{MSE}(\widehat{\tau}_{w^*})]$. By definition, we have
    \begin{eqnarray*}
        \Mean [\text{MSE}(\widehat{\tau}_{\widehat{w}})-\widetilde{\text{MSE}}(\widehat{\tau}_{\widehat{w}})] &=& \Mean (1-\widehat{w})^2 (b_h^2 - \widehat{b}_h^2),\\
        \Mean [\widetilde{\text{MSE}}(\widehat{\tau}_{w^*})-\text{MSE}(\widehat{\tau}_{w^*})] &=& \Mean (1-w^*)^2 (\widehat{b}_h^2 - b_h^2).
    \end{eqnarray*}
    The proof is hence completed. 
\end{proof}

\begin{proof}[Proof of Corollary \ref{cor:small_bh_nonpessimistic}.]
 We first prove the first part. According to \eqref{eqn:MSEdecomposition} and the definition of SEE in \eqref{eqn:sce}, one can derive that 
  \begin{eqnarray*}
  \Big|\textrm{MSE}(\widehat{\tau}_{\widehat{w}})-\textrm{MSE}(\widehat{\tau}_{w^*}) - \textrm{SEE}(\widehat{b}_h)\Big|
& \leq &
 \Big| 
 \Mean[ (1 - w^*)^2 - ( 1 - \widehat w)^2 ][(\widehat b_h^2 - b_h^2 ) - (\widehat b_h - b_h)^2 ] + O\Big(\frac{R_{\max}^2}{\epsilon^2 n_{\min}^{3/2} }\Big)
 \Big| \\
 & \leq &
 \Big| \Mean[ (1 - w^*)^2 - ( 1 - \widehat w)^2 ] [2 (\widehat b_h -b_h )b_h ] \Big| + O\Big(\frac{R_{\max}^2}{\epsilon^2 n_{\min}^{3/2} }\Big) \\
 & \ll &
 O\Big( {R_{\max}^2 \over \epsilon^2 n }  \Big) + O\Big(\frac{R_{\max}^2}{\epsilon^2 n_{\min}^{3/2} }\Big),
  \end{eqnarray*} 
where the last inequality follows from that  $\Mean|\widehat b_h -b_h |$ can be upper bounded by $O(n_{\min}^{-1/2} R_{\max} /\epsilon)$ and the condition $|b_h| \ll n_{\min}^{-1/2} R_{\max} / \epsilon$. Since $\textrm{MSE}(\widehat{\tau}_{w^*})$ is proportional to ${R_{\max}^2 /\epsilon^2 n_{\min} } $, it is easy to see that
\begin{eqnarray*}
    \Big|{ \textrm{MSE}(\widehat{\tau}_{\widehat{w}})-\textrm{MSE}(\widehat{\tau}_{w^*}) - \textrm{SEE}(\widehat{b}_h)   
    \over
    \textrm{MSE}(\widehat{\tau}_{w^*})
    }\Big| \rightarrow 0
\end{eqnarray*}
as $n_{\min} \rightarrow 0$.

For the second part, as \citet{li2023improving} selected the weight to attain the efficiency bound,  
it is sufficient to prove that the weight proposed in \citet{li2023improving}, under the constant potential outcome mean assumption and the proportionality assumption,
is equivalent to the oracle weight $w^*$.
To see this, the weight for each $S_e$ proposed by \citet{li2023improving} has the form 
\begin{eqnarray*}
    w_{EB} (S_e) &=& \frac{ |D_e| \mu(S_e) \pi^*(0|S_e)   }{ |D_e| \mu(S_e) \pi^*(0|S_e) + |D_h| \Var(R_e |A_e=0, S_e) / \Var(R_h | S_h)  } \\
    &=&
    \frac{ |D_e|    }{ |D_e|  + |D_h| \Var(R_e |A_e=0, S_e) / \Var(R_h | S_h)\mu(S_e) \pi^*(0|S_e)  }.
\end{eqnarray*}
Meanwhile, under the condition that $r_h(\bullet)=r_e(0,\bullet)$, $\widehat \tau_w$ can be rewritten as
\begin{eqnarray*}
  \widehat \tau_w 
  &=&
  \frac{w}{|\mathcal{D}_e|}\sum_{O_e\in \mathcal{D}_e}\psi_e(O_e)
  +
  \frac{1-w}{|\mathcal{D}_e|}\sum_{O_e\in \mathcal{D}_e}\psi_{h,1}(O_e)-\frac{1-w}{|\mathcal{D}_h|}\sum_{O_h\in \mathcal{D}_h}\psi_{h,2}(O_h) \\
  &=&
  \frac{1}{|\mathcal{D}_e|}\sum_{O_e\in \mathcal{D}_e} \Big\{r_e(1,S_e)+\nu^1(A_e|S_e) [R_e-r_e(A_e,S_e)] - r_e(0,S_e)  \Big\} \\
  && -
   \frac{w}{|\mathcal{D}_e|}\sum_{O_e\in \mathcal{D}_e} \nu^0(A_e|S_e) [R_e-r_e(A_e,S_e)]
   - \frac{1-w}{|\mathcal{D}_h|}\sum_{O_h\in \mathcal{D}_h}\psi_{h,2}(O_h).
\end{eqnarray*}
Then the optimal $w^*$ is minimizing the following variance
$$
w^2 |\mathcal{D}_e|^{-1} \Mean[ \Var(R_e |A_e=0, S_e) / \pi^*(0|S_e) ]   +
(1-w)^2 |\mathcal{D}_e|^{-1} \Mean[\Var(R_h | S_h)\mu(S_e)].
$$
It follows directly
$$
w^* = \frac{ |D_e|    }{ |D_e|  + |D_h| \Mean[ \Var(R_e |A_e=0, S_e) / \pi^*(0|S_e) ]/ \Mean[\Var(R_h | S_h)\mu(S_e)]  }.
$$
It is easy to see that $w_{EB} (S_e) = w^*$ under the proportionality assumption. 
\end{proof}

\begin{proof}[Proof of Corollary \ref{cor:moderate_bh_nonPessimistic}.]
For $|b_h| \gg n_{\min}^{-1/2} \sqrt{ \log n_{\min}  } R_{\max} / \varepsilon $, we can have
\begin{eqnarray*}
 w^*  = {  \Var(\widehat{\tau}_h) + b_h^2 -\Cov(\widehat{\tau}_e,\widehat{\tau}_h)  \over  \Var(\widehat{\tau}_e) + \Var(\widehat{\tau}_h) + b_h^2-2\Cov(\widehat{\tau}_e,\widehat{\tau}_h)  } \to 1
\end{eqnarray*}
as $n_{\min} \rightarrow \infty$.
This implies that $\text{MSE}(\widehat \tau_ {w^*}) / \text{MSE}(\widehat \tau_e) \rightarrow 1$ as $n_{\min} \rightarrow \infty$.
It suffices to show that $\widehat w \overset{p}{\to} 1$ as $n_{\min} \rightarrow \infty$. 

From Lemma \ref{lem:weight_error_bound}, we can see that if $|b_h| \gg n_{\min}^{-1/2} \sqrt{\log(n_{\min})}R_{\max} /\epsilon $, then 
\begin{eqnarray*}
  \Mean|\widehat w - w^*| \ll O( R_{\max} \epsilon / \sqrt{\log(n_{\min})} ).
\end{eqnarray*}
Therefore, it is easy to deduce that $\widehat w  \overset{p}{\to} w^*$ as $n_{\min} \to \infty$ by applying the Markov inequality.
This completes the proof.
\end{proof}

\begin{proof}[Proof of Theorem \ref{thm:pessi}.]
Let $\widebar{\textrm{MSE}}$ denote a version of $\widehat{\textrm{MSE}}$ by replacing the bias with its oracle value.
The difference between the MSE under $\widehat w_{\text{U}}$ and that under $w^*$ can be decomposed into
\begin{eqnarray}
\label{eq:MSE_decomposition}
     \text{MSE}(\widehat w_{\text{U}}) - \text{MSE}(w^*) 
   =
   \text{MSE}(\widehat w) - \widebar{\text{MSE}} (\widehat w)  + \widebar{\text{MSE}} (\widehat w) 
   - \widebar{\text{MSE}} ( w^* )  + \widebar{\text{MSE}} (w^*) -  \text{MSE}(w^*).   
   \end{eqnarray}
 Similar to the derivations  of \eqref{thm:proofnonSAEeq2}, we can obtain
\begin{eqnarray*}
        \Mean [\text{MSE}(\widehat w) - \widebar{\text{MSE}} (\widehat w)]=O\Big(\frac{R_{\max}^2}{\epsilon^2 n_{\min}^{3/2} }\Big),  \quad
         \Mean [\widebar{\text{MSE}} (\widehat w) 
   - \widebar{\text{MSE}} ( w^* ) ]=O\Big(\frac{R_{\max}^2}{\epsilon^2 n_{\min}^{3/2} }\Big).
\end{eqnarray*} 
Denote 
 $U(w) = (1-w)^2 [( |\widehat b_h | +  U)^2 -  b_h^2  ]$. We have that with probability $1-\alpha$,
\begin{eqnarray}
\label{eq:U_w_*}
\widebar{\text{MSE}} (\widehat w) 
   - \widebar{\text{MSE}} ( w^* ) 
    \leq 
    \widehat{\text{MSE}}_{\text{UB}}(\widehat w)   - \widebar{\text{MSE}}(w^*) 
     \leq 
    \widehat{\text{MSE}}_{\text{UB}}( w^* )   - \widebar{\text{MSE}}(w^*)\leq  U(w^*),
\end{eqnarray}
where the first inequality follows from the definition of $U$, and
the second inequality follows from the minimization of $\widehat{\text{MSE}}_{\text{UB}}$ at $\widehat w$. Hence, with probability $1 - \alpha$,
\begin{eqnarray*}
 \text{MSE}(\widehat w_{\text{U}}) - \text{MSE}(w^*)  
\leq 
\Mean( (1-w^*)^2 [( |\widehat b_h | +  U)^2 -  b_h^2  ] ) + O\Big(\frac{R_{\max}^2}{\epsilon^2 n_{\min}^{3/2} }\Big).
\end{eqnarray*}
According to \eqref{eqn:tauw} and Lemma \ref{lemmaMSEbound}
Meanwhile, the order of $  \text{MSE}(\widehat w_{\text{U}})  $ and $\text{MSE}(w^*)  $ can be given by $O([b_h^2+R_{\max}^2/\epsilon^2 n_{\min}])$.

Therefore, by decomposing  
\begin{eqnarray*}
   \text{MSE}(\widehat w_{\text{U}}) - \text{MSE}(w^*) 
    = 
  \text{MSE}(\widehat w_{\text{U}}) - \text{MSE}(w^*) \mathbb{I}( b_h^2 \leq ( |\widehat b_h|+  U )^2 )  +    \text{MSE}(\widehat w_{\text{U}}) - \text{MSE}(w^*) \mathbb{I}( b_h^2 > ( |\widehat b_h|+  U )^2 ) ,
\end{eqnarray*}
we can directly have
\begin{eqnarray*}
    \text{MSE}(\widehat w_{\text{U}}) - \text{MSE}(w^*) 
    \leq 
\Mean( (1-w^*)^2 [( |\widehat b_h | +  U)^2 -  b_h^2  ] ) + O\Big(\frac{R_{\max}^2}{\epsilon^2 n_{\min}^{3/2} }\Big) + O( \alpha[b_h^2+R_{\max}^2/\epsilon^2 n_{\min}] ).
\end{eqnarray*}
This completes the proof.
\end{proof}

\begin{proof}[Proof of Corollary \ref{coro:pessimistic}.]
According to the definition of $w^*$, we can have $1 - w^* = O( n^{-1} R^2_{\max} \epsilon^{-2} /(b_h^2+n^{-1} R^2_{\max} \epsilon^{-2} )$.
Notice that $\textrm{MSE}(\widehat{\tau}_{w^*}) = O( b_h^2+ n^{-1} R^2_{\max} \epsilon^{-2})$ and with probability $1-\alpha$, 
 $\Mean [(|\widehat{b}_h|+U)^2-b_h^2] = O(U^2 )$ according to \eqref{eqn:quantifier},
 if $b_h \gg n_{\min}^{-1/2}\log^{1/3} n_{\min} R_{\max}/\epsilon$,
 then $1-w^* =O( n^{-1} b_h^{-2})$, $\textrm{MSE}(\widehat{\tau}_{w^*}) = O(b_h^2)$ and 
\begin{eqnarray*}
  { (1-w^*)^2  \Mean[ ( |\widehat b_h | +  U)^2 -  b_h^2   ] \over \textrm{MSE}(\widehat{\tau}_{w^*}) }
    =  O\Big( {  U^2 \over n^2 b_h^{6} } \Big) \rightarrow 0
    \end{eqnarray*}
    as $n_{\min} \rightarrow \infty$ under the condition $U$ is proportional to the order $n_{\min}^{-1/2}\sqrt{\log n_{\min}} R_{\max}/\epsilon$ and $b_h\gg n_{\min}^{-1/2} (\log n_{\min})^{1/6} R_{\max}/\epsilon$.

    Meanwhile, $\alpha [b_h^2+R_{\max}^2/\epsilon^2 n_{\min}] / \textrm{MSE}(\widehat{\tau}_{w^*}) = o( n^{-1}_{\min} )$,
    $\alpha [b_h^2+R_{\max}^2/\epsilon^2 n_{\min}] /\textrm{MSE}(\widehat{\tau}_{w^*}) \rightarrow 0$ as $n_{\min} \rightarrow \infty$.
    It leads to the following
    \begin{eqnarray*}
      {\textrm{MSE}(\widehat{\tau}_{\widehat{w}_U})-\textrm{MSE}(\widehat{\tau}_{w^*}) \over \textrm{MSE}(\widehat{\tau}_{w^*}) } \rightarrow 0.
    \end{eqnarray*}
  This completes the proof.
\end{proof}

\section{Auxiliary Lemmas}
In this section, we provide the proofs of lemmas used in Section \ref{sec:proofthm}. 
\begin{proof}[Proof of Lemma \ref{lemmaMSEw}]
    The proof of Lemma \ref{lemmaMSEw} is straightforward. The key is to observe that both $\widehat{\tau}_e$ and $\widehat{\tau}_h$ are unbiased under Assumption \ref{con:double}, due to the doubly robust property. This allows us to decompose the MSE into the sum of the squared bias term $b_h^2$ and the variance term $w^2 \Var(\widehat{\tau}_e)+(1-w)^2 \Var(\widehat{\tau}_h)+2w(1-w)\Cov(\widehat{\tau}_e,\widehat{\tau}_h)$. Additionally, the variance terms $\Var(\widehat{\tau}_h)$ and $\Cov(\widehat{\tau}_e,\widehat{\tau}_h)$ can be further simplified due to the independence between the experimental and historical datasets. This leads to the desired conclusion. 
\end{proof}

\begin{proof}[Proof of Lemma \ref{lemmaMSEbound}]
    We focus on bounding the expected absolute difference between $\Var(\psi_e(O_e;\pi,r_e))$ and its sampling variance estimator $\widehat{\Var}(\psi_e(O_e;\pi,r_e))$ in this section. Other bounds can be similarly established. 
    
    Without loss of generality, we assume the sampling variance formula scales with $1/n$ instead of $1/(n-1)$. The error bounds are asymptotically the same when the formula scales with $1/(n-1)$. By definition, we can upper bound $\Mean |\widehat{\Var}(\psi_e(O_e;\pi,r_e))-\Var(\psi_e(O_e;\pi,r_e))|$ into the sum of $\Mean |\widehat{\tau}^2_e-\tau^2|$ and 
    \begin{eqnarray}\label{eqnproofbound1}
        \Mean \Big|\frac{2}{|D_e|}\sum_{O_e\in \mathcal{D}_{e,2}}\psi_e^2(O_e;\pi,Q)-\Mean \psi_e^2(O_e;\pi,Q)\Big|.
    \end{eqnarray}
    With some calculations, we have
    \begin{eqnarray}
    \label{eq:tau_e_square_error_bound}
        \Mean |\widehat{\tau}^2_e-\tau^2|=\Mean |\widehat{\tau}_e-\tau| |\widehat{\tau}_e+\tau|\le \Mean |\widehat{\tau}_e-\tau|^2+2\tau \Mean |\widehat{\tau}_e-\tau|\le \Mean |\widehat{\tau}_e-\tau|^2+2\tau \sqrt{\Mean |\widehat{\tau}_e-\tau|^2}. 
    \end{eqnarray}
    According to Lemma \ref{lemmaMSEw}, the MSE of $\widehat{\tau}_e$ equals $2\Var(\psi_e(O_e;\pi,r_e))/|D_e|$ which can be upper bounded by $O(R_{\max}^2/(|D_e|\epsilon^2))$ under Assumptions \ref{con:coverage} and \ref{con:bound}. Additionally, notice that $\tau$ can be upper bounded by $R_{\max}$. This allows us to bound $\Mean |\widehat{\tau}^2_e-\tau^2|$ by $O(R_{\max}^2/(|D_e|\epsilon^2))+O(R_{\max}/\sqrt{|D_e|}\epsilon)$.

    We next upper bound \eqref{eqnproofbound1}. An application of Cauchy-Schwarz inequality yields that \eqref{eqnproofbound1} can be upper bounded by $\sqrt{\Var(2\sum_{O_e\in \mathcal{D}_{e,2}}\psi_e^2(O_e;\pi,Q)/|D_e|)}=2\sqrt{\Var(\psi_e^2(O_e;\pi,Q))/|D_e|}=O(R_{\max}^2/\sqrt{|D_e|}\epsilon^2)$. Since $\epsilon\le 1$, the difference $\Mean |\widehat{\Var}(\psi_e(O_e;\pi,r_e))-\Var(\psi_e(O_e;\pi,r_e))|$ can be upper bound by $O(R_{\max}^2/\sqrt{|D_e|}\epsilon^2)$. 
\end{proof}

\begin{proof}[Proof of Lemma \ref{lem:weight_error_bound}.]
Denote $\widehat F_1= \widehat{b}_h^2+\widehat{\Var}(\widehat{\tau}_h)-\widehat{\Cov}(\widehat{\tau}_e,\widehat{\tau}_h)$,
$\widehat F_2 = \widehat{\Var}(\widehat{\tau}_e)+\widehat{b}_h^2+\widehat{\Var}(\widehat{\tau}_h)-2\widehat{\Cov}(\widehat{\tau}_e,\widehat{\tau}_h) $,
and $F_1= {b}_h^2+{\Var}(\widehat {\tau}_h)-{\Cov}(\widehat{\tau}_e,\widehat{\tau}_h)$,
$ F_2 ={\Var}(\widehat{\tau}_e)+{b}_h^2+{\Var}(\widehat{\tau}_h)-2{\Cov}(\widehat{\tau}_e,\widehat{\tau}_h)$.
According to the formulations of $\widehat w$ and $w^*$, and $F_1 < F_2$,  it is easy to derive that,
\begin{eqnarray*}
| \widehat w  - w^* |
=\left|{\widehat F_1 \over \widehat F_2} - { F_1 \over  F_2}  \right|
=\left| 
{ (F_1 - \widehat  F_1) F_2 + F_1 (  F_2 - \widehat F_2) \over \widehat F_2 F_2  }
\right|
\leq 
{ \left|  (\widehat F_1 - F_1)  \right| +  \left|  (  F_2 - \widehat F_2) \right| \over F_2   }
\left| 
{ F_2 \over \widehat F_2  }
\right|.
\end{eqnarray*}
It is easy to derive that
\begin{eqnarray*}
  | \widehat w  - w^* | &\leq &
 { |\widehat{b}_h^2 - b_h^2| + 2 |\widehat {\Cov}(\widehat{\tau}_e,\widehat{\tau}_h) - {\Cov}(\widehat{\tau}_e,\widehat{\tau}_h)| + |\widehat{\Var}(\widehat{\tau}_e) - {\Var}(\widehat{\tau}_e)| +  | \widehat{\Var}(\widehat{\tau}_h) - {\Var}(\widehat{\tau}_h)|
   \over {\Var}(\widehat{\tau}_e)+{b}_h^2+{\Var}(\widehat{\tau}_h)-2{\Cov}(\widehat{\tau}_e,\widehat{\tau}_h) }. 
\end{eqnarray*}
It follows from Lemma \ref{lemmaMSEbound} and the derivation of \eqref{eq:tau_e_square_error_bound} that
\begin{eqnarray*}
  \Mean |\widehat{b}_h^2 - b_h^2| \leq O\Big(\frac{R_{\max}^2}{\epsilon^2 |D_e| }\Big) + 
  O\Big(\frac{b_h R_{\max}}{\epsilon \sqrt{|D_e|} }\Big)
  \quad
  \Mean |\widehat{\Var}(\widehat{\tau}_e) - {\Var}(\widehat{\tau}_e)| \leq O\Big(\frac{R_{\max}^2}{\epsilon^2 |D_e|^{3/2} }\Big), \\
  \Mean |\widehat{\Var}(\widehat{\tau}_h) - {\Var}(\widehat{\tau}_h)| \leq O\Big(\frac{R_{\max}^2}{\epsilon^2 |D_h|^{3/2} }\Big), \quad
  \Mean |\widehat {\Cov}(\widehat{\tau}_e,\widehat{\tau}_h) - {\Cov}(\widehat{\tau}_e,\widehat{\tau}_h)| 
  \leq 
  O\Big(\frac{R_{\max}^2}{\epsilon^2 |D_e|^{3/2} }\Big).
\end{eqnarray*}
Then it is easy to derive that 
\begin{eqnarray*}
\Mean | \widehat w  - w^* | &\leq & 
O \left( 
\frac{ R^2_{\max} n_{\min}^{-1/2} \epsilon^{-2} + R^2_{\max} \epsilon^{-2} 
+ b_h R_{\max} n_{\min}^{1/2} \epsilon^{-1} }{ R^2_{\max}  \epsilon^{-2} + n_{\min} b_h^2 }
\right)
\end{eqnarray*}
This completes the proof.
\end{proof}

\end{document}